\title[Average-case Complexity of Teaching Convex Polytopes via Halfspace Queries]{Average-case Complexity of Teaching Convex Polytopes\\via Halfspace Queries}
\newtheorem{assumption}{Assumption}
\newcommand{\argmin}{\operatornamewithlimits{arg\,min}}
\newcommand{\Rmnum}[1]{\expandafter\@slowromancap\romannumeral #1@}
\newcommand{\Rd}{\mathbb{R}^d}
\newcommand{\defref}[1]{Definition~\ref{#1}}
\newcommand{\tabref}[1]{Table~\ref{#1}}
\newcommand{\figref}[1]{Fig.~\ref{#1}}
\newcommand{\eqnref}[1]{\text{Eq.}~(\ref{#1})}
\newcommand{\secref}[1]{\S\ref{#1}}
\newcommand{\appref}[1]{Appendix \ref{#1}}
\newcommand{\thmref}[1]{Theorem~\ref{#1}}
\newcommand{\corref}[1]{Corollary~\ref{#1}}
\newcommand{\propref}[1]{Proposition~\ref{#1}}
\newcommand{\lemref}[1]{Lemma~\ref{#1}}
\newcommand{\assref}[1]{Assumption~\ref{#1}}
\newcommand{\algoref}[1]{Algorithm~\ref{#1}}
\newcommand{\paren} [1] {\ensuremath{ \left( {#1} \right) }}
\newcommand{\parenb} [1] {\ensuremath{ \big( {#1} \big) }}
\newcommand{\bigparen} [1] {\ensuremath{ \Big( {#1} \Big) }}
\newcommand{\biggparen} [1] {\ensuremath{ \bigg( {#1} \bigg) }}
\newcommand{\Biggparen} [1] {\ensuremath{ \Bigg( {#1} \Bigg) }}
\newcommand{\bracket}[1]{\left[#1\right]}
\newcommand{\tuple}[1]{\ensuremath{\left\langle #1 \right\rangle}}
\newcommand{\curlybracket}[1]{\ensuremath{\left\{#1\right\}}}
\newcommand{\condcurlybracket}[2]{\ensuremath{\left\{#1\left\lvert\:#2\right.\right\}}}
\newcommand{\expctover}[2]{\mathbb{E}_{#1}\!\left[#2\right]}
\def \argmin {\mathop{\rm arg\,min}}
\newcommand{\bigO}[1]{\ensuremath{\mathcal{O}\paren{#1}}}
\newcommand{\bigTheta}[1]{\ensuremath{\Theta\paren{#1}}}
\newcommand{\bigOmega}[1]{\ensuremath{\Omega\paren{#1}}}
\newcommand{\reals}{\ensuremath{\mathbb{R}}}
\newcommand{\cA}{{\mathcal{A}}}
\newcommand{\cU}{{\mathcal{U}}}
\newcommand{\cC}{{\mathcal{C}}}
\newcommand{\cO}{{\mathcal{O}}}
\newcommand{\ff}{{\boldsymbol{\mathfrak{F}}}}
\newcommand{\fr}{{\boldsymbol{\mathfrak{r}}}}
\newcommand{\Hnd}{\boldsymbol{\mathcal{H}}}
\newcommand{\Hndp}{\boldsymbol{\mathcal{H}}_{(n),d}}
\newcommand{\Hndo}{\boldsymbol{\mathcal{H}}_{n-1,d-1}}
\newcommand{\xnd}{\boldsymbol{\mathcal{X}}}
\newcommand{\xndp}{\boldsymbol{\mathcal{X}}^{+}}
\newcommand{\xndn}{\boldsymbol{\mathcal{X}}^{-}}
\newcommand{\Hndd}{\boldsymbol{\hat{\mathcal{H}}}_{n,d'}}
\newcommand{\rnd}{\boldsymbol{\fr}\parenb{\cA(\Hnd)}}
\newcommand{\regionset}{\boldsymbol{\mathfrak{R}}\parenb{\mathcal{A}(\Hnd)}}
\newcommand{\regionpair}{\boldsymbol{\mathfrak{R}}\parenb{\mathcal{A}(\boldsymbol{\mathcal{H}}_{(n),d})}}
\renewcommand{\tt}[1]{\textit{#1}}
\newcommand{\rank}[1]{\textbf{rank}\parenb{#1}}
\newcommand{\boldlam}{\mathbf{\boldsymbol{\Lambda}}}
\def\mathbi#1{\textbf{\em #1}}
\def\BState{\State\hskip-\ALG@thistlm}
\newcommand{\data}{\textit{data space}}
\newcommand{\hypo}{\textit{hypothesis space}}
\newcommand{\gen}{{$d'$-\textit{relaxed general position}}}
\newcommand{\h}{z}
\newcommand{\dt}{x}
\newcommand{\rgn}{r}
\begin{document}

\maketitle

\begin{abstract}%
  We examine the task of locating a target region among those induced by intersections of $n$ halfspaces in $\mathbb{R}^d$. This generic task connects to  fundamental machine learning problems, such as training a perceptron and learning a $\phi$-separable dichotomy. We investigate the \emph{average teaching} complexity of the task, i.e., the minimal number of samples (halfspace queries) required by a \emph{teacher} to help a version-space learner in locating a \emph{randomly} selected target. As our main result, we show that the average-case teaching complexity is  $\Theta(d)$, which is in sharp contrast to the worst-case teaching complexity of $\Theta(n)$. If instead, we consider the average-case learning complexity, the bounds have a dependency on $n$ as $\Theta(n)$ for \tt{i.i.d.} queries and $\Theta(d \log(n))$ for actively chosen queries by the learner. Our proof techniques are based on novel insights from computational geometry, which allow us to count the number of convex polytopes and faces in a Euclidean space depending on the arrangement of halfspaces. Our insights allow us to establish a tight bound on the average-case complexity for $\phi$-separable dichotomies, which generalizes the known $\mathcal{O}(d)$ 
bound on the average number of ``extreme patterns'' in the classical computational geometry literature \citep{cover1965geometrical}.
\end{abstract}
\smallskip
\begin{keywords}%
  Teaching dimension, homogeneous halfspaces, average-case complexity%
\end{keywords}

\section{Introduction}
We consider the problem of locating a target region among those induced by intersections of $n$ halfspaces in $d$-dimension (\figref{fig:example.polytope}). 
In the basic setting, the learner receives a sequence of instructions, which we refer to as \emph{halfspace queries} \citep[same as membership queries in][]{ANGLUIN198787,ANGLUINb}, each specifying a halfspace the target region is in. Based on the evidence it receives, the learner then determines the location of the target region. 
This generic task connects to several fundamental problems in machine learning. 
Consider learning a linear prediction function in $\reals^d$ (aka perceptron, see \figref{fig:example.perceptron}) over $n$ linearly separable data points. Here, every data point specifies a halfspace, and the target hypothesis corresponds to a region in the hypothesis space. The learning task reduces to identifying the convex polytope induced by the $n$ halfspace constraints in the hypothesis spaces \citet{bishop2006pattern}.
Similarly, when the set of data points are not linearly separable, but are separable by a $\phi$-surface (aka $\phi$-separable dichotomy, see \figref{fig:example.phi}), the problem of finding the $\phi$-separable dichotomy could be viewed as training a perceptron in the $\phi$-induced space \citep{cover1965geometrical}.

While these fundamental problems have been extensively studied in the \emph{passive learning} setting \citep{vapnik1971uniform,natarajan1987learning,blumer1989learnability,goldman1993learning}, the underlying \emph{i.i.d.} sampling strategy often requires more data than necessary to learn the target concept (when one is able to control the sampling strategy).
Moreover, the majority of existing work focuses on the worst-case complexity measures, which are often too pessimistic and do not reflect the learning complexity in the real-world scenarios \citep{haussler1994bounds,wan2010learning,nachum2019average}. As shown in \tabref{tab:sample-complexity}, the label complexity of passive learning for the above generic task is $\bigTheta{n}$.
Recently, there has been increasing interest in understanding the complexity of \emph{interactive learning}, which aims to learn under more optimistic, realistic scenarios, in which ``representative'' examples are selected, and the number of examples needed for successful learning may shrink significantly. For example, under the \emph{active learning} setting, the learner only query data points that are helpful for the learning task, which could lead to exponential savings in the sample complexity as compared with the passive learning setting \citep{guillory2009average,jamieson2011active,hanneke2015minimax,kane2017active}. \\
\vspace{-2mm}
\begin{table}[h!]
  \centering
  \vspace{-1mm}
  \begin{tabular}{llll}
    \toprule
    Type     & Average-case     & Worst-case   & Condition on hyperplane arrangement\\
    \midrule
    Passive learning & $\stackrel{{\bf}}{}$\: $\Theta(n)$ & $\stackrel{{\bf}}{}$\: $\Theta(n)$   & - \\
    Active learning
    & $\stackrel{{\bf}}{}$\:  $\bigTheta{d' \log n}$ 
    & $\stackrel{{\bf}}{}$\: $\Theta(n)$
    & $d'$-relaxed general position 
    \\
    Teaching 
    & $\stackrel{{\bf}}{}$\:  $\bigTheta{d'}$ 
    & $\stackrel{{\bf}}{}$\:  $\Theta(n)$ 
    & $d'$-relaxed general position \\
    \bottomrule
  \end{tabular}
    \caption{Sample complexity for various types of data selection algorithms for learning intersection of halfspaces halfspaces. We assume $d'\leq d$ for the $d'$-relaxed general position arrangement.}
  \label{tab:sample-complexity}
\end{table}

\vspace{-2.7mm}An alternative interactive learning scenario is the setting where the learning happens in the presence of a helpful teacher, which identifies useful examples for the learning task. This setting is known as \emph{machine teaching} \citep{DBLP:journals/corr/ZhuSingla18}. Importantly, the label complexity of teaching provides a lower bound on the number of samples needed by active learning \citep{zilles2011models}, and therefore can provide useful insights for designing interactive learning algorithms \citep{brown2019machine}. Machine teaching has been extensively studied in terms of the worst-case label complexity \citep{goldman1995complexity,article:anthony95,zilles2008teaching,doliwa2014recursive,chen2018understanding,mansouri2019preference}. However, to the best of our knowledge, the \emph{average complexity} of machine teaching, even for the fundamental tasks described above, remains significantly underexplored.

In this paper, we investigate the \emph{average teaching} complexity, i.e., the minimal number of examples 
required by a \emph{teacher} to help a learner in locating a \emph{randomly} selected target. We highlight our key results below.
\begin{itemize}
\looseness-1
\vspace{-.5mm}
    \item We show that under the common assumption that the $n$ hyperplanes are in general position in $\reals^d$, the average-case complexity for teaching such a target is $\bigTheta{d}$. This is in sharp contrast to the worst-case teaching complexity of $\bigTheta{n}$ (cf \secref{sec:avgtd}).
\vspace{-.5mm}
    \item We provide a natural extension of the general-position hyperplane arrangement condition, and show that if the $n$ hyperplanes in $\reals^d$ are in ``\emph{$d'$-relaxed general position arrangement}'' where $d'\leq d$, then one can further obtain improved complexity results of $\bigTheta{d'}$ for average-case teaching. Our proof techniques are based on novel insights from computational geometry, which allow us to count the number of convex polytopes and faces in a Euclidean space depending on the hyperplane arrangement. Our result improves upon the existing 
     $\bigO{d}$ result for arbitrary hyperplane arrangement \citep{Fukuda1991BoundingTN}  (cf \secref{sec:avgtd}). 
\vspace{-.5mm}    
    \item To draw a connection with the learning complexity, we show that without the presence of a teacher, a learning algorithm requires $\Theta(n)$ for \emph{i.i.d.} queries and $\Theta(d \log(n))$ for actively chosen queries.
    Table~\ref{tab:sample-complexity} summarizes our main complexity results (cf \secref{sec.learning}).
\vspace{-.5mm}
    \item Based on our proof framework in \secref{sec:avgtd}, 
    we  provide complexity results for teaching $\phi$-separable dichotomies, which recovers and extends the known $\mathcal{O}(d)$ bound on the average number of ``extreme patterns'' in the classical computational geometry literature \citep{cover1965geometrical} (cf \secref{sec.applied}). 
\looseness-2
\end{itemize}

\begin{figure}
\centering
	\subfigure[The generic task]{
		\includegraphics[width=.25\textwidth]{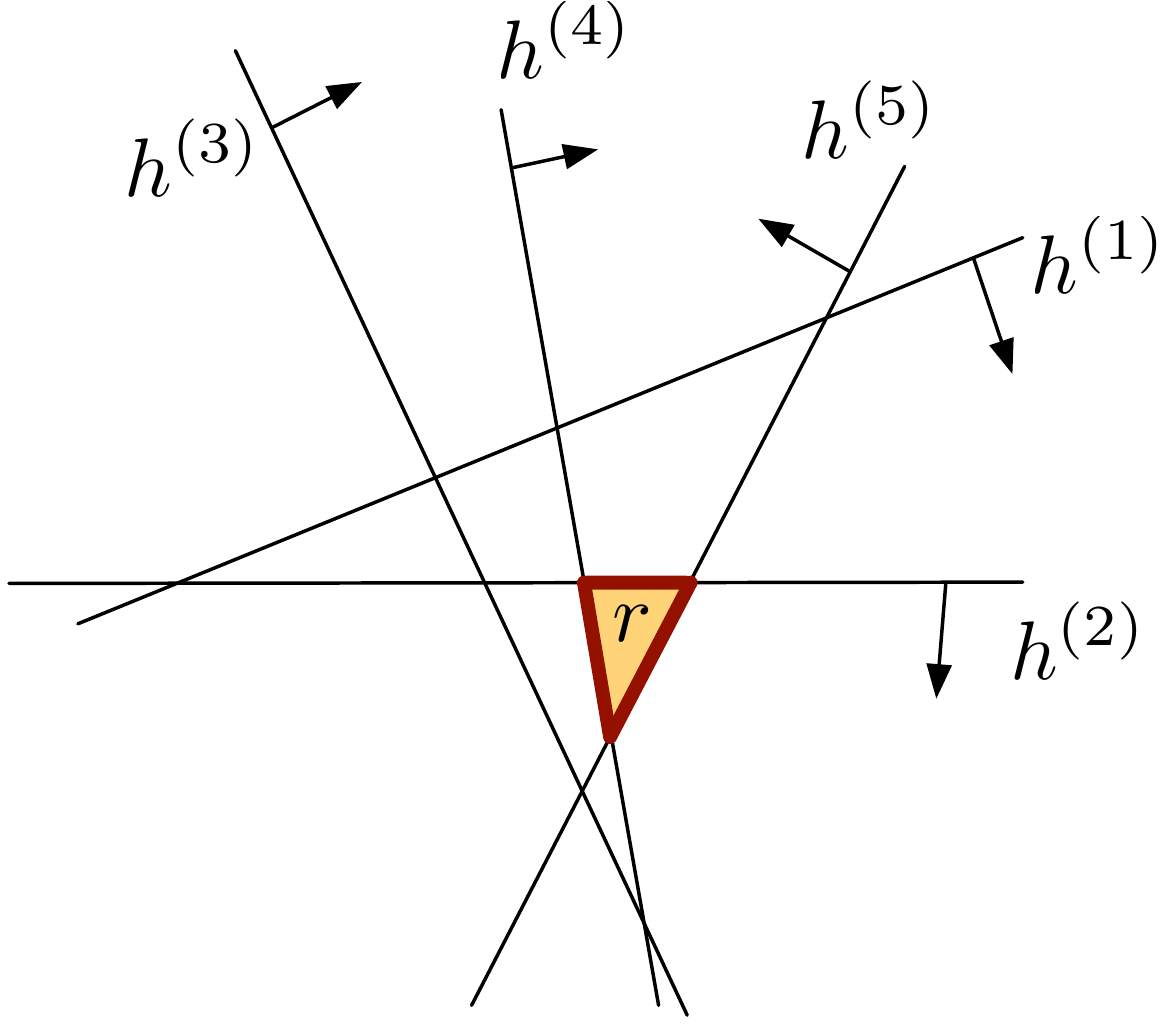}
		\label{fig:example.polytope}
		}\qquad
	\subfigure[Perceptron]{
		\includegraphics[width=.25\linewidth]{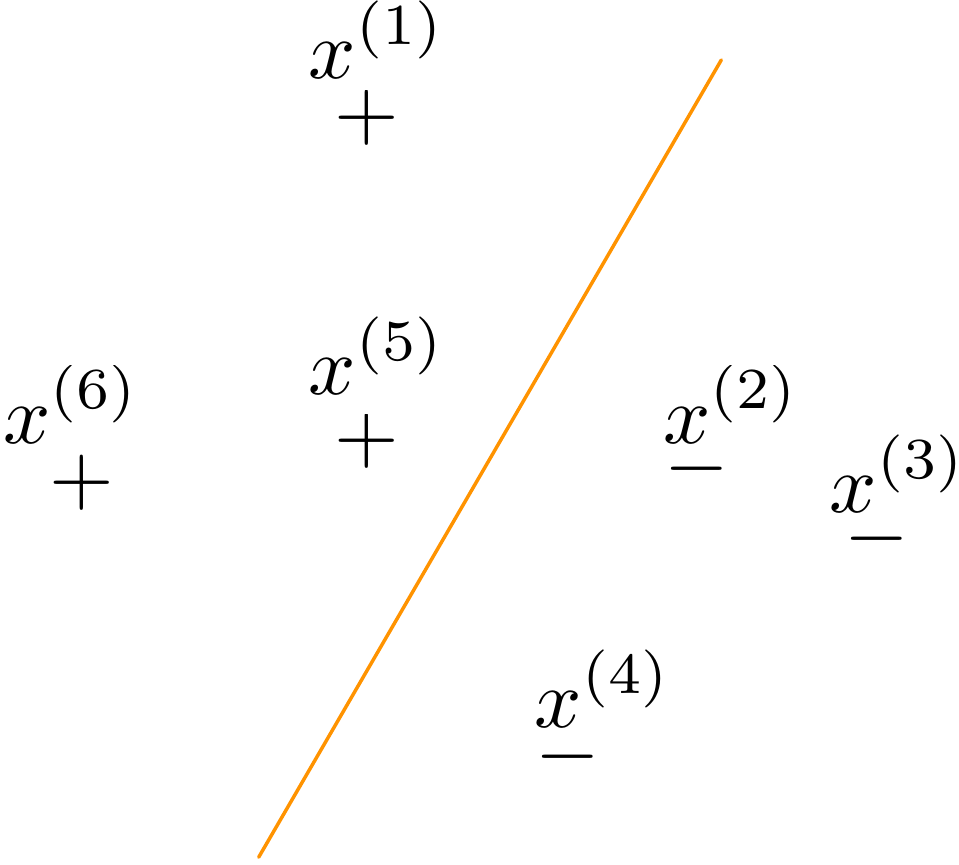}
		\label{fig:example.perceptron}
		}\qquad
	\subfigure[$\phi$-separable dichotomy]{
		\includegraphics[width=.25\linewidth]{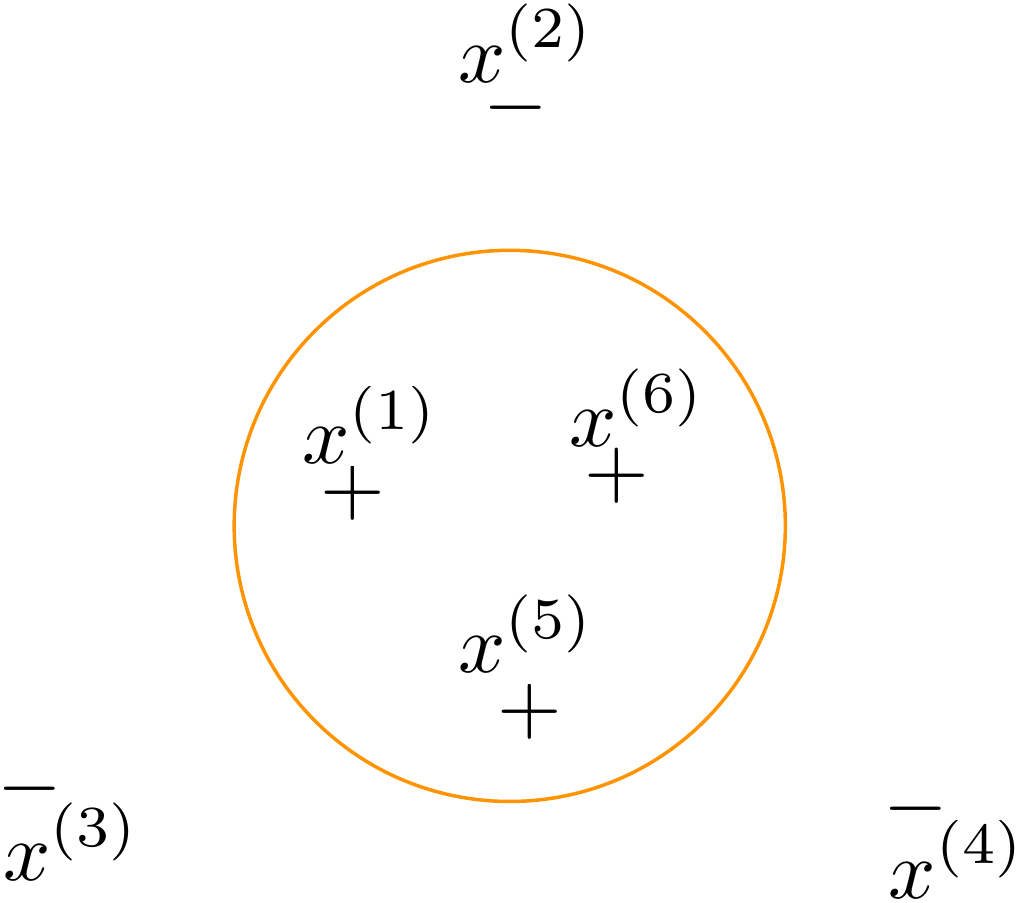}
		\label{fig:example.phi}
		}
  \caption{Different tasks as teaching convex polytopes via halfspace queries.}
	\label{fig:model}
\vspace{-2mm}
\end{figure}


\vspace{-1mm}
\section{Related Work}
\paragraph{Average-case complexity of learning}
While the majority of complexity measures for concept classes and data selection algorithms focus on the worst-case scenarios, there have been a few work concerning the average-case complexity for various types of learning algorithms. Here we provide a survey on related work concerning average-case complexity under the learning setting. \cite{haussler1994bounds} studied how the sample complexity depends on properties of a prior distribution on the concept class and over the sequence of examples the algorithm receives. Specifically, they studied the probability of an incorrect prediction for an optimal learning algorithm using the Shannon information gain. \cite{wan2010learning} considered the problem of learning DNF-formulas sampled from the uniform distribution. 
\cite{nachum2019average} considered the average information complexity of learning (defined as the average mutual information between the input and the output of the
learning algorithm). They show that for a concept class of VC dimension $d$, there exists a proper learning algorithm that reveals $O(d)$ bits of information for most concepts. 
Intuitively, this result aligns with our observation that average complexities of various data selection algorithms are significantly lower than that in the worst-case scenario.
\cite{smoothea,smootheb} introduce the paradigm of smoothed analysis which differs from our average-case analysis as we don't allow perturbations to input spaces. Perhaps most similar to our approach, in terms of technical insights, is the work of \cite{jamieson2011active}, who studied the problem of active ranking via pairwise comparisons, and have used the geometrical properties of hyperplanes in $\reals^d$ to achieve an average complexity of $\bigTheta{d\log n}$ for active ranking over $n$ points.
In our work, we extend their results to the general problem of active learning of halfspaces, and also consider the teaching variant of the ranking via pairwise comparison problem.

\paragraph{Connection with the PAC learning framework} Intersection of halfspaces have been studied in PAC learning framework~\citep{pacpitt,BLUM1997371,KLIVANS2004808,klivanscrypto,vempala,KHOT2011129,learnconvexpolytope}. Although we focus on exact teaching of intersections of halfspaces induced by $n$ hyperplanes,
our results could be readily extended to analyze the average sample complexity for teaching a PAC learner 
under the realizable case. It is well known that a single halfspace can be PAC-learnt efficiently by sampling a polynomial number of data points and finding a separating hyperplane via linear programming \citep{blumer1989learnability}. Relating this to the worst-case sample complexity results in \tableref{tab:sample-complexity}, we know that the worst-case sample complexity for teaching a halfspace to a PAC learner is also polynomial in the VC dimension, i.e., $n=\mathcal{{O}}(\text{poly}(d))$ for halfspaces. One can then extend the average-case complexity results in \tableref{tab:sample-complexity}, based on an argument similar with \emph{pool-based active learning} \citep{mccallumzy1998employing}. The idea is for the teacher to draw $n$ unlabeled examples \emph{i.i.d.} from the underlying data distribution in $\reals^d$. Instead of providing all labels, the teacher provides labels to an optimal teaching set such that all unlabeled examples are implied by the given labels. Thus the learner has obtained $n$ labeled examples drawn \emph{i.i.d.}, and classical PAC bounds still apply.


\looseness-1
\paragraph{Relevant work in algorithmic machine teaching}
As discussed above, teaching problem of various concept classes has been explored before. The  classic definition of average teaching dimension~\citep{goldman1995complexity} which is same as our definition in the uniform setting has been studied in various settings:  \citet{article:anthony95} showed the bound of $\bigO{n^2}$
for the class of linearly separable Boolean functions; \citet{Kushilevitz1996WitnessSF} showed an improved upper bound of $\bigO{|\cC|^{\frac{1}{2}}}$ for
any concept class $\cC$; \citet{kuhlman}  proved that all classes of VC dimension 1 have an average teaching dimension of less than 2; \citet{DNFteach} have shown an $\bigO{ns}$ bound on the class of DNFs with at most $s \le 2^{\bigTheta{n}}$ terms.
In contrast, our work bypasses any dependence on the size of the concept class, and achieves an average teaching complexity of $\bigTheta{d'}$ (where $d' \le d$). 
Some more powerful notions of teaching dimension in sequential setting: recursive and preference-based, have been studied in \citet{doliwa2014recursive,recursiveteach}, which differ from our batched setting. There is increasing interest in connecting the VC dimension to the teaching problem of concept classes \citep[stated in][]{Simon2015OpenPR,Hu2017QuadraticUB}, we notice the VC dimension of $n$ hyperplanes in general position is $\min\{n,d\}$~\citep{edelsbrunner} which is closely related to our average-case $\bigTheta{d'}$ result but away from the worst-case $\bigTheta{n}$ result.


\section{Teaching Convex Polytopes via Halfspace Queries: A General Model}\label{sec:formulation}

\paragraph{Convex polytopes induced by hyperplanes}
Let $h = \condcurlybracket{\h{}}{\eta \cdot \h{}= b,\: \h{}\in \mathbb{R}^d}$ be a hyperplane in $\Rd$, where $\eta\in\reals^d$ 
and $b\in\reals$. We say a point $\h{}\in \Rd$ \tt{satisfies} or \tt{lies} in $h$ if $\h{}\in h$. We define a \tt{halfspace} induced by a hyperplane $h$ to be one of the two connected components of $\big(\Rd - h\big)$ \tt{i.e.} sets corresponding to $\mathrm{sgn}\parenb{\eta \cdot \h{}- b}$. We define  $\Hnd \triangleq \curlybracket{h^{(1)}, h^{(2)}, \dots, h^{(n)}}$ as a set of $n$ hyperplanes in $\mathbb{R}^d$.
The arrangement of the hyperplanes in $\Rd$, denoted as $\cA\parenb{\Hnd}$, induces \tt{intersections of halfspaces} which create connected components. 
Any connected component of $\Rd-\cup_{h \in \Hnd} h$ is defined as a \tt{region} or \tt{convex polytope} in $\Rd$. 
Equivalently, any region $r$ can be exactly specified by the intersections of halfspaces induced by hyperplanes in $\Hnd$. We call the smallest subset $B_r \subseteq \Hnd$ that exactly specifies $r$ the \tt{bounding set of hyperplanes} for $r$. We define connected components induced on hyperplanes (e.g. $h^{(i)}-\cup_{h \in \Hnd\setminus h^{(i)}} h$ for any $h^{(i)} \in \Hnd$) by $\cA\parenb{\Hnd}$ as \tt{faces}. Thus, bounding set $B_r$ forms the faces to the polytope $r$. 
\begin{example}[Convex polytopes induced by hyperplanes]
\figref{fig:example.polytope} provides an example of the arrangement of 5 hyperplanes in $\reals^2$, where arrows on the hyperplanes specify halfspaces. The bounding set for the highlighted region $r$, namely $\{h^{(2)}, h^{(4)}, h^{(5)}\}$, forms 3 faces to $r$.
\end{example}
\indent
We use $\regionset$ to denote the 
regions \tt{induced} by the arrangement $\cA\parenb{\Hnd}$ 
and the number of regions 
$\rnd$ $\triangleq$  $|\regionset|$. We define a \tt{labeling function} $\ell_r:\Hnd \rightarrow \{-1,+1\}$ for an arbitrary region $r \in \regionset$. Note that 
$r$ uniquely identifies its labeling function $\ell_r$.

\paragraph{The teaching framework
}\label{section: teaching framework}
We study the problem of teaching \tt{target regions} (convex polytopes) induced by hyperplane arrangment $\cA\parenb{\Hnd}$ in $\Rd$. Our teaching model is formally stated below.
Consider the set of instances $\Hnd$, with label set $ \mathcal{Y} = \{1,-1\}$ corresponding to two halfspaces induced by a hyperplane. Our hypothesis class, denoted as $\regionset$, is the set of regions induced by $\cA\parenb{\Hnd}$. Consider a target region $r^* \in \regionset$. 
Let $\mathcal{Q} \subseteq \Hnd \times \{1,-1\}$ be the ground set of examples (i.e. labeled instances). 
We define a labeled subset $Q \subseteq \mathcal{Q}$ as \tt{halfspace queries}.
We assume that for any halfspace queries $Q$ $\mathrm{wrt}$ $r^*$, the labels are consistent, \tt{i.e.}, $\forall (h, l) \in {Q}$, $ \ell_{r^*}(h) = l$.
The \tt{version space} induced by ${Q}$ is the subset of regions $\mathbf{\mathrm{VS}}({Q}) \subseteq \regionset$ that are consistent
with the labels of all the halfspace queries \tt{i.e.},
\begin{align}
\mathrm{VS}({Q}) = \condcurlybracket{r \in \regionset}{\forall (h,l) \in {Q},  \ell_r(h) = l}, \nonumber
\end{align}
\tt{or} equivalently, set of convex polytopes which satisfy the halfspace queries ${Q}$. We define our version space learner as one which upon seeing a set of halfspace queries, maintains a version space containing all the regions that are consistent with all the observed queries. Corresponding to a version space learner and a target region $r^*$, we define a \tt{teaching set} $\mathcal{TS}(\Hnd,r^*)$ as a minimal set of halfspace queries such that the resulting version space exactly contains $\{r^*\}$. Formally,
\begin{align}
    \mathcal{TS}(\Hnd,r^*) \in \argmin_{Q \subseteq \mathcal{Q}} |Q|,  \text{~s.t.~}  \mathrm{VS}(Q) = \{r^*\}.\nonumber 
\end{align}
Consequently, 
we want to teach a target hypothesis (regions), say $r^*$  via
specifying halfspace queries in the teaching set $\mathcal{TS}(\Hnd,r^*)$ to a learner. Given a target region $r^*$, the teaching \tt{complexity} \citep{goldman1995complexity} is defined as the sample size of the teaching set  \tt{i.e.} $|\mathcal{TS}(\Hnd,r^*)|$.\vspace{6pt}\\
\noindent
In section \secref{sec:avgtd}, we analyze the teaching complexity of convex polytopes both in the framework of \tt{average-case} and \tt{worst-case}.
We define \emph{average teaching complexity} of convex polytopes via halfspace queries as the expected size of the teaching set \tt{i.e.} $\expctover{r\sim\cU}{|\mathcal{TS}(\Hnd,r)|}$, when the target region $r$ is sampled uniformly at random. 
We define \tt{worst-case teaching complexity} as the worst-case sample size of a teaching set corresponding to target regions from the set of hypotheses.


\paragraph{Hyperplanes in general position}
We adopt a common assumption in computational geometry \cite{feldman2013neural,miller2007geometric}
that the 
hyperplane arrangement is in \tt{general position}, and further provide a relaxed notion of general position hyperplane arrangement, as defined below.
\begin{definition}[General position of hyperplanes \cite{miller2007geometric}]\label{defn general position}
For a set of $n$ hyperplanes $\Hnd$ in $\mathbb{R}^d$, the arrangement $\mathcal{A}(\Hnd)$ is in general position if any subset $\mathcal{S}$ $\subseteq$ $\Hnd$ of $k$ hyperplanes where $1 \le k \le d$, intersects in a $(d-k)$-dimensional plane, otherwise has null intersection.
\end{definition}
\begin{wrapfigure}{h!}{.48\textwidth}
\vspace{-4mm}
\centering
    \subfigure[general]{
		\includegraphics[width=.25\linewidth]{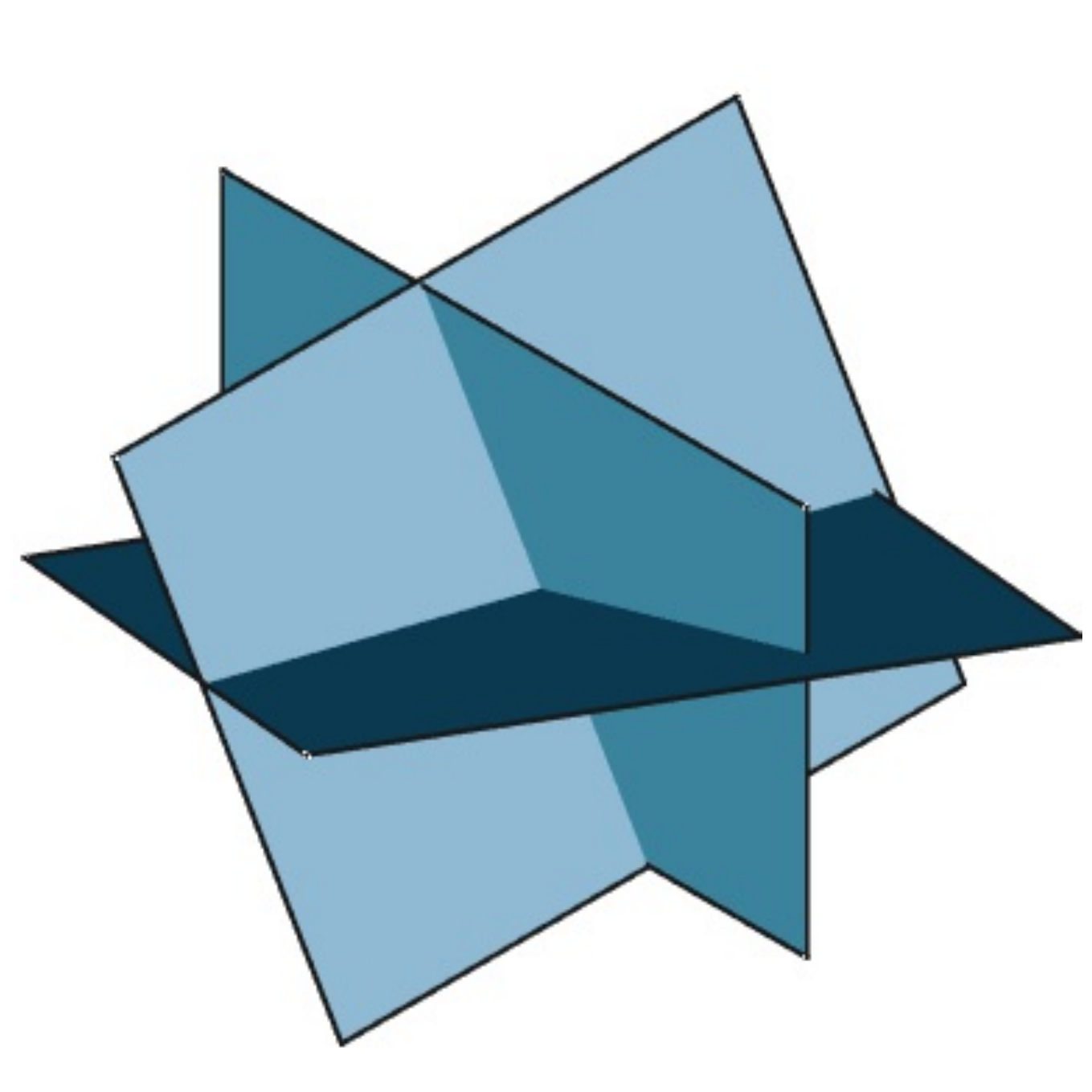}
		\label{fig:example.3general}
		}
	\subfigure[2-relaxed]{
		\includegraphics[width=.25\linewidth]{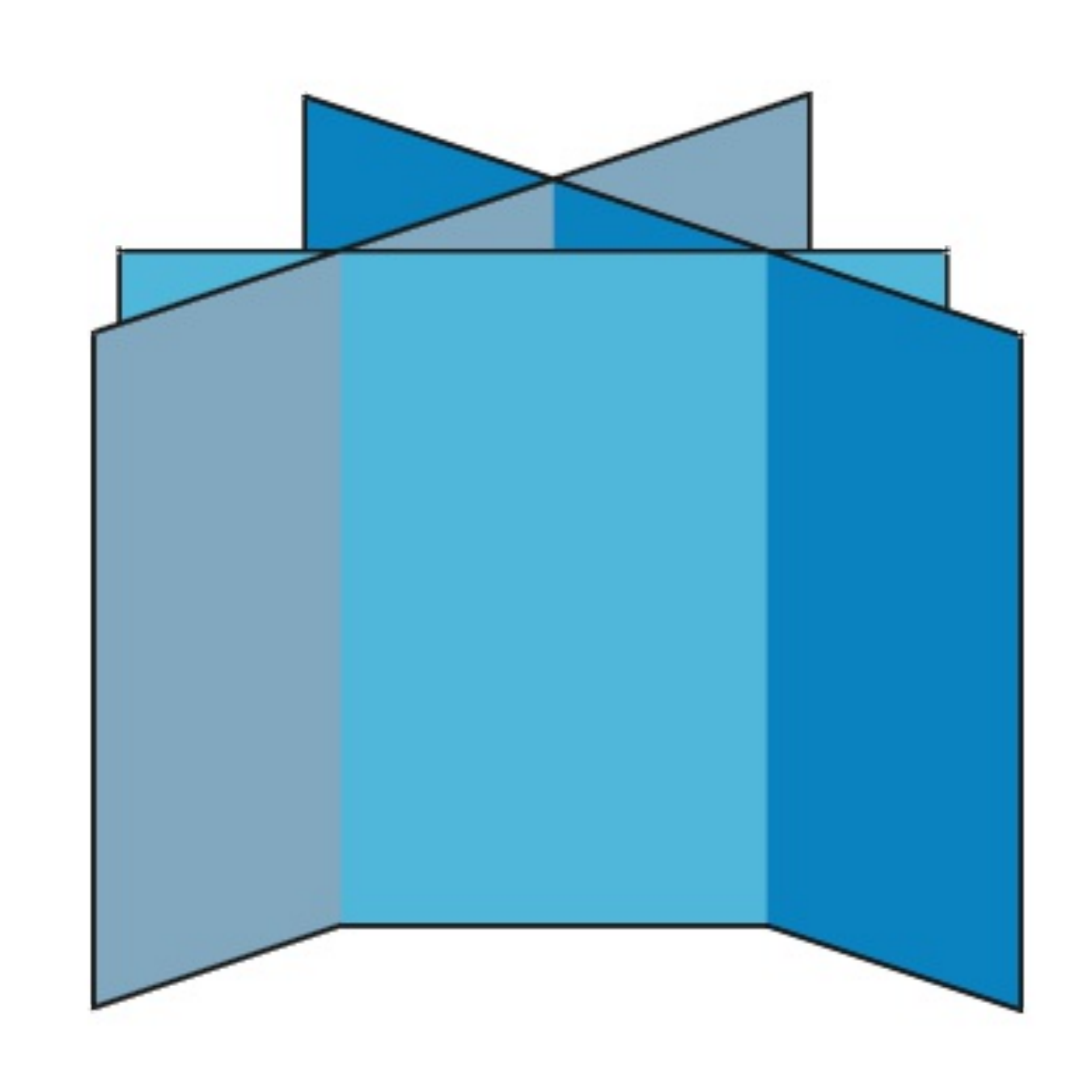}
		\label{fig:example.2general}
		}
	\subfigure[1-relaxed]{
		\includegraphics[width=.25\linewidth]{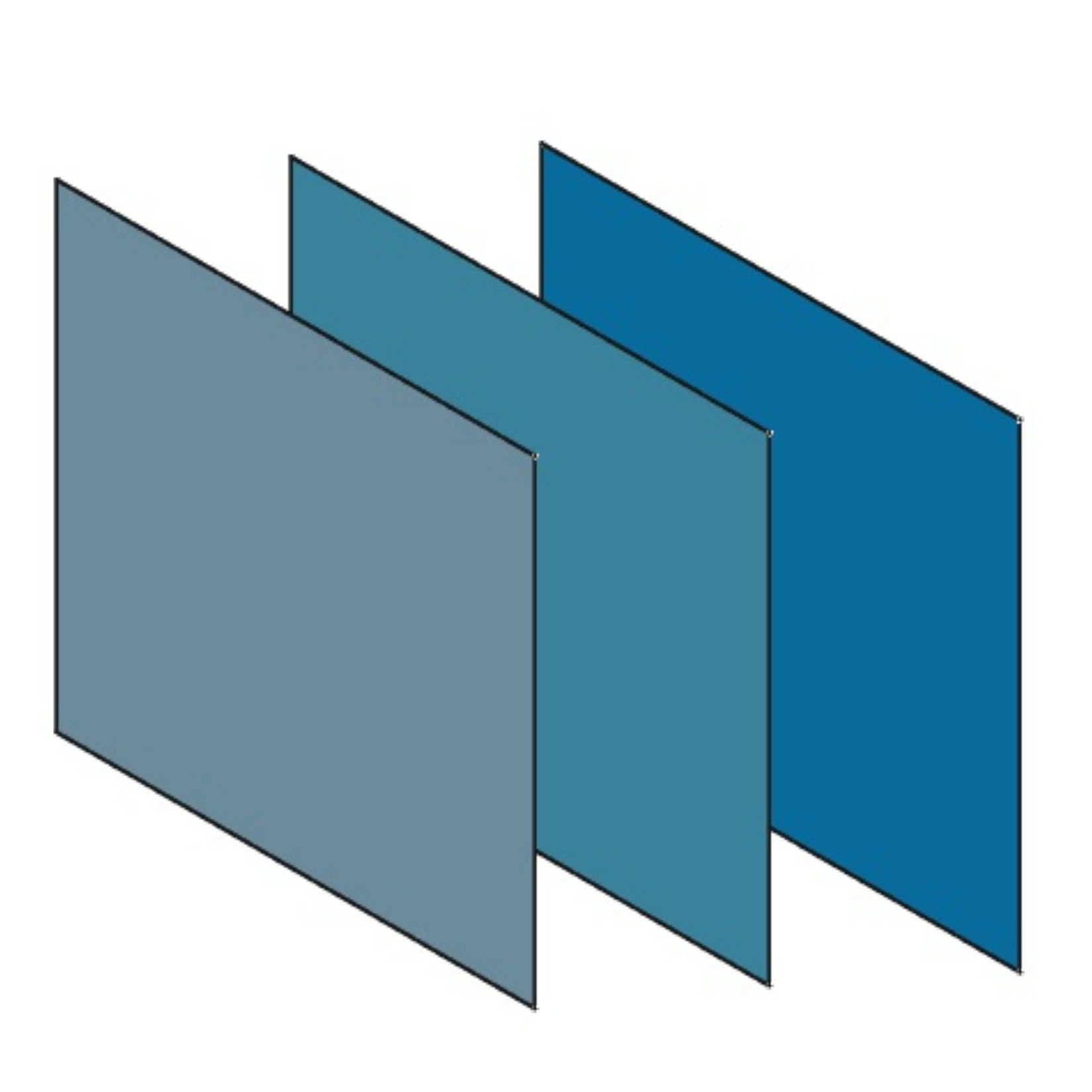}
		\label{fig:example.1general}
		}
		\vspace{-2mm}
	\caption{relaxed general position}\label{fig:illu-relaxed-general}
	\vspace{-6mm}
\end{wrapfigure}
\begin{definition}[Relaxed general position of hyperplanes]\label{defn relaxed general position} 
For a set of $n$ hyperplanes $\Hnd$ in $\mathbb{R}^d$ and $d' \in [d]$, the arrangement $\mathcal{A}(\Hnd)$ is in $d'$-\textit{relaxed general position} if any subset $\mathcal{S}$ $\subseteq$ $\Hnd$ of $k$ hyperplanes where $1 \le k \le d'$, intersects in a $(d-k)$-dimensional plane, otherwise has null intersection.
\end{definition}

As illustrated in \figref{fig:illu-relaxed-general},  
\defref{defn relaxed general position} accounts for arrangements beyond general position (\figref{fig:example.3general}) \tt{e.g} parallel hyperplanes in \figref{fig:example.1general}. \defref{defn general position} is a \tt{special case} of \defref{defn relaxed general position} which we discuss in details in \iftoggle{longversion}{\appref{appendix: active learning}}{the supplemental materials}. \vspace{-3mm}

\section{Average-case Teaching Complexity} \label{sec:avgtd}
In this section, we study the generic problem of teaching convex polytopes via halfspace queries as illustrated in \figref{fig:example.polytope}. Before establishing our main result, we first introduce two important results inherently connected to the average teaching complexity: the number of regions (which corresponds to the target hypotheses) induced by the intersections of $n$ halfspaces, 
and the number of faces 
(which corresponds to the teaching sets) induced by the hyperplane arrangement. 
Our proofs are inspired by ideas from combinatorial geometry and affine geometry, as detailed below. 

\subsection{Regions and Faces Induced by Intersections of Halfspaces}\label{subsection: count regions}
Consider a set of $n$ hyperplanes $\Hnd$ in $\Rd$. 
Generally, it is non-trivial to count the number of regions induced by an arbitrary hyperplane arrangement $\cA\parenb{\Hnd}$. When the hyperplane arrangement is in general position  (\defref{defn general position}, \figref{fig:example.3general}), \cite{miller2007geometric} established an exact result for counting the induced regions. However, it remains a challenging problem to identify the number of regions for more general hyperplane arrangements. However, we show that under the relaxed condition of \defref{defn relaxed general position}, which accounts for various non-trivial arrangements as shown in \figref{fig:example.3general}-\ref{fig:example.1general}, one can exactly count the number of regions.
\begin{theorem}[\bf Regions induced by $d'$-relaxed general position arrangement]\label{thm:d'-general}
Consider a set $\Hnd$ of $n$ hyperplanes in $\mathbb{R}^d$. If the hyperplane arrangement $\mathcal{A}(\Hnd)$ is in $d'$-relaxed general position for some $d' \in \left[d\right]$, then the following holds: $\fr\parenb{\mathcal{A}(\Hnd)} = \sum_{i=0}^{d'} \binom{n}{i}.$
\end{theorem}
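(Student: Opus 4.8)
The plan is to count regions by the incremental (\emph{sweep}) argument from arrangement theory: add the hyperplanes of $\Hnd$ one at a time and track how many new regions each addition creates, controlling the sub-arrangement induced on each newly added hyperplane by a restriction lemma. Write $R_d(n,d')$ for the number of regions of an arrangement of $n$ hyperplanes in $\reals^d$ that is in $d'$-relaxed general position; I aim to show $R_d(n,d') = \sum_{i=0}^{d'}\binom{n}{i}$, which notably does not depend on the ambient dimension $d$. The first ingredient is the elementary fact that when a hyperplane $h^{(m)}$ is added to an existing arrangement, the number of newly created regions equals the number of cells into which $h^{(m)}$ is subdivided by its traces $\{h^{(j)}\cap h^{(m)}: j\ne m\}$: every old region that $h^{(m)}$ crosses is split into exactly two, and the crossed regions are in bijection with the cells of the arrangement induced on $h^{(m)}$.

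The crux is a restriction lemma: if $\cA(\Hnd)$ is in $d'$-relaxed general position with $d'\ge 2$, then for any $h^{(m)}\in\Hnd$ the traces $\{h^{(j)}\cap h^{(m)}:j\ne m\}$ form $n-1$ distinct hyperplanes of the $(d-1)$-dimensional space $h^{(m)}\cong\reals^{d-1}$, and the resulting arrangement is in $(d'-1)$-relaxed general position. To prove this I would take any $k$ traces and note that their common intersection is $h^{(j_1)}\cap\cdots\cap h^{(j_k)}\cap h^{(m)}$, an intersection of $k+1$ hyperplanes of $\Hnd$. By \defref{defn relaxed general position}, this is a $(d-(k+1))$-dimensional plane when $k+1\le d'$ — which inside $h^{(m)}$ has dimension $(d-1)-k$, exactly the general-position dimension in $\reals^{d-1}$ — and is empty when $k+1>d'$. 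The case $k=1$ shows each trace is a genuine hyperplane of $\reals^{d-1}$, and distinctness follows because two coincident traces would force the intersection of three distinct hyperplanes of $\Hnd$ to contain a $(d-2)$-plane, exceeding the dimension (or emptiness) mandated by \defref{defn relaxed general position}. This is precisely the defining condition of $(d'-1)$-relaxed general position for an $(n-1)$-hyperplane arrangement in $\reals^{d-1}$.

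Combining the two ingredients yields the recurrence $R_d(n,d') = R_d(n-1,d') + R_{d-1}(n-1,d'-1)$, the second summand being the cell count of the induced $(d'-1)$-relaxed arrangement on the newly added hyperplane. I would then conclude by a double induction on $(d',n)$. The base cases are $n=0$ (a single region, matching $\binom{0}{0}$) and $d'=1$ (where all hyperplanes are pairwise non-intersecting, hence pairwise parallel, giving $n+1=\sum_{i=0}^{1}\binom{n}{i}$ regions, so that the restriction lemma is invoked only for $d'\ge 2$). For the inductive step, the two induction hypotheses give $R_d(n-1,d')=\sum_{i=0}^{d'}\binom{n-1}{i}$ and $R_{d-1}(n-1,d'-1)=\sum_{i=0}^{d'-1}\binom{n-1}{i}$, and Pascal's rule $\binom{n-1}{i}+\binom{n-1}{i-1}=\binom{n}{i}$ gives $\sum_{i=0}^{d'}\binom{n-1}{i}+\sum_{i=0}^{d'-1}\binom{n-1}{i}=\sum_{i=0}^{d'}\binom{n}{i}$, as required; crucially, the dimension-free target formula absorbs the fact that the two hypotheses live in different ambient dimensions $d$ and $d-1$.

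I expect the restriction lemma to be the main obstacle: its genuinely geometric content is verifying that restricting to one hyperplane drops the relaxed-general-position parameter by exactly one (from $d'$ to $d'-1$) and that no two traces coincide, both of which hinge on the exact-dimension and emptiness clauses of \defref{defn relaxed general position}. Everything downstream — the sweep recurrence and the binomial identity — is routine. As a sanity check, setting $d'=d$ recovers the classical general-position count $\sum_{i=0}^{d}\binom{n}{i}$, while $d'=1$ recovers the $n+1$ regions cut out by $n$ parallel hyperplanes. An alternative route would compute the characteristic polynomial of the intersection poset and invoke Zaslavsky's theorem, but the self-contained sweep induction above avoids that machinery.
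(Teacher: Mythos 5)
Your proof is correct, but it takes a genuinely different route from the paper's. The paper proves \thmref{thm:d'-general} by \emph{essentialization}: it forms the subspace $\mathbb{N}$ spanned by the normal vectors, shows $\dim(\mathbb{N})=d'$, establishes via a path-connectivity/orthogonal-projection argument that the regions of $\cA\parenb{\Hnd}$ are in bijection with the regions of the induced arrangement $\cA\parenb{\Hndd}$ inside $\mathbb{N}$ (\propref{bijection}), proves that this induced arrangement is in $d'$-relaxed general position in $\mathbb{N}\cong\reals^{d'}$ (\lemref{key lemma}), and then cites the classical general-position count in $\reals^{d'}$ (\lemref{d-general}, \corref{d-exact}). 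You instead keep the ambient space $\reals^d$ and run a direct sweep induction on $n$: your restriction lemma (traces of a $d'$-relaxed arrangement on one of its hyperplanes form an $(n-1)$-hyperplane, $(d'-1)$-relaxed arrangement in $\reals^{d-1}$) together with the cell-splitting fact yields the recurrence $R_d(n,d')=R_d(n-1,d')+R_{d-1}(n-1,d'-1)$, and Pascal's rule closes the induction. It is worth noting that both of your ingredients do appear in the paper, just not in the proof of this theorem: your restriction lemma is precisely the geometric step used to prove \propref{faces_teach} (the face count), and your recurrence generalizes the paper's \lemref{d-general} from general position to relaxed general position. What the paper's route buys is the structural insight that the arrangement ``really lives'' in a $d'$-dimensional subspace, machinery the authors suggest can be leveraged for arrangements beyond relaxed general position; what your route buys is self-containedness and economy---no linear-algebraic construction of $\mathbb{N}$, no bijection via projections, and no black-box citation of the general-position formula, which you instead recover as the special case $d'=d$. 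Your treatment of the edge cases ($d'=1$ via parallel hyperplanes, distinctness of traces when $d'\ge 2$) is sound, so there is no gap to repair.
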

\looseness -1 In the following we sketch the proof of \thmref{thm:d'-general}.
The key insight for the proof is
in reducing it to the special case of general position in some $d'$ subspace where $d'\leq d$. We show the reduction by constructing a subspace $\mathbb{N}$ defined as:
$$\mathbb{N} \triangleq \textbf{span}\tuple{\curlybracket{\eta_h \bigm\vert h \in \Hnd,\: h \coloneqq \eta_h\cdot \h{}+ b_h = 0,\: \h{}\in \mathbb{R}^d}}.$$
As a key observation, note that $\mathbb{N}$ is $d'$-dimensional. Let $\Hndd$ be the induced set of hyperplanes in the subspace $\mathbb{N}$ formed by the intersections of $\Hnd$ with $\mathbb{N}$. Therefore, the number of regions induced\footnote{ This idea is more formally studied in the hyperplane arrangement literature as \tt{essentialization} \citep[see][chap: An introduction to hyperplane arrangement]{miller2007geometric}. See \appref{appendixsub: bijection} for further discussion.} by the arrangement of $\Hndd$, denoted as $\fr\parenb{\cA(\Hndd)}$, is exactly $\fr\parenb{\cA(\Hnd)}$. Thus, informatively, it is sufficient to rely on $\cA(\Hndd)$ in $\mathbb{N}$ to understand the intersection of halfspaces induced by $\cA(\Hnd)$ in $\Rd$. We observe that every region $\hat{r} \in \mathcal{A}(\Hndd)$ is contained in exactly one region in $\mathcal{A}(\Hnd)$.
With this observation, we construct the following map $\mathcal{B}$ from the regions induced by the hyperplane arrangement $\mathcal{A}(\Hndd)$, to those induced by $\mathcal{A}(\Hnd)$:
\begin{align*}
  \mathcal{B}:\:\boldsymbol{\mathfrak{R}}\parenb{\mathcal{A}(\Hndd)} &
  \longrightarrow \boldsymbol{\mathfrak{R}}\parenb{\mathcal{A}( \Hnd)} \quad :\quad
  \hat{r} 
  \longmapsto \mathbf{region}_{\mathcal{A}(\Hnd)}(\hat{r}),
\end{align*}
where $\mathbf{region}_{\mathcal{A}(\Hnd)}(\hat{r}) \coloneqq r$ for some $r \in \regionset$ such that $\hat{r} \subseteq r$. 
The following proposition shows that $\mathcal{B}$ is bijective, thereby providing an alternate way to count $\regionset$.
\begin{proposition}\label{bijection}
The map $\mathcal{B}$ (as defined above) is a bijection. Thus,  $\fr\parenb{\mathcal{A}(\Hnd)} = \fr\parenb{\mathcal{A}(\Hndd)}$.
\end{proposition}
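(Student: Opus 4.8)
The plan is to exploit the orthogonal decomposition $\Rd = \mathbb{N} \oplus \mathbb{N}^{\perp}$ together with the observation that the side of each hyperplane a point lies on is determined solely by its component in $\mathbb{N}$. Concretely, for any $z \in \Rd$ write $z = z_{\mathbb{N}} + z_{\perp}$ with $z_{\mathbb{N}} \in \mathbb{N}$ and $z_{\perp} \in \mathbb{N}^{\perp}$. Since every normal $\eta_h$ lies in $\mathbb{N}$ by construction, $\eta_h \cdot z_{\perp} = 0$, so $\mathrm{sgn}(\eta_h \cdot z - b_h) = \mathrm{sgn}(\eta_h \cdot z_{\mathbb{N}} - b_h)$ for all $h \in \Hnd$. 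In words, the full sign pattern of $z$ (equivalently, the labeling $\ell_r$) agrees with that of its projection $z_{\mathbb{N}}$. The strategy is therefore to identify each region with the sign vector in $\{-1,+1\}^n$ its points realize, and to show that $\mathcal{B}$ is nothing but the identity on sign vectors; bijectivity then reduces to the claim that a sign vector is realizable by some $z \in \Rd$ if and only if it is realizable by some $\hat{z} \in \mathbb{N}$.

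First I would settle well-definedness and surjectivity together. Given $\hat{r} \in \boldsymbol{\mathfrak{R}}(\mathcal{A}(\Hndd))$, every point of $\hat{r} \subseteq \mathbb{N}$ avoids all hyperplanes of $\Hndd$, hence—using that restricting the affine functional $z \mapsto \eta_h \cdot z - b_h$ to $\mathbb{N}$ is exactly the defining functional of $\hat{h} = h \cap \mathbb{N}$—it avoids every $h \in \Hnd$ and carries a single common sign pattern; since $\hat{r}$ is connected and disjoint from $\cup_{h}h$, it lies in exactly one region of $\mathcal{A}(\Hnd)$, so $\mathcal{B}$ is well-defined. For surjectivity, take any $r \in \regionset$ and a witness $z \in r$. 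By the projection identity its image $z_{\mathbb{N}}$ realizes the same sign pattern as $z$ and avoids every $\hat{h}$, so it lies in some $\hat{r} \in \boldsymbol{\mathfrak{R}}(\mathcal{A}(\Hndd))$; every point of $\hat{r}$ shares that sign pattern, hence satisfies $\ell_r$ and lies in $r$, giving $\hat{r} \subseteq r$ and $\mathcal{B}(\hat{r}) = r$.

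For injectivity, suppose $\mathcal{B}(\hat{r}_1) = \mathcal{B}(\hat{r}_2) = r$. Choosing representatives $\hat{z}_1 \in \hat{r}_1$ and $\hat{z}_2 \in \hat{r}_2$ (both in $r$), both realize the labeling $\ell_r$ with respect to $\Hnd$; by the restriction identity they realize the identical sign pattern with respect to $\Hndd$, so they lie in the same region of $\mathcal{A}(\Hndd)$, forcing $\hat{r}_1 = \hat{r}_2$. The counting consequence $\fr\parenb{\mathcal{A}(\Hnd)} = \fr\parenb{\mathcal{A}(\Hndd)}$ then follows at once from the bijection. I expect the main obstacle to be careful bookkeeping rather than conceptual depth: one must verify that the restricted functionals really define the hyperplanes of $\Hndd$, so that sign patterns inside $\mathbb{N}$ under $\Hndd$ and under $\Hnd$ coincide verbatim, and that a region—being connected and disjoint from the union of hyperplanes—is indeed confined to a unique component. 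These are the hinges on which both directions turn; note that the argument uses only that $\mathbb{N}$ is spanned by the normals, and not the $d'$-relaxed general position hypothesis, which enters only afterwards when the general-position count is invoked inside $\mathbb{N}$.
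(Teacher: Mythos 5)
Your proof is correct, and it rests on exactly the same geometric engine as the paper's---the identity $\eta_h\cdot z_{\perp}=0$ for every normal, i.e.\ that orthogonal projection onto $\mathbb{N}$ does not change which side of any hyperplane a point lies on---but the packaging is genuinely different. The paper runs this identity through a \emph{path-connectivity} argument: its \lemref{path connectivity} shows the segment joining $z$ to $\mathbf{proj}_{\mathbb{N}}(z)$ is crossed by no hyperplane (so every region of $\cA\parenb{\Hnd}$ meets $\mathbb{N}$, giving surjectivity), and injectivity is a separate contradiction argument in which a flat $X_h$ separating two regions of $\cA(\Hndd)$ inside $\mathbb{N}$ lifts to a hyperplane $h$ separating them inside a single region of $\cA\parenb{\Hnd}$. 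You instead set up a sign-vector dictionary: regions on both sides are identified with realizable sign patterns, the projection identity says realizability in $\Rd$ and in $\mathbb{N}$ coincide, and $\mathcal{B}$ becomes the identity map on sign vectors, so bijectivity is nearly tautological. What your route buys is economy---one identity does all three jobs (well-definedness, injectivity, surjectivity) and no separating-hyperplane or segment-crossing argument is needed; what it costs is that you must invoke the standard fact that sign cells of an arrangement are convex, hence each realizable sign vector corresponds to exactly \emph{one} region (distinct regions cannot share a pattern). That convexity fact is the one hinge you use in the injectivity and surjectivity steps without stating it explicitly, and it is worth a sentence; with it spelled out, your argument is complete. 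Your closing observation that the bijection needs only $\mathbb{N}=\textbf{span}$ of the normals, and not the $d'$-relaxed general position hypothesis, is also consistent with the paper, where that hypothesis enters only through \lemref{key lemma} and the subsequent count.
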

Note that, if we can resolve $\fr\parenb{\mathcal{A}(\Hndd)}$ induced by the hyperplane arrangement $\cA\parenb{\Hndd}$, then $\fr\parenb{\mathcal{A}(\Hnd)}$ can be ascertained too. 
The following key lemma, proved in \iftoggle{longversion}{\appref{appendixsub: key lemma}}{the supplemental materials}, shows that $\mathcal{A}(\Hndd)$ is in $d'$-relaxed general position. 
\begin{lemma}
\label{key lemma}
The induced hyperplane arrangement $\mathcal{A}(\Hndd)$ is in $d'$-relaxed general position. 
\end{lemma}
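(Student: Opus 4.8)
The plan is to verify the two defining conditions of $d'$-relaxed general position (\defref{defn relaxed general position}) directly for the arrangement $\mathcal{A}(\Hndd)$ inside the ambient space $\mathbb{N}$; since $\mathbb{N}$ is $d'$-dimensional (as noted in the main text), this relaxed condition coincides with ordinary general position. The central observation is that intersection commutes with restriction to $\mathbb{N}$: writing $\hat{h} = h \cap \mathbb{N}$ for each $h \in \Hnd$, every subset $\mathcal{S} \subseteq \Hnd$ satisfies
\begin{align}
\bigcap_{h \in \mathcal{S}} \hat{h} = \Big( \bigcap_{h \in \mathcal{S}} h \Big) \cap \mathbb{N}. \nonumber
\end{align}
Moreover, because each normal $\eta_h$ already lies in $\mathbb{N}$ and is nonzero, the functional $z \mapsto \eta_h \cdot z$ restricted to $\mathbb{N}$ is a nonzero linear functional, so $\hat{h}$ is a genuine hyperplane of $\mathbb{N}$ whose normal, viewed as an element of $\mathbb{N}$, is again $\eta_h$. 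This lets me argue entirely with the equations $\eta_h \cdot z + b_h = 0$ read inside $\mathbb{N}$, sidestepping any transversality argument about how a $(d-k)$-dimensional flat meets $\mathbb{N}$.

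First I would treat a subset $\mathcal{S}$ of size $k \le d'$. Since $\mathcal{A}(\Hnd)$ is in $d'$-relaxed general position, $\bigcap_{h \in \mathcal{S}} h$ is a $(d-k)$-dimensional flat in $\Rd$; by the rank count for its defining linear system, this forces the $k$ normals $\{\eta_h : h \in \mathcal{S}\}$ to be linearly independent. These are precisely the normals of $\{\hat{h} : h \in \mathcal{S}\}$ inside $\mathbb{N}$, and because they all lie in $\mathbb{N}$ they remain independent there. Hence the $k$ affine functionals defining $\bigcap_{h \in \mathcal{S}} \hat{h}$ in the $d'$-dimensional space $\mathbb{N}$ are independent, so their common zero set is a nonempty affine subspace of dimension exactly $d' - k$, as required.

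Next I would treat a subset $\mathcal{S}$ of size $k > d'$. By $d'$-relaxed general position, $\bigcap_{h \in \mathcal{S}} h = \emptyset$ in $\Rd$; intersecting with $\mathbb{N}$ keeps it empty, so $\bigcap_{h \in \mathcal{S}} \hat{h} = \emptyset$. Combined with the previous case, this shows that $\mathcal{A}(\Hndd)$ meets both requirements of \defref{defn relaxed general position} with parameter $d'$ in the $d'$-dimensional space $\mathbb{N}$, establishing the claim.

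I expect the only genuinely delicate point to be the dimension count when $k \le d'$: one must ensure the restricted hyperplanes do not ``over-intersect'' and produce a flat of dimension larger than $d' - k$. The clean way around this is exactly the reformulation above --- rather than intersecting the $(d-k)$-dimensional flat $\bigcap_{h \in \mathcal{S}} h$ with $\mathbb{N}$ and reasoning about their relative position, I work with the $k$ defining equations inside $\mathbb{N}$ and invoke independence of the normals, which transfers verbatim from $\Rd$ precisely because the normals themselves span $\mathbb{N}$. The standing fact that $\dim \mathbb{N} = d'$ (any $d'$ normals independent, while any larger family of hyperplanes has empty intersection and hence dependent normals) is the other ingredient I rely on, and I would state it explicitly at the outset.
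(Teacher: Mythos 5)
Your proof is correct, and it reaches the conclusion by a genuinely different route than the paper's. The paper (\appref{appendixsub: key lemma}) argues extrinsically in $\Rd$: it writes the intersection of the induced hyperplanes as $\bigl(\bigcap_{h \in \mathcal{S}_k} h\bigr) \cap \parenb{\mathbf{0} + \mathbb{N}}$, represents $\bigcap_{h \in \mathcal{S}_k} h$ as a flat $\nu + W_{\cap}$, shows via the kernel of the matrix of normals that $\mathbb{N}^{\perp} \subseteq W_{\cap}$ and hence $\dim\parenb{W_{\cap} + \mathbb{N}} = d$, and then applies the dimension formula for intersecting flats (Theorems 16.5--16.6 of \citet{roman2007advanced}) to compute $(d-k) + d' - d = d'-k$. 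You instead work intrinsically inside $\mathbb{N}$: the normals already lie in $\mathbb{N}$, the $(d-k)$-dimensionality of the intersection in $\Rd$ forces them to be linearly independent by rank--nullity, independence persists when they are viewed as vectors of $\mathbb{N}$, and a second rank--nullity count inside the $d'$-dimensional space $\mathbb{N}$ yields a nonempty solution set of dimension exactly $d'-k$; the case $k > d'$ is handled identically in both proofs. Your route buys brevity and self-containment: it avoids the affine-geometry apparatus of flats and the appeal to Roman's intersection theorems, and it gets nonemptiness of the intersection for free from surjectivity of the restricted linear system, a point the paper secures only implicitly through the flat machinery. The paper's route, in exchange, makes the geometric picture explicit (every intersection flat contains a translate of $\mathbb{N}^{\perp}$, so the arrangement is a cylinder over its trace on $\mathbb{N}$) and reuses the same toolkit on which the surrounding appendices lean, e.g., \lemref{lemma: intersect in d-1} and \propref{faces_teach}. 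One step you should spell out if you write this up: passing from ``the $\eta_h$ are independent vectors of $\mathbb{N}$'' to ``the functionals $z \mapsto \eta_h \cdot z$ are independent on $\mathbb{N}$'' (equivalently, the restricted system has full rank $k$) requires $\mathbb{N} \cap \mathbb{N}^{\perp} = \{0\}$, i.e., nondegeneracy of the inner product restricted to $\mathbb{N}$; it is a one-line verification, but it is precisely the point where your argument uses the fact that the normals themselves span $\mathbb{N}$.
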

This implies that $\cA\parenb{\Hndd}$ is  structurally the same as $d'$-general position arrangement of $n$ hyperplanes (\tt{i.e.} \defref{defn general position}) in $\mathbb{R}^{d'}$ because any $d'$-dimensional subspace of $\mathbb{R}^{d}$ is \tt{isomorphic} to $\mathbb{R}^{d'}$. Thus, from the relaxed definition of general position, we reduce the problem of counting $\rnd$ to counting 
$\fr\parenb{\cA\parenb{\Hndd}}$ which has the special arrangement of general position. By 
\cite{miller2007geometric} we therefore conclude that $\fr\parenb{\cA\parenb{\Hnd}}$ can be ascertained in an exact form as in \thmref{thm:d'-general}. We defer the full proof of \thmref{thm:d'-general} to \iftoggle{longversion}{\appref{appendix: regions relaxed general position}}{the supplemental materials}.
\paragraph{Faces Induced by $\cA(\Hnd)$} 
We denote by $\ff\parenb{\cA(\Hnd)}$ the number of faces (i.e. regions induced on the hyperplanes) induced by $\cA(\Hnd)$ in $\Rd$.
Consider an arbitrary $h^* \in \regionset$. Note if $\cA(\Hnd)$ is in $d'$-relaxed general position for 
$d' > 1$ then $\forall h \in \Hnd\setminus \{h^*\}$, intersection of $h$ and $h^*$ forms a $(d-2)$-dimensional \tt{flat} on $h^*$ by definition (see \iftoggle{longversion}{\appref{appendixsub: affine geometry}}{supplemental materials} for formal definitions of the relevant affine geometry concepts).
To count the regions induced on $h^*$ is to analyze, $\mathrm{wrt}$  $\Hnd\setminus \{h^*\}$, the $n-1$ \tt{flats} of dimension $(d-2)$; thereby reducing the problem to the case of $n-1$ hyperplanes in $\mathbb{R}^{d-1}$. We would show that these \tt{newly} induced hyperplanes (\tt{i.e. flats}) are in relaxed general position, and thus one can invoke \thmref{thm:d'-general} to count the faces. 
\propref{faces_teach}, as proved in \iftoggle{longversion}{\appref{appendix: faces teaching}}{the supplemental materials}, provides the exact count of faces induced by $\cA(\Hnd)$.
\begin{proposition}[\bf Faces induced by hyperplane arrangement]\label{faces_teach}
Consider a set $\Hnd$ of $n$ hyperplanes in $\mathbb{R}^d$. If the hyperplane arrangement $\mathcal{A}(\Hnd)$ is in $d'$-relaxed general position for some $d' \in \left[d\right]$, the number of faces induced by the arrangement satisfies the recursion:
     $\ff\parenb{\cA(\Hnd)} =  n\cdot \sum_{i=0}^{d'-1}\binom{n-1}{i}.$
\end{proposition}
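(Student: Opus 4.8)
The plan is to count the faces one hyperplane at a time and then aggregate. First I would fix an arbitrary hyperplane $h^* \in \Hnd$ and observe that, by definition, the faces lying on $h^*$ are precisely the connected components of $h^* \setminus \bigcup_{h \in \Hnd \setminus \{h^*\}} h$. Since $h^*$ is an affine hyperplane, it is isomorphic to $\mathbb{R}^{d-1}$, so counting faces on $h^*$ amounts to counting the regions cut out inside $h^*$ by the $n-1$ traces $\{h \cap h^* : h \in \Hnd \setminus \{h^*\}\}$. When $d' > 1$, $d'$-relaxed general position guarantees that each such trace is a $(d-2)$-dimensional flat, i.e. a genuine hyperplane of the $(d-1)$-dimensional space $h^*$; the whole task thus reduces to an instance of region-counting for $n-1$ hyperplanes in $\mathbb{R}^{d-1}$, to which I intend to apply \thmref{thm:d'-general}.

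The heart of the argument is to show that this induced arrangement of $n-1$ flats inside $h^* \cong \mathbb{R}^{d-1}$ is itself in $(d'-1)$-relaxed general position. The key computation is a dimension count: for any subset of $k$ of these flats with $k \le d'-1$, their common intersection inside $h^*$ coincides with the intersection in $\Rd$ of the corresponding $k+1$ hyperplanes of $\Hnd$ (the $k$ chosen ones together with $h^*$ itself), since $\bigcap_i (h_i \cap h^*) = (\bigcap_i h_i) \cap h^*$. By $d'$-relaxed general position this intersection is $(d-k-1)$-dimensional, which is exactly codimension $k$ within the $(d-1)$-dimensional space $h^*$; conversely, for $k \ge d'$ the $k+1 > d'$ hyperplanes have null intersection, so the $k$ induced flats do too. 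These are precisely the two defining conditions of $(d'-1)$-relaxed general position (\defref{defn relaxed general position}) in $\mathbb{R}^{d-1}$. A small subtlety to dispatch along the way is that the $n-1$ traces are genuinely distinct: if $h_1 \cap h^* = h_2 \cap h^*$ for distinct $h_1,h_2$, then $h_1 \cap h_2 \cap h^*$ would remain $(d-2)$-dimensional, contradicting the codimension bound forced on the intersection of the three hyperplanes by $d'$-relaxed general position.

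With relaxed general position established, I would invoke \thmref{thm:d'-general} with ambient dimension $d-1$, relaxation parameter $d'-1$, and $n-1$ hyperplanes to conclude that $h^*$ is subdivided into exactly $\sum_{i=0}^{d'-1} \binom{n-1}{i}$ regions, i.e. that many faces lie on $h^*$. Since this count does not depend on the choice of $h^*$ and every face lies on exactly one hyperplane, summing over the $n$ hyperplanes gives $\ff\parenb{\cA(\Hnd)} = n \cdot \sum_{i=0}^{d'-1}\binom{n-1}{i}$. Finally I would treat the degenerate case $d'=1$ directly: here no two hyperplanes of $\Hnd$ meet, so each $h^*$ is a single undivided face, and the formula correctly collapses to $n \cdot \binom{n-1}{0} = n$.

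The step I expect to be the main obstacle is the relaxed-general-position verification for the induced arrangement --- both the dimension count establishing the correct intersection dimensions for $k \le d'-1$ and the nullity for $k \ge d'$, together with the accompanying distinctness argument --- since everything downstream is a direct application of \thmref{thm:d'-general} followed by a summation over the $n$ hyperplanes.
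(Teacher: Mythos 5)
Your proof is correct and takes essentially the same route as the paper's: fix a hyperplane $h^*$, show that the $n-1$ traces $\{h \cap h^* : h \in \Hnd\setminus\{h^*\}\}$ form a $(d'-1)$-relaxed general position arrangement in $h^* \cong \mathbb{R}^{d-1}$, invoke \thmref{thm:d'-general} to count $\sum_{i=0}^{d'-1}\binom{n-1}{i}$ faces per hyperplane, multiply by $n$, and handle $d'=1$ separately. If anything, you are slightly more thorough than the paper, which leaves implicit both the nullity check for subsets of size at least $d'$ and the distinctness of the traces.
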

\subsection{Bound for Average Teaching Complexity: $\bigTheta{d'}$}\label{subsection: main theorem}
\looseness -1
We are now ready to provide our main result on the average-case teaching complexity, when considering teaching convex polytopes induced by hyperplanes in $d'$-\tt{relaxed general position}.
We show that using results in \secref{subsection: count regions}, 
we achieve an \tt{average-case} teaching complexity of $\Theta(d')$ by \algoref{algo: teaching halfspace}. 
\paragraph{Teaching algorithm}
\begin{wrapfigure}{R}{0.4\textwidth}
\scalebox{0.8}{
    \begin{minipage}{0.5\textwidth}
    \vspace{-5mm}
\begin{algorithm}[H]
\nl {\bf Input}: $\Hnd$; random target region $\rgn \in \regionset$  {\label{lst:line:dual01}}\\
\Begin{
\tcp{indentifies  $\mathcal{TS}(\Hnd,\rgn)$ via linear programming} 
\nl $\mathcal{TS}(\Hnd,\rgn) \leftarrow$ {\bf FindTS}$\parenb{\rgn}${\label{lst:line:dual02}}\\
\nl\For {$(h,l)$ $\in$ $\mathcal{TS}(\Hnd,\rgn)$ {\label{lst:line:dual03}}}{
teacher provides halfspace queries $(h,l)$
}
}
\caption{Teaching algorithm}\label{algo: teaching halfspace}
\end{algorithm}
\vspace{-6mm}
\end{minipage}
}
\end{wrapfigure}

Let $\rgn \sim \mathcal{U}$ be a region sampled uniformly at random from $\regionset$. 
To teach $\rgn$, a teacher 
has to provide the halfspace queries in $\mathcal{TS}(\Hnd,\rgn)$. Note that these labels is sufficient to teach $\rgn$ since the version space $\mathrm{VS}(\mathcal{TS}(\Hnd, \rgn)) = \{\rgn\}$. In \algoref{algo: teaching halfspace}, the teacher first collects $\mathcal{TS}(\Hnd,\rgn)$ via subroutine \textbf{FindTS}($\cdot$), and then provides labels to the learner. In particular, the subroutine \textbf{FindTS}($\cdot$) identifies $\mathcal{TS}(\Hnd,\rgn)$ via linear programming:
It checks if each hyperplane intersects the convex body defined by all the $n-1$ constraints (one linear constraint for each hyperplane); each iteration takes polynomial time as it requires solving a linear equation system. In total, it takes $n$ iterations to decide whether any hyperplane is in the teaching set. Thus, the overall computational complexity of this algorithm is $\cO(\textbf{poly}(d)\cdot\textbf{poly}(n))$ (assuming $d$ is smaller than $n$).

\paragraph{Average-case analysis} 
Recall that in section \secref{section: teaching framework}, we defined $B_{r^*}$ to be the bounding set of hyperplanes for the polytope that contains $r^*$.  
To teach $r^*$,  
the teacher has to \tt{identify} the \tt{exact} subset of hyperplanes in $B_{r^*}$ 
(\tt{i.e.} the \tt{faces of the polytope}), 
and provides the halfspace labels corresponding to the hyperplanes in $B_{r^*}$. 
Thus, teaching a target region corresponds to providing labels for the faces of the bounding set. One can ask if there are \tt{pathological} arrangements, where teacher has to provide all the $n$ labels? It turns out that, one can construct arrangements of the hyperplane set $\Hnd$ in $\Rd$ where the \tt{worst-case} teaching complexity is $\bigOmega{n}$ as shown in \thmref{theorem: Main Theorem}. This calls for analyzing the teaching problem under the \tt{average-case}.\vspace{6pt}\\
\noindent
Intuitively, the average teaching complexity of convex polytopes reduces to the average number of faces per region, 
\tt{i.e.} the ratio of number of faces induced on $\Hnd$ to number of regions induced in $\Rd$ by $\cA\parenb{\Hnd}$. In \tt{arbitrary} arrangement of hyperplanes, it is challenging to bound the ratio \ref{eqn:tag:a.1}, as one needs to provide upper bound and lower bound for both terms, and it is unclear how $\ff\parenb{\cA(\Hnd)}$ and $\fr\parenb{\cA(\Hnd)}$ are correlated. However, by imposing the $d'$-\tt{relaxed general position} condition (for any $d' \in \bracket{d}$ ) on the hyperplane arrangement, we can leverage our exact results on counting the regions and faces using \thmref{thm:d'-general} and \propref{faces_teach}:
\begin{align}
    \expctover{\rgn\sim\cU}{|\mathcal{TS}(\Hnd,\rgn)|}
    = \underbrace{\frac{2\cdot\ff\parenb{\cA(\Hnd)}}{\fr\parenb{\cA(\Hnd)}} = 
    \frac{ \text{\propref{faces_teach}}}{\text{\thmref{thm:d'-general}}} }_{\textbf{$d'$-relaxed general position}}.
    \tag{A.1}\label{eqn:tag:a.1}
    \nonumber
\vspace{-5mm}
\end{align}
Ideally, to bound \ref{eqn:tag:a.1}, $\ff(\cdot)$ and $\fr(\cdot)$ need to be appropriately bounded. 
We further show (\iftoggle{longversion}{in the \appref{appendix: main theorem}}{in supplemental}) that for a relaxed general position of hyperplane arrangement, $\ff\parenb{\cA(\Hnd)}$ can be rewritten in terms of $\fr(\cdot)$ 
in lower dimensional space.
Thus, to bound the ratio in \ref{eqn:tag:a.1}, it suffices to bound $\fr(\cdot)$. \corref{cor: bounds regions}, as proved in \iftoggle{longversion}{\appref{appendix: main theorem}}{the supplemental materials}, provides 
tight bounds on $\fr(\cdot)$.
\begin{corollary}\label{cor: bounds regions}
If $\cA(\Hnd)$ is in $d'$-relaxed general position, then $\rnd$ satisfies the following for $n > 2d'$:
$ \binom{n-1}{d'} \le\:\:\rnd\:\: \le \binom{n}{d'}\cdot\frac{n-d'+1}{n-2d'+1}$
\end{corollary}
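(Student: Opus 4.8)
The plan is to start from \thmref{thm:d'-general}, which under the $d'$-relaxed general position assumption gives the exact closed form $\fr\parenb{\cA(\Hnd)} = \sum_{i=0}^{d'}\binom{n}{i}$, and then to sandwich this partial binomial sum $S \coloneqq \sum_{i=0}^{d'}\binom{n}{i}$ between the two claimed quantities. The key structural fact I would exploit is that, because $n > 2d'$, we have $d' < n/2$, so the summands $\binom{n}{0},\binom{n}{1},\dots,\binom{n}{d'}$ are strictly increasing and the last term $\binom{n}{d'}$ dominates the sum. The whole corollary then reduces to comparing the tail sum $S$ against this dominant term.

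For the lower bound, I would simply note that $S$ contains $\binom{n}{d'}$ as one of its (nonnegative) summands, so $S \ge \binom{n}{d'}$, and then use Pascal's identity $\binom{n}{d'} = \binom{n-1}{d'} + \binom{n-1}{d'-1} \ge \binom{n-1}{d'}$ to conclude $S \ge \binom{n-1}{d'}$. This direction is essentially immediate and requires no use of the $n > 2d'$ hypothesis beyond ensuring the quantities are well-defined.

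The substantive part is the upper bound, which I expect to be the main obstacle. The idea is to bound the ratio of consecutive binomial coefficients and thereby dominate $S$ by a geometric series anchored at $\binom{n}{d'}$. Concretely, for $1 \le i \le d'$ one computes $\binom{n}{i-1}\big/\binom{n}{i} = i/(n-i+1)$, which is increasing in $i$ and hence maximized at $i=d'$, giving the uniform bound $q \coloneqq d'/(n-d'+1)$. Iterating, $\binom{n}{d'-j} \le q^{\,j}\binom{n}{d'}$ for each $0 \le j \le d'$, so that
\[
S \;=\; \binom{n}{d'}\sum_{j=0}^{d'} \frac{\binom{n}{d'-j}}{\binom{n}{d'}} \;\le\; \binom{n}{d'}\sum_{j=0}^{d'} q^{\,j} \;\le\; \binom{n}{d'}\cdot\frac{1}{1-q}.
\]
Here the hypothesis $n > 2d'$ is exactly what guarantees $q < 1$ (equivalently $n-2d'+1 > 0$), so the geometric series is summable and the bound is finite. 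Finally, simplifying $1/(1-q) = (n-d'+1)/(n-2d'+1)$ yields the claimed upper bound $S \le \binom{n}{d'}\cdot\frac{n-d'+1}{n-2d'+1}$. The only delicate point to verify carefully is the monotonicity of the consecutive ratio (so that $q$ is a valid uniform upper bound on every step), together with checking that relaxing the finite geometric sum to the infinite one does not overshoot the target expression.
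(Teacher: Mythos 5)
Your proposal is correct and follows essentially the same route as the paper: both start from the exact count $\sum_{i=0}^{d'}\binom{n}{i}$ given by \thmref{thm:d'-general}, bound the ratio of each summand to the dominant term $\binom{n}{d'}$ by powers of $q = d'/(n-d'+1)$ (you via consecutive-ratio monotonicity, the paper by writing the same ratios out as products), and then relax to the infinite geometric series $\frac{1}{1-q} = \frac{n-d'+1}{n-2d'+1}$, with the lower bound being the same trivial observation $\sum_{i=0}^{d'}\binom{n}{i} \ge \binom{n}{d'} \ge \binom{n-1}{d'}$. No gaps; the argument is sound as written.
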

Let $M_n$ denote the sample size of $\mathcal{TS}(\Hnd, \rgn)$ from \algoref{algo: teaching halfspace} to teach $\rgn\sim \mathcal{U}$, then $\expctover{\cU}{M_n} = \expctover{r\sim\cU}{|\mathcal{TS}(\Hnd,r)|}$. Combining \eqref{eqn:tag:a.1} and \corref{cor: bounds regions}, we obtain our main result below.
\begin{theorem}[\bf Main theorem]\label{theorem: Main Theorem}
Assume $\Hnd$ is in $d'$-\tt{relaxed general position}. Assume $\rgn$ $\sim$ $\mathcal{U}$. Let the random variable $M_n$ denote the number of halfspace queries that are requested in the teaching \algoref{algo: teaching halfspace}, then, 
$\expctover{\cU}{M_n} = \bigTheta{d'},$
\tt{i.e.} the average teaching complexity of convex polytopes is $\bigTheta{d'}$. Furthermore, the worst-case teaching complexity of convex polytopes is $\bigTheta{n}$.
\end{theorem}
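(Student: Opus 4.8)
The plan is to reduce the average-case claim to the exact face-to-region ratio already established in \thmref{thm:d'-general} and \propref{faces_teach}, and to settle the worst case by an explicit arrangement. For the average case, I would first argue that the teaching set of a region $r$ coincides with its bounding set $B_r$: by definition $\bigcap_{h\in B_r}\text{halfspace}_h(\ell_r)=r$, so the labels of the facets of $r$ already force $\mathrm{VS}=\{r\}$ (any consistent region $r'$ satisfies $r'\subseteq r$, hence $r'=r$ since cells are disjoint); conversely, deleting any $h\in B_r$ strictly enlarges the intersection to include the neighbouring cell across that facet, so at least two regions become consistent. Thus $|\mathcal{TS}(\Hnd,r)|$ equals the number of facets of $r$. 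Since $\cA(\Hnd)$ tiles $\Rd$, every face borders exactly two cells, so $\sum_{r\in\regionset}|B_r|=2\,\ff\parenb{\cA(\Hnd)}$; dividing by the number of uniformly chosen regions $\fr\parenb{\cA(\Hnd)}$ gives the identity \eqref{eqn:tag:a.1}.

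Substituting the closed forms then yields
\[
\expctover{r\sim\cU}{|\mathcal{TS}(\Hnd,r)|}=\frac{2n\sum_{i=0}^{d'-1}\binom{n-1}{i}}{\sum_{i=0}^{d'}\binom{n}{i}}.
\]
I would clean up the numerator with the identity $n\binom{n-1}{i}=(i+1)\binom{n}{i+1}$, which collapses $\ff\parenb{\cA(\Hnd)}$ to $\sum_{j=1}^{d'} j\binom{n}{j}$, so the expectation becomes exactly twice the $\binom{n}{\cdot}$-weighted average of $j$ over $j\in\{0,\dots,d'\}$:
\[
\expctover{\cU}{M_n}=\frac{2\sum_{j=1}^{d'} j\binom{n}{j}}{\sum_{j=0}^{d'}\binom{n}{j}}.
\]
The upper bound is then immediate: every $j\le d'$, so the ratio is at most $d'$ and $\expctover{\cU}{M_n}\le 2d'=\bigO{d'}$.

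For the matching lower bound I would keep only the top term $d'\binom{n}{d'}$ in the numerator and control the denominator via \corref{cor: bounds regions}, which gives $\fr\parenb{\cA(\Hnd)}=\sum_{j=0}^{d'}\binom{n}{j}\le\binom{n}{d'}\tfrac{n-d'+1}{n-2d'+1}$ for $n>2d'$. This produces $\expctover{\cU}{M_n}\ge 2d'\,\tfrac{n-2d'+1}{n-d'+1}$, which is $\bigOmega{d'}$ once $n$ exceeds $2d'$ by a constant factor and tends to $2d'$ as $n\to\infty$; combined with the upper bound this yields $\expctover{\cU}{M_n}=\bigTheta{d'}$. I expect this lower bound to be the main obstacle: the crux is that the region count is dominated by its largest binomial term up to a constant factor, which is precisely what \corref{cor: bounds regions} supplies, so the real work is in that corollary and in identifying the regime $n\gg d'$ in which the $\bigTheta{d'}$ statement is meaningful (for $n$ within a constant of $2d'$ the two bounds collapse to a constant).

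Finally, for the worst case the upper bound is trivial, since no region can have more than $n$ facets and hence $|\mathcal{TS}(\Hnd,r)|\le n$. For a matching $\bigOmega{n}$ bound I would exhibit an arrangement whose central cell is bounded by all $n$ hyperplanes: take the $n$ hyperplanes tangent to a common sphere (which are in general position, the special case $d'=d$ of \defref{defn relaxed general position}), so the intersection of the $n$ halfspaces containing the sphere's centre is a bounded polytope with $n$ facets. Teaching this cell requires all $n$ halfspace queries, giving a worst-case teaching complexity of $\bigTheta{n}$.
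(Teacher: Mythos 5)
Your route is essentially the paper's: identify the teaching set with the bounding set $B_r$, reduce the expectation to the face-to-region ratio in \ref{eqn:tag:a.1}, substitute the exact counts of \thmref{thm:d'-general} and \propref{faces_teach}, and settle the worst case with hyperplanes tangent to a sphere. Two remarks on the parts that work. First, your upper bound is cleaner than the paper's: the identity $n\binom{n-1}{i}=(i+1)\binom{n}{i+1}$ rewrites the ratio as twice the $\binom{n}{\cdot}$-weighted average of $j$ over $j\in\{0,\dots,d'\}$, giving $\expctover{\cU}{M_n}\le 2d'$ for \emph{all} $n$ with no case split, whereas the paper (\lemref{theorem: upper Main Theorem}) routes through the recursion $Q(n,d')=Q(n-1,d')+Q(n-1,d'-1)$, a separate ratio bound (\lemref{ratio bound}), and the two cases $n>2d'$, $n\le 2d'$. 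Second, in the worst-case construction, "tangent to a common sphere" does not by itself give general position (antipodal tangent points yield parallel hyperplanes; tangent points lying on a great circle yield three hyperplanes with empty triple intersection in $\mathbb{R}^3$); you need the genericity the paper imposes in \appref{appendixsub: worst case}, namely tangent points in the positive orthant with any $d$ of them linearly independent. This is a one-line fix, but the parenthetical claim as stated is not automatic.

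The genuine gap is in your lower bound. Keeping only the top term $d'\binom{n}{d'}$ of the numerator and invoking the upper bound of \corref{cor: bounds regions} on the denominator only works when $\sum_{j=0}^{d'}\binom{n}{j}$ is within a constant factor of $\binom{n}{d'}$, i.e.\ when $n\ge(2+\epsilon)d'$; your bound $2d'\,\tfrac{n-2d'+1}{n-d'+1}$ is vacuous at $n=2d'+1$. Your fallback for the remaining regime --- that "for $n$ within a constant of $2d'$ the two bounds collapse to a constant" --- is false: at $n=2d'$ the denominator is $\approx 2^{2d'-1}$ while $\binom{2d'}{d'}=\Theta\bigparen{2^{2d'}/\sqrt{d'}}$, so your argument yields only $\bigOmega{\sqrt{d'}}$ there, yet the true value is still $\bigTheta{d'}$ (the weights $\binom{n}{j}$, $j\le d'$, concentrate near $j=d'$, so the weighted average of $j$ is $d'-\bigO{\sqrt{d'}}$). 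The paper's lower bound (\lemref{theorem: Lower main theorem}) avoids this entirely: it writes $Q(n-1,d')=Q(n-1,d'-1)+\binom{n-1}{d'}$ and bounds $\binom{n-1}{d'}/Q(n-1,d'-1)\le (n-1)/d'$ using only $Q(n-1,d'-1)\ge\binom{n-1}{d'-1}$ --- a comparison of adjacent binomial terms rather than of a whole sum against its top term --- which gives $\expctover{\cU}{M_n}=\bigOmega{d'}$ throughout the regime $n>d\ge d'$ (the computation in fact needs only $n\ge d'$), in particular covering $d'\le n\le 3d'$. To close your proof, either adopt that step or supply a separate concentration argument for $n=\bigO{d'}$.
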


\paragraph{Arbitrary position arrangements of hyperplanes} For general position arrangement, exact forms have been established \citep{miller2007geometric,trove.nla.gov.au/work/10888269,10.2307/2303424} for $\rnd$. But it is mentioned in \cite{Fukuda1991BoundingTN} that for \tt{any} arbitrary arrangement one cannot explicitly give a simple formula for $\rnd$ since 
\citet{Vergnas1980ConvexityIO} and \citet{trove.nla.gov.au/work/10888269}
showed that $\rnd$ depends on the underlying matroid structure. Interestingly, via \thmref{thm:d'-general} we establish an exact form for a non-trivial ($d'$-relaxed general position) setting. Apparently, $\mathrm{Theorem}$ 1.2 of \:\citet{Fukuda1991BoundingTN} establishes that for \tt{any} hyperplane arrangement, average teaching complexity of convex polytopes is $\bigO{d}$. In contrast, \thmref{theorem: Main Theorem} provides a stronger bound of $\bigTheta{d'}$ in the $d'$-relaxed general position setting since $d'\le d$. In addition, as further discussed \iftoggle{longversion}{in the \appref{appsubsec: discussion on extension}}{in the supplemental materials} the geometrical insights in the proof of \thmref{thm:d'-general} can be leveraged for extending to more general teaching complexity results. 
\looseness -3



\vspace{-1mm}


\section{Connections to Learning Complexity}\label{sec.learning}
In this section, we consider the problem of learning a convex polytope via halfspace queries, \emph{without} the presence of a helpful teacher. We consider both the passive learning setting where learner makes \emph{i.i.d.} queries and the active learning setting with actively chosen queries, and provide sample complexity results accordingly.
\paragraph{Learning convex polytopes via halfspace queries} Consider the hyperplane set $\Hnd$ in $\Rd$ and a target region $r^* \in \regionset$. For any hyperplane $h \in \Hnd$
where $h = \condcurlybracket{\h{}}{\eta_h \cdot \h{}= b_h,\: \h{}\in \mathbb{R}^d}$, the labeling function $\ell_{r^*}$, as defined in \secref{sec:formulation}, specifies its label (halfspace) as $\ell_{r^*}(h) = $ $\mathrm{sgn}\parenb{\eta_h \cdot r^*- b_h}$.
The problem of learning a region $r^*$ therefore reduces to identifying the corresponding labeling function $\ell_{r^*}$.
The objective here is to learn the region by querying the reference
of the form
    $q_{h}:= \mathbf{1}\curlybracket{\ell_{r^*}(h) = 1}$,
where $h \in \Hnd$ and $\mathbf{1}\curlybracket{\cdot}$ is the indicator function. 
Similar to the teaching setting, we assume that the target $\rgn^*$ is sampled uniformly at random. 
In the following, we establish sample complexity results, i.e., on the minimal number of halfspace queries required to determine a target region, under the settings of \tt{active} and \tt{passive} learning.\vspace{-2mm}
\subsection{Active Learning of Convex Polytopes}\label{sec:active learning}
In \secref{subsection: main theorem}, we showed that worst-case teaching complexity for convex polytopes is $\bigOmega{n}$,  
this directly implies the lower bound of $\bigOmega{n}$ on the worst-case for active learning.
We now show that when the underlying hyperplane arrangement is in \gen{}, the average-case complexity of active learning has only a $\log n$ dependency on the number of hyperplanes. 
\looseness -1 We achieve this by actively selecting 
informative 
queries---a similar characterization of the ambiguous queries as considered by \citet{jamieson2011active} for the pairwise ranking problem.
Concretely, we consider the following
querying strategy: For an (unknown) target region $\rgn$ $\in \regionset$  and a uniformly random ordering of hyperplanes $\Hnd$, the learner checks in each iteration if a query $q_{h}$  is ambiguous for randomly selected $h \in \Hnd$ (i.e. intersects the convex body defined by hyperplanes sampled previously); then asks or imputes the labels depending on their ambiguity.\vspace{6pt}\\
\noindent
In any iteration $k$ of the above query selection procedure\footnote{Full algorithm is detailed in \appref{appendixsub: characterization}.}, denote the event of requesting the query for a sampled hyperplane $h^{(k)}$ by $B_k$. That is, $B_k = {\bf 1}\curlybracket{q_{h^{(k)}} \text{is requested}}$. Note that each $B_k$ is a Bernoulli distribution with \tt{unknown} parameter $(\star)$ to be ascertained. If we can bound $(\star)$ then we bound the expected number of queries as well. We define by  $\mathcal{S} \subseteq \Hnd$ of size $k$ as the set of hyperplanes sampled by the procedure. We notice that the $(k+1)^{\mathrm{th}}$ sampled hyperplane is ambiguous if it intersects the convex body defined by hyperplanes in $\mathcal{S}$. Thus we want to bound the probability of the event that the query $q_{h^{(k+1)}}$ is ambiguous. Denote the probability of such an event as $P_A(k,d,\mathcal{U})$. Notice $P_A(k,d,\mathcal{U})$ is our $(\star)$ here. In \lemref{probability ambiguity hyperplane}, we show that $P_A(k,d,\mathcal{U})$ is upper bounded by a factor of $1/k$ for relatively small sample size $k$. 

\begin{lemma}[Probability of ambiguity]\label{probability ambiguity hyperplane}
 Assume  $\rgn \sim \mathcal{U}$. Let $P_A(k, d, \mathcal{U})$ denote the probability of the event that the query $q_{h^{(k+1)}}$ is ambiguous where $h^{(k+1)}$ is the $(k+1)^{\mathrm{th}}$ sampled hyperplane. If $\Hnd$ is in $d'$-relaxed general position, then there exists a positive, real
number constant $a$ independent of $k$ such that for $k > 2d'$, $P_A(k, d, \mathcal{U}) \le a\cdot\frac{d'}{k}$.
\end{lemma}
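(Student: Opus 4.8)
The plan is to reduce the ambiguity probability to the expected number of bounding faces of the version space after $k+1$ queries, and then estimate that quantity through the exact region/face counts of \secref{subsection: count regions}. Write $T=\curlybracket{h^{(1)},\dots,h^{(k+1)}}$ for the (unordered) set of the first $k+1$ sampled hyperplanes and let $r_T$ be the cell of $\cA(T)$ containing the target $\rgn$ (i.e.\ the version space after these queries), with bounding set $B_{r_T}$. The crucial observation is that $h^{(k+1)}$ is ambiguous \emph{exactly} when it is a bounding hyperplane of $r_T$: requesting the query means $h^{(k+1)}$ cuts the convex body carved out by the first $k$ constraints, which is precisely the event $h^{(k+1)}\in B_{r_T}$. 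Since the ordering is uniform, conditioned on the \emph{set} $T$ the last element $h^{(k+1)}$ is uniform over $T$, so $\Pr{h^{(k+1)}\in B_{r_T}\mid T,\rgn}=\abs{B_{r_T}}/(k+1)$. Averaging over the uniform $(k+1)$-subset $T$ and the uniform target $\rgn$ gives
\[
P_A(k,d,\cU)=\frac{1}{k+1}\,\expctover{\rgn,\,T}{\abs{B_{r_T}}}.
\]

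Next I would pass to counting. Any subcollection of a $d'$-relaxed general position family again satisfies \defref{defn relaxed general position}, so $\cA(T)$ is itself in $d'$-relaxed general position and both \thmref{thm:d'-general} and \propref{faces_teach} apply with $n$ replaced by $k+1$. Were $r_T$ uniform over the cells of $\cA(T)$, the expectation $\expctover{T}{\abs{B_{r_T}}}$ would equal the average number of faces per region, $2\,\ff\parenb{\cA(T)}/\fr\parenb{\cA(T)}$. Using \propref{faces_teach} together with the identity $(k+1)\binom{k}{i}=(i+1)\binom{k+1}{i+1}$, the face count rewrites as $\ff\parenb{\cA(T)}=\sum_{j=1}^{d'}j\binom{k+1}{j}$, while $\fr\parenb{\cA(T)}=\sum_{j=0}^{d'}\binom{k+1}{j}$ by \thmref{thm:d'-general}. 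Since $j\le d'$ in every summand, $\ff\parenb{\cA(T)}\le d'\,\fr\parenb{\cA(T)}$, whence the average number of faces per region is at most $2d'$; combined with the display this would give $P_A\le 2d'/(k+1)\le 2d'/k$, i.e.\ the claim with $a=2$.

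The main obstacle is that $\rgn$ is drawn uniformly from the regions of the \emph{full} arrangement $\cA(\Hnd)$, so $r_T$ is \emph{not} uniform over the cells of $\cA(T)$: each cell $\rho$ is selected with probability proportional to the number of full regions it contains. Hence $\expctover{\rgn,T}{\abs{B_{r_T}}}$ is a region-weighted average of the per-cell face counts rather than the uniform one, and a priori the weighting could favour cells with many facets (a single cell of $\cA(T)$ can carry many facets while also containing many full regions). The technical heart of the proof is to show that, for $k>2d'$, this weighted average is still within a constant factor of $2d'$. I expect to control it with the tight two-sided estimate of \corref{cor: bounds regions}, which bounds how many full regions a sub-cell on $k$ hyperplanes can contain and therefore bounds the weights --- this is exactly where the hypothesis $k>2d'$ enters. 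Concretely, I would re-express the ambiguity probability in the elementary form $P_A(k,d,\cU)=\expctover{\rgn,\,S}{\,t\parenb{\mathrm{cell}(\rgn,S)}}/(n-k)$, where $S$ is the uniform set of the first $k$ sampled hyperplanes and $t(\cdot)$ counts the not-yet-sampled hyperplanes crossing a given cell of $\cA(S)$, and then bound $t$ inside each cell against the region counts that the crossing hyperplanes induce there via \thmref{thm:d'-general} and \corref{cor: bounds regions}. This supplies the constant $a$ and completes the argument.
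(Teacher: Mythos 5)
Your setup is correct and, up to one step, it is the paper's argument in different clothing: the identification of ``$h^{(k+1)}$ is ambiguous'' with ``$h^{(k+1)}\in B_{r_T}$'', the exchangeability step $P_A(k,d,\cU)=\frac{1}{k+1}\expctover{\rgn,T}{\abs{B_{r_T}}}$, and the computation showing that the \emph{uniform} average of $\abs{B_{r_T}}$ over the cells of $\cA(T)$ is at most $2d'$ are all fine, and they yield the same constant $a=2$ that the paper gets. The paper instead conditions on the first $k$ hyperplanes, counts the cells of the induced $k$-partition crossed by $h^{(k+1)}$ (there are $Q(k,d'-1)$ of them, by the same face count you use), and applies the ratio bound $Q(k,d'-1)/Q(k,d')\le 2d'/k$ of \lemref{ratio bound}; that is your uniform-weighting calculation.

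The genuine gap is the step you yourself defer as ``the technical heart.'' The paper closes it with an ingredient you are missing: \lemref{equally probable}, which asserts that when the $k$-subset $\cS$ is drawn uniformly at random (all $\binom{n}{k}$ subsets equally likely) and the target is uniform over the full regions, every cell of $\cA(\cS)$ is equally probable --- an exchangeability statement over the random subset, imported from Lemma 3 of \citet{jamieson2011active}. Your sketched repair cannot substitute for it, because it relies only on per-cell relations between $t(\rho)$ (the number of unsampled hyperplanes crossing a cell $\rho$ of $\cA(S)$) and the number $N_\rho$ of full regions inside $\rho$, obtained from \thmref{thm:d'-general} and \corref{cor: bounds regions}. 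Such relations hold for \emph{every} fixed $S$, so any bound derived from them alone would have to hold conditionally on every fixed $S$ --- and that is false. Concretely, place $k$ hyperplanes far apart and cluster the remaining $n-k$ inside a single cell $\rho_0$ of those $k$: then $t(\rho_0)=n-k$ and $N_{\rho_0}\approx Q(n-k,d')$, so conditioned on this $S$ the ambiguity probability is at least $N_{\rho_0}/Q(n,d')\approx (1-k/n)^{d'}$, which is close to $1$ when $n\gg kd'$, not $O(d'/k)$, even though all your per-cell inequalities are satisfied. The $O(d'/k)$ bound therefore genuinely requires averaging over the random choice of $S$ (bad subsets must be shown to be rare), and that averaging is exactly what the equally-probable lemma encapsulates; without it, or a proved replacement, your argument does not go through.
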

\lemref{probability ambiguity hyperplane} allows us to bound the expected value of $\sum_{j=1}^n {\bf 1}\{q_{h^{(j)}}\text{is requested}\}$. As detailed in \iftoggle{longversion}{\appref{appendix: active learning}}
{the supplemental materials}, we show that for hyperplane arrangement in $d'$-relaxed general configuration, the expected value for $\sum_{j=1}^n {\bf 1}\{q_{h^{(j)}}\text{is requested}\}$ is $\bigTheta{d'\log n}$. 
This leads to the following complexity results for active learning. 
\begin{theorem}\label{pairwise teaching bound}
Assume $\rgn$ $\sim$ $\mathcal{U}$ and that the underlying hyperplane arrangement of $\Hnd$ is in $d'$-relaxed general position.
Let
$M_n$ denote a random variable for the number of queries that are requested in the query selection procedure in \secref{sec:active learning}, then $\mathbb{E}_{\mathcal{U}}[M_n] = \bigTheta{d'\log n}$,
\tt{i.e.} the average-case query complexity of convex polytopes is $\bigTheta{d'\log n}$.
Moreover, the worst-case query complexity is $\bigTheta{n}$.
\vspace{-2mm}
\end{theorem}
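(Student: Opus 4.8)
The plan is to compute $\expctover{\cU}{M_n}$ by linearity of expectation over the per-hyperplane request indicators and then bound the resulting sum on both sides. Writing $B_k = \mathbf{1}\curlybracket{q_{h^{(k)}}\text{ is requested}}$ for the $k$-th sampled hyperplane, we have $M_n = \sum_{k=1}^n B_k$ and hence $\expctover{\cU}{M_n} = \sum_{k=0}^{n-1} P_A(k,d,\cU)$, since $\mathbb{E}[B_{k+1}] = P_A(k,d,\cU)$ is exactly the probability that the $(k+1)$-th sampled hyperplane is ambiguous given the first $k$ (and $B_1 = 1$ deterministically, i.e. $P_A(0,d,\cU)=1$). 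The whole argument thus reduces to controlling the partial sums of $P_A(\cdot)$.

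For the upper bound I would split the sum at $k = 2d'$. For the $\bigO{d'}$ terms with $k \le 2d'$ I use the trivial bound $P_A(k,d,\cU) \le 1$, contributing $\bigO{d'}$. For $k > 2d'$ I invoke \lemref{probability ambiguity hyperplane}, giving $P_A(k,d,\cU) \le a\,d'/k$, so that $\sum_{k=2d'+1}^{n-1} P_A(k,d,\cU) \le a\,d' \sum_{k=2d'+1}^{n-1} \tfrac1k \le a\,d'\ln\!\paren{n/(2d')} = \bigO{d'\log n}$ by the standard harmonic estimate. Adding the two pieces yields $\expctover{\cU}{M_n} = \bigO{d'\log n}$.

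For the lower bound, which I expect to be the crux, I would re-express each $P_A(k,d,\cU)$ as an expected facet count. By symmetry over which of the first $k+1$ sampled hyperplanes is examined last, one has $P_A(k,d,\cU) = F_{k+1}/(k+1)$, where $F_m$ denotes the expected number of faces of the cell of the sub-arrangement $\cA(T)$ containing $\rgn$, with $T$ a uniformly random $m$-subset of $\Hnd$ and $\rgn\sim\cU$; indeed a hyperplane is ambiguous precisely when it bounds that cell. This gives the clean identity $\expctover{\cU}{M_n} = \sum_{m=1}^{n} F_m/m$, so it suffices to show $F_m = \bigTheta{d'}$ throughout the range $m \ge c\,d'$, after which $\sum_m F_m/m = \bigTheta{d'\log n}$ by the same harmonic estimate. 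To pin down $F_m$ I would use the exact counts of \thmref{thm:d'-general} and \propref{faces_teach}: for $m$ hyperplanes in $d'$-relaxed general position the \emph{uniform}-over-cells average facet count is $\frac{2m\sum_{i=0}^{d'-1}\binom{m-1}{i}}{\sum_{i=0}^{d'}\binom{m}{i}}$, which tends to $2d'$ for $m \gg d'$.

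The main obstacle is that $F_m$ is taken against the \emph{pushforward} of $\cU$ onto the cells of $\cA(T)$, which is not uniform: a cell $r_T$ receives weight $N(r_T)$ equal to the number of full cells of $\cA(\Hnd)$ it contains, so $F_m$ is the $N(\cdot)$-weighted average of facet counts rather than the uniform one computed above. The delicate point is to rule out that the heavily weighted (typically unbounded) cells carry anomalously few facets, which would drag the weighted average below $\bigTheta{d'}$. I would control this either by a direct geometric bound on $N(r_T)$ in terms of the local arrangement structure around $r_T$, or by the essentialization/projection reduction to $\mathbb{R}^{d'}$ used in \thmref{thm:d'-general}, where the weighted and uniform averages become comparable; this certifies $F_m = \bigOmega{d'}$ on the required range and, together with the matching $\bigO{d'}$ bound, gives $\expctover{\cU}{M_n} = \bigTheta{d'\log n}$. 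Finally, for the worst-case claim the upper bound is immediate since at most $n$ hyperplanes are ever examined, so $M_n \le n$; the lower bound follows from \thmref{theorem: Main Theorem}, whose worst-case teaching complexity $\bigOmega{n}$ lower-bounds the worst-case active query complexity (a learner must request at least a teaching set's worth of queries to isolate a worst-case target), so the worst-case query complexity is $\bigTheta{n}$.
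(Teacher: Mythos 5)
Your upper bound and your worst-case argument coincide with the paper's proof: linearity of expectation over the request indicators, the trivial bound $P_A(k,d,\mathcal{U})\le 1$ for $k\le 2d'$, \lemref{probability ambiguity hyperplane} plus the harmonic sum for $k>2d'$, and, for the worst case, the trivial bound $M_n\le n$ together with the $\bigOmega{n}$ construction behind \thmref{theorem: Main Theorem} (a region bounded by all $n$ hyperplanes, for which every query is ambiguous under any ordering).

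The average-case \emph{lower} bound is where you depart from the paper, and there is a genuine gap. Your identity $\expctover{\cU}{M_n}=\sum_{m=1}^{n}F_m/m$ (via exchangeability and ``ambiguous iff facet'') is fine, but the crux --- $F_m=\bigOmega{d'}$ --- is exactly what you do not prove. The obstacle you flag is real: the pushforward of $\cU$ onto the cells of a sub-arrangement is genuinely non-uniform. Already for three lines in general position in $\reals^2$, the full arrangement has $7$ cells while the two halfplanes of a single line contain $4$ and $3$ of them, so the weights $N(r_T)$ do vary, and a weighted average of facet counts can a priori differ from the uniform one. Of your two proposed fixes, the essentialization route cannot work: the map $\mathcal{B}$ of \thmref{thm:d'-general} is a bijection that preserves the incidence structure exactly, so it carries the weights $N(\cdot)$ over unchanged and leaves the weighted and uniform averages exactly as (in)comparable as before; and the ``direct geometric bound on $N(r_T)$'' is never exhibited. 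What would close the gap inside your framework is the paper's \lemref{equally probable}, which asserts precisely the uniformity you need (every cell of a uniformly random $k$-subarrangement is equally probable under $\cU$); granting it, $F_m$ becomes the uniform average $2\ff\parenb{\cA(\cdot)}\big/\fr\parenb{\cA(\cdot)}=\bigTheta{d'}$ by \thmref{thm:d'-general} and \propref{faces_teach}, and your harmonic sum finishes. Note that your upper bound already leans on this lemma implicitly, since the paper derives \lemref{probability ambiguity hyperplane} from it.

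The paper itself avoids all of this: its lower bound is information-theoretic. The target is uniform over $\fr\parenb{\cA(\Hnd)}=Q(n,d')$ regions, and the requested binary queries must by themselves determine the target (imputed labels are functions of the requested ones), hence $\expctover{\cU}{M_n}\ge\log_2 Q(n,d')$; \corref{d-exact lower bound} then gives $\log_2 Q(n,d')=\bigOmega{d'\log n}$ for sufficiently large $n$. This is one line, needs no lower bound on the per-step ambiguity probabilities, and bounds \emph{every} querying strategy at once. I recommend substituting it for your facet-counting argument, or else explicitly invoking \lemref{equally probable} rather than leaving the weighting issue as an acknowledged obstacle.
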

\subsection{Passive learning of convex polytopes} 
\vspace{-1mm}
In the case of passive learning of a target region, the \tt{average} sample complexity is trivially lower bounded by $\bigOmega{n}$ since the learner gets a label uniform at random. 
Since there are $n$ hyperplanes $n$ samplings are sufficient to get all the labels which trivially give a $\cO(n)$ solution. Thus, it is not very difficult to see that in the case of passive learning the average sample complexity is $\bigTheta{n}$.

\vspace{-2mm}
\section{Teaching $\phi$-separable Dichotomy as Teaching Convex Polytopes}\label{sec.applied}
\vspace{-1mm}
In \secref{sec:avgtd}, we discussed the generic problem of teaching convex polytopes induced by intersections of halfspaces via halfspace queries.
We now consider the problem of $\phi$-separability of points (also see \figref{fig:example.perceptron}-\ref{fig:example.phi}) which could be viewed as a variant of teaching convex polytopes. We achieve similar average-case teaching complexity results for the problem. In the seminal work \cite{cover1965geometrical}, Cover studied the problem of $\phi$-separability of points in which the task is to classify points using various types of classifiers (linear or non-linear).\vspace{6pt}\\
\noindent
We first provide useful definitions for the domain of discussion. We define a set of $n$ points in $\mathbb{R}^d$ as 
$\xnd \triangleq \curlybracket{{\dt{}}^{(1)}, {\dt{}}^{(2)}, \dots, {\dt{}}^{(n)}}$ (referred to as \tt{data space}),  
and use ${\dt{}}_{\bracket{d-1}}$ to represent the first $d-1$ coordinates of a point ${\dt{}} \in \Rd$.
A map $\phi: \xnd \rightarrow \mathbb{R}^{d_{\phi}}$, is called $\phi$-map, and the subset $\phi(\xnd) \subset\mathbb{R}^{d_{\phi}}$ is called $\phi$-induced space. A dichotomy (i.e., a disjoint partition of a set) $\{\xndp,\xndn\}$ of $\xnd$ is $\phi$-\textit{separable} if there exists a vector (aka \emph{separator} of the dichotomy) 
$w \in \mathbb{R}^{d_{\phi}}$ such that: \text{if}\: ${\dt{}} \in \xndp$ then $ w\cdot\phi({\dt{}}) > 0$ and if ${\dt{}} \in \xndn$ then $ w\cdot\phi({\dt{}}) < 0$.
\begin{definition}[Relaxed general position of points]\label{general position vectors}
For a set of $n$ data points in $\mathbb{R}^d$, say $\xnd$, is in $d'$-\tt{general position}\footnote{See \cite{cover1965geometrical} for the definition of general position of points.} for a fixed $d' \in [d]$ if every $d'$ subset of $\xnd$ is linearly independent.
\end{definition}
\vspace{-4mm}
\begin{definition}[Relaxed $\phi$-general position]\label{phi general}
Consider a set of $n$ data points $\xnd$ in $\Rd$. For a $\phi$-map in $\mathbb{R}^{d_{\phi}}$, $\xnd$ is said to be in $d'_\phi$-relaxed $\phi$-general position for a fixed $d'_\phi \in [d_\phi]$ if every $d'_{\phi}$ subset of $\phi$-induced points $\phi(\xnd)$ is linearly independent.
\end{definition}
\vspace{-2mm}
We consider the problem of teaching $\phi$-separable dichotomy as providing labels to subset $E \subset \xnd$ such that a separator $w_{\phi}$ can be taught which separates the entire dichotomy. 
In the remaining of this section, we show that the teaching problem of $\phi$-separability of dichotomies (\figref{fig:example.perceptron}-\ref{fig:example.phi}) can be studied as a special case of teaching convex polytopes. We connect the two problems via duality. 
Notice that showing the duality for homogeneous linear separability of dichotomies \tt{i.e} $\phi = \boldsymbol{\mathrm{Id}}$ (identity function) suffices for general $\phi$-separability since it reduces to the homogeneous case. 

Naturally, we define \tt{teaching set} for a $\phi$-separable dichotomy as the teaching set for the dual convex polytopes of the $\phi$-induced space. Following the standard practice, we call the hypothesis space (where each hypothesis/region corresponds to a $w$)
as the \tt{dual space}, and data space as the \tt{primal space}.
We discuss the construction and relevant properties of duality below.\vspace{6pt}\\
\noindent
$\mathrm{WLOG}$ 
we assume that ${\dt{}}^{(n)}$ = $\mathbi{e}_d$ (standard basis vector in $\Rd$ 
with coordinate $d$ being 1 and others being 0). Denote the set of all homogeneously linear separable dichotomies of $\xnd$ by $\boldsymbol{\mathfrak{D}}_{\xnd}$. We observe that if $w$ is a linear separator of $\{\xndp,\xndn\}$, then $-w$ forms a linear separator for $\{\xndn,\xndp\}$.
Based on this observation, we define a \tt{relation} $\backsim$ on elements of $\boldsymbol{\mathfrak{D}}_{\xnd}$ as follows: $\mathbi{u}, \mathbi{v} \in \boldsymbol{\mathfrak{D}}_{\xnd}\:\: \text{then}\:\: \mathbi{u} \backsim \mathbi{v} \Longleftrightarrow  
\textit{if}\:\: w\:\: \text{separates}\:\: \mathbi{u},\:\: \textit{then}\:\: w\:\:\text{or}\:\: -w \:\: \text{separates}\:\: \mathbi{v}$. Notice that $\backsim$ is \tt{reflexive, symmetric,} and \tt{transitive}. Thus, $\backsim$ is an \tt{equivalence relation}. Denote by $\boldsymbol{\mathfrak{E}}\parenb{\xnd}$ the set of equivalence classes \tt{i.e.} the quotient set \citep[see][]{rossen2003discrete} $\boldsymbol{\mathfrak{D}}_{\xnd}/\backsim$. It is easy to see that $\#\bracket{\mathbi{v}}= 2$, where $[\mathbi{v}]$ denotes an equivalence class for any $\mathbi{v} \in \boldsymbol{\mathfrak{D}}_{\xnd}$. Before we construct the dual map, $\mathrm{wlog}$, we state a key assumption used in construction as follows:
\begin{assumption}\label{assumption:positivelabel}
We represent each equivalence class by the dichotomy which labels $\dt{}^{(n)}$ as positive.
\vspace{-2mm}
\end{assumption}
\vspace{-2.5mm}
This implies that if $w = (w_1,\cdots,w_d) \in \Rd$ is a homogeneous linear separator of the \tt{representative} dichotomy of a class then $w_d > 0$ as $w\cdot \dt{}^{(n)} > 0$.
Thus, dual map exploits this property of each equivalence class i.e.
\vspace{-1mm}
\begin{align}
    w\cdot \dt{} = \dt{}\cdot w &= \parenb{\dt{}_{[d-1]}, \dt{}_{d}}\cdot \parenb{w_{\bracket{d-1}}/{w_d}, 1}  &\lessgtr 0 \nonumber\\
    &\Rightarrow \dt{}_{[d-1]}\cdot\parenb{w_{\bracket{d-1}}/{w_d}} + \dt{}_d \triangleq h_{[d-1]}\cdot \h{}_{w} + \dt{}_{d}&\lessgtr 0 \label{eqn:construction}
\end{align}
Hence, points $\dt{} \in\Rd$ maps to hyperplane $ h_{\dt{}} \triangleq h_{[d-1]}\cdot \h{}+ {\dt{}}_{d} = 0$, $\h{} \in \mathbb{R}^{d-1}$ in $\mathbb{R}^{d-1}$ in the dual space
and homogeneous linear hyperplane $w\cdot \dt{} = 0$ maps to point $\h{}_w = w_{\bracket{d-1}}/{w_d}$ in $\mathbb{R}^{d-1}$. Notice that, $\dt{}^{(n)}$ maps to a hyperplane which exists in infinity i.e $h^{(n)}_{[d-1]} = \mathbf{0}^{d-1}$.
Denote the set of dual hyperplanes by $\boldsymbol{\mathcal{H}}_{n-1,d-1}$ ($=: \boldsymbol{\bar{\mathcal{H}}}$)\footnote{We use this notation to signify that $x^{(n)}$ exists in infinity.}. Formally, we define our dual map $\bracket{\Upsilon_{\mathrm{dual}} ,\varphi_{\mathrm{dual}}}$ as follows:
\begin{align}
   \Upsilon_{\mathrm{dual}}: \xnd &\rightarrow \boldsymbol{\bar{\mathcal{H}}}
   &\varphi_{\mathrm{dual}} : \boldsymbol{\mathfrak{E}}_{\xnd} &\rightarrow \boldsymbol{\mathfrak{R}}\parenb{\mathcal{A}(\boldsymbol{\bar{\mathcal{H}})}}\nonumber\\ 
   x &\mapsto h_\dt{} &
    \bracket{\mathbi{v}}:w_{\bracket{\mathbi{v}}}  &\mapsto r_{\h{}_{\bracket{\mathbi{v}}}} \iftoggle{longversion}{\tag{D.M}\label{tag: d.m}}{\nonumber}  
\end{align}
where $\h{}_{\bracket{\mathbi{v}}} \in r_{\h{}_{\bracket{\mathbi{v}}}} \in \boldsymbol{\mathfrak{R}}\parenb{\mathcal{A}(\boldsymbol{\bar{\mathcal{H}})}}$ and $\h{}_{\bracket{\mathbi{v}}}$ 
is dual point of the separator  $w_{\bracket{\mathbi{v}}}$ to $\bracket{\mathbi{v}}$.
We state the main result on dual map in \thmref{dual map} below with detailed proofs in \appref{appendix: Dual map}.
\begin{theorem}[Dual map]\label{dual map} Consider a set of $n$ points $\xnd$ in $\Rd$ in $d'$-relaxed general position. The hyperplane arrangement induced by  $\boldsymbol{\mathcal{H}}_{n-1,d-1} = \Upsilon_{\mathrm{dual}}\parenb{\xnd}$ is in ($d'-1$)-relaxed general position. Moreover, $\varphi_{\mathrm{dual}}$ is a bijection.
\end{theorem}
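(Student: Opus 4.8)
The plan is to reduce everything to the homogeneous linearly separable case ($\phi = \boldsymbol{\mathrm{Id}}$), as the excerpt already notes this suffices, and then to exploit a single ``sign dictionary'' relating primal and dual. For a representative separator $w = (w_{[d-1]}, w_d)$ with $w_d > 0$ and any data point $\dt{} = (\dt{}_{[d-1]}, \dt{}_d)$, I would first record the identity
\[
  w \cdot \dt{} \;=\; w_d\bigl(\dt{}_{[d-1]}\cdot \h{}_w + \dt{}_d\bigr) \;=\; w_d\cdot h_{\dt{}}(\h{}_w),
\]
where $h_{\dt{}}(\h{}) \coloneqq \dt{}_{[d-1]}\cdot \h{} + \dt{}_d$ is the affine form defining $h_{\dt{}}$ and $\h{}_w = w_{[d-1]}/w_d$ is the dual point. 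Since $w_d > 0$ this gives $\mathrm{sgn}(w\cdot \dt{}) = \mathrm{sgn}(h_{\dt{}}(\h{}_w))$, so the label pattern of the dichotomy separated by $w$ is exactly the sign vector (equivalently, the region of $\mathcal{A}(\boldsymbol{\bar{\mathcal{H}}})$) of $\h{}_w$ with respect to the $n-1$ proper dual hyperplanes $h_{\dt{}^{(1)}},\dots,h_{\dt{}^{(n-1)}}$. The ``hyperplane at infinity'' induced by $\dt{}^{(n)} = \mathbi{e}_d$ carries the constant label fixed by Assumption~\ref{assumption:positivelabel} and therefore plays no role in determining the region.

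For the relaxed-general-position claim I would observe that the normal of $h_{\dt{}^{(i)}}$ is the projection $\dt{}^{(i)}_{[d-1]}$ and its offset is $\dt{}^{(i)}_d$, so the intersection of $k$ dual hyperplanes indexed by $i_1,\dots,i_k \in [n-1]$ is the solution set of $\{\dt{}^{(i_j)}_{[d-1]}\cdot \h{} = -\dt{}^{(i_j)}_d\}_{j}$, an affine subspace of $\mathbb{R}^{d-1}$ whose dimension is $(d-1)$ minus the rank of the projected normals. The crux is a projection-versus-augmentation argument driven by $\dt{}^{(n)} = \mathbi{e}_d$. For $k \le d'-1$, the $k+1 \le d'$ vectors $\{\dt{}^{(i_1)},\dots,\dt{}^{(i_k)},\mathbi{e}_d\}$ are linearly independent by Definition~\ref{general position vectors}, and a short computation shows this forces the projections $\{\dt{}^{(i_j)}_{[d-1]}\}_j$ to be independent too (any dependence among them would place $\mathbi{e}_d$ in their span); hence the rank is exactly $k$ and the intersection is a $((d-1)-k)$-dimensional flat. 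For $k \ge d'$ I would show emptiness: a common point written as $w = (\h{},1)$ would have to be orthogonal to $\mathrm{span}(\dt{}^{(i_1)},\dots,\dt{}^{(i_k)})$, but this span contains $\mathbi{e}_d$, forcing $w_d = w\cdot \mathbi{e}_d = 0$ and contradicting $w_d=1$. Together these give $(d'-1)$-relaxed general position of $\mathcal{A}(\boldsymbol{\bar{\mathcal{H}}})$.

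For the bijection I would verify the three properties directly through the dictionary. Well-definedness: any two separators of the same representative dichotomy induce identical sign vectors on the $n-1$ dual hyperplanes, so their dual points share a region and $\varphi_{\mathrm{dual}}$ is independent of the chosen separator. Injectivity: since $\#[\mathbf{v}] = 2$, a class is determined by its $\dt{}^{(n)}$-positive representative, and distinct representatives have distinct label patterns, hence distinct regions. Surjectivity: for any region $r \in \boldsymbol{\mathfrak{R}}(\mathcal{A}(\boldsymbol{\bar{\mathcal{H}}}))$ pick an interior point $\h{} \in r$ and set $w = (\h{},1)$; since $\h{}$ avoids every dual hyperplane, $w$ strictly separates a dichotomy, and $w_d = 1 > 0$ makes it the representative of its class, which maps to $r$. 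As a consistency check, Theorem~\ref{thm:d'-general} yields $\sum_{i=0}^{d'-1}\binom{n-1}{i}$ regions, matching Cover's count of separable dichotomies \citep{cover1965geometrical}.

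The main difficulty I anticipate is the bookkeeping around the degenerate direction introduced by the normalization $\dt{}^{(n)} = \mathbi{e}_d$. Dropping the last coordinate collapses $\mathbi{e}_d$ to the origin, which is precisely what shifts the general-position level from $d'$ to $d'-1$; making this rigorous requires cleanly separating the $n-1$ genuine dual hyperplanes from the hyperplane at infinity and tracking when the projected linear systems are consistent versus inconsistent. The subtlest regime is the ``null intersection'' case $k \ge d'$, whose emptiness relies on the data points spanning exactly a $d'$-dimensional subspace containing $\mathbi{e}_d$, so that orthogonality to the data is incompatible with $w_d = 1$. Finally, care is needed to ensure that interior points of dual regions always yield strict separators (never landing a data point on its own dual hyperplane), as this is what powers surjectivity.
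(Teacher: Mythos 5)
Your proposal is correct and follows essentially the same route as the paper's proof: your sign dictionary is exactly the paper's construction in \eqnref{eqn:construction}, your projection-versus-augmentation argument (lifting a dependence among the projected normals to one involving $\mathbi{e}_d$, and deriving emptiness for $k \ge d'$ from the fact that orthogonality to a spanning set containing $\mathbi{e}_d$ forces $w_d = 0$) is the same linear-algebraic content as \lemref{rank dual} and \lemref{key lemma of duality}, merely phrased in contrapositive form, and your well-definedness/injectivity/surjectivity arguments mirror the paper's verbatim, including the representative $w = (z_0^{\top},1)$ used for surjection. The implicit assumption you flag---that the points span exactly a $d'$-dimensional subspace---is likewise used (silently) in the paper's contradiction step, so your treatment is, if anything, more explicit on this point.
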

\thmref{dual map} claims that $\Upsilon_{\mathrm{dual}}\parenb{\xnd}$ is in $(d'-1)$-\tt{relaxed general position}. Combining the above result with \thmref{theorem: Main Theorem}, and the observation that any $\phi$-separability reduces to the homogeneous case, we obtain the average teaching complexity of $\cO(d'_{\phi})$ for $\phi$-separable dichotomy. 
\begin{corollary}[Teaching $\phi$-separable dichotomies]\label{cor: teach phi surfaces} 
Consider a $\phi$-map in $\mathbb{R}^{d_{\phi}}$. Assume that $\xnd$ are in $d'_\phi$-relaxed $\phi$-general position for a fixed $d'_\phi \in [d_\phi]$. If $\boldsymbol{\mathfrak{E}}_{\xnd}^{\phi}$ denotes the set of $\phi$-separable dichotomies of $\xnd$, then the average teaching complexity of dichotomies from $\boldsymbol{\mathfrak{E}}_{\xnd}^{\phi}$ is $\mathcal{O}(d'_\phi)$ \tt{i.e.} $\mathop{\mathbb{E}}_{\rgn_{[\mathbi{u}]} \sim \mathcal{U}}[M_n] = \mathcal{O}(d'_\phi),$ where $M_n$ denotes the number of teaching labels for a class $\rgn_{[\mathbi{u}]} \in \boldsymbol{\mathfrak{E}}_{\xnd}^{\phi}$. 
\end{corollary}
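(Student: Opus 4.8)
The plan is to compose the two structural results already in hand---the dual map (\thmref{dual map}) and the main teaching bound (\thmref{theorem: Main Theorem})---after first discharging the reduction to the homogeneous case. First I would observe that $\phi$-separability of a dichotomy of $\xnd$ is, by the definition of the $\phi$-map, exactly homogeneous linear separability of the $\phi$-induced point set $\phi(\xnd) \subset \mathbb{R}^{d_\phi}$; hence it suffices to treat the homogeneous problem on the $n$ points $\phi(\xnd)$ in $\mathbb{R}^{d_\phi}$. The relaxed $\phi$-general position hypothesis (\defref{phi general}) says precisely that $\phi(\xnd)$ is in $d'_\phi$-relaxed general position in the sense of \defref{general position vectors}. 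A small bookkeeping point is the normalization $\phi(\dt{}^{(n)}) = \mathbi{e}_{d_\phi}$ used by the dual construction: since any invertible linear change of coordinates preserves both linear independence and separability (transforming separators accordingly), this can be assumed without loss of generality.

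With the homogeneous problem in hand, I would apply \thmref{dual map} to the point set $\phi(\xnd)$ in $\mathbb{R}^{d_\phi}$. This yields that the induced dual arrangement $\boldsymbol{\mathcal{H}}_{n-1,d_\phi-1} = \up{\phi(\xnd)}$ lives in $\mathbb{R}^{d_\phi-1}$ and is in $(d'_\phi-1)$-relaxed general position, and that $\varphi_{\mathrm{dual}}$ is a bijection from the equivalence classes of $\phi$-separable dichotomies $\boldsymbol{\mathfrak{E}}_{\xnd}^{\phi}$ onto the region set $\boldsymbol{\mathfrak{R}}\parenb{\mathcal{A}(\boldsymbol{\mathcal{H}}_{n-1,d_\phi-1})}$.

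Finally, I would transfer the teaching complexity through this bijection. By our definition, the teaching set of a $\phi$-separable dichotomy $\rgn_{[\mathbi{u}]}$ is the teaching set of its dual region $\varphi_{\mathrm{dual}}(\rgn_{[\mathbi{u}]})$, so $M_n$ equals the number of faces in that region's bounding set. Because $\varphi_{\mathrm{dual}}$ is a bijection, sampling dichotomy classes uniformly at random induces the uniform distribution over the dual regions, so $\expctover{\rgn_{[\mathbi{u}]}\sim\mathcal{U}}{M_n}$ coincides with the average teaching complexity of the dual arrangement. Invoking \thmref{theorem: Main Theorem} on $\boldsymbol{\mathcal{H}}_{n-1,d_\phi-1}$, which is in $(d'_\phi-1)$-relaxed general position, gives $\expctover{\rgn_{[\mathbi{u}]}\sim\mathcal{U}}{M_n} = \bigTheta{d'_\phi-1} = \bigO{d'_\phi}$, as claimed.

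I expect the main obstacle to lie not in this composition but in the two details it hides. First, making the reduction-to-homogeneous step airtight: the distribution is over dichotomy classes of the original $\xnd$, and one must verify that passing through $\phi$ and then $\varphi_{\mathrm{dual}}$ neither collapses nor duplicates classes, so that the uniform law over $\boldsymbol{\mathfrak{E}}_{\xnd}^{\phi}$ is carried to the uniform law over dual regions. Second, correctly accounting for the distinguished point $\dt{}^{(n)}$, whose dual ``hyperplane at infinity'' is degenerate and whose label is fixed by \assref{assumption:positivelabel}: this is exactly what makes the relevant arrangement carry $n-1$ finite hyperplanes and produces the off-by-one shift from $d'_\phi$ to $d'_\phi-1$ delivered by \thmref{dual map}. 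Confirming these two points is what makes the clean $\bigO{d'_\phi}$ bound legitimate.
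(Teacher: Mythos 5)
Your proposal is correct and follows essentially the same route as the paper's own proof: reduce to the homogeneous case on $\phi(\xnd)$, apply \thmref{dual map} to get the $(d'_\phi-1)$-relaxed general position dual arrangement $\boldsymbol{\mathcal{H}}_{n-1,d'_{\phi}-1}$, transfer the uniform law over dichotomy classes to the dual regions via the bijection $\varphi_{\mathrm{dual}}$, and invoke \thmref{theorem: Main Theorem}. The two details you flag (the normalization of $\dt{}^{(n)}$ and the resulting off-by-one shift) are exactly the points the paper handles through \assref{assumption:positivelabel} and the construction of the dual map, so your composition is legitimate as stated.
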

\textbf{Remark}: In \secref{subsection: main theorem}, we discussed that for $\tt{any}$ hyperplane arrangement, \citet{Fukuda1991BoundingTN} established $\cO(d)$ result for 
the average teaching complexity of convex polytopes. We can obtain similar result \noindent 
for \tt{any}  arrangement of points for separable dichotomies via duality. The average teaching complexity of linear-separable dichotomies using duality can be established to $\cO(d)$ (similarly $\cO(d_{\phi})$ for $\phi$-separable dichotomies).

\looseness -1
\paragraph{Connection to the notion of extreme points of \cite{cover1965geometrical}}\label{subsection:connection} %
We now establish the connection between teaching set in the dual space and 
the \emph{extreme points} in the primal space.
This implies that our result on the average teaching complexity in \corref{cor: teach phi surfaces} recovers the $\bigO{d_{\phi}}$ result on the average number of extreme points, which was proved via a different framework in \cite{cover1965geometrical}. 
\begin{definition}[Extreme points]\label{def:extreme-points}
Consider an arbitrary $\phi$-separable dichotomy $\{\xndp,\xndn\}$ of a set of points $\xnd$ in $\Rd$. We say a subset $E \subset \xnd$ to be extremal points $\mathrm{wrt}$ $\{\xndp,\xndn\}$ if it is minimal and $\{\xndp,\xndn\}$ is $\phi$-separable by $w_{\phi}$ iff 
$\{\xndp \cap E,\xndn \cap E\}$ is $\phi$-separable by $w_{\phi}$.
\end{definition}
According to Lemma 1 \citep{cover1965geometrical}, a point $y$ is in the minimal set $E$ of extreme points for a dichotomy $\{X^+,X^-\}$ if it is \textit{ambiguous} $\mathrm{wrt}$ the dichotomy \tt{i.e.} both $\{X^+\cup \{y\}, X^- \}$ and $\{X^+, X^-\cup \{y\} \}$ are homogeneously linearly separable. We show that this characterization of ambiguous points is \tt{equivalent} to a characterization of hyerplanes in the dual space: 
\begin{definition}[Ambiguous hyperplanes in the dual space]\label{ambiguous hyperplane} Let $\mathcal{H}$ be a set of hyperplanes in $\Rd$, and let $r^{*}$ be a region induced by the hyperplane arrangement $\cA\parenb{\mathcal{H}}$. Then, an arbitrary hyperplane $h'$ is informative or ambiguous with respect to $r^{*}$ iff $\exists$ a point $\h{}$ in $h'$ such that a normed ball $\mathbb{B}_2\parenb{\h{}, \epsilon} \subset r^{*}$ for some $\epsilon > 0$.
\end{definition}
Note that only an ambiguous hyperplane can be contained in the teaching set for $r^{*}$. 
To achieve the equivalence of the two characterizations provided in \defref{def:extreme-points} and \defref{ambiguous hyperplane}, 
our key insight is in noting that \eqnref{eqn:construction} preserves signs of dot products in both the primal and dual spaces. Using this, we realize that (i) every ambiguous data point to dichotomy $\{X^+,X^-\}$ passes through the dual region corresponding to it, and (ii) similarly, every ambiguous hyperplane can be shown to form a data point which intersects a separator of $\{X^+,X^-\}$.
Formally, we establish the connection via the following theorem below with detailed discussions and proofs deferred to \iftoggle{longversion}{\appref{appendix: extreme points}}
{the supplemental materials}.
\begin{theorem}\label{main extreme points}
Consider a set of $n$ points $\xnd$ in $\Rd$ and a $\phi$-map where $\phi: \xnd \rightarrow \mathbb{R}^{d_{\phi}}$. Assume that $\xnd$ are in $d_\phi$-relaxed $\phi$-general position (\defref{phi general}). Let $\{\xndp,\xndn\}$ be a $\phi$-separable dichotomy. Now, for a subset $E \subseteq \xnd$, $E$ is a set of extremal points iff $\Upsilon_{\mathrm{dual}}(E)$ with the appropriate labels forms a teaching set for $\varphi_{\mathrm{dual}}\parenb{\big[{\{\xndp,\xndn\}}\big]}$.
\end{theorem}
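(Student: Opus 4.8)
The plan is to reduce to the homogeneous linear case and then route the whole argument through a single equivalence: a data point $y \in \xnd$ is \emph{ambiguous} with respect to the dichotomy $\{\xndp,\xndn\}$ in the sense of Lemma~1 of \citet{cover1965geometrical} if and only if its dual hyperplane $\Upsilon_{\mathrm{dual}}(y)$ is \emph{ambiguous} with respect to the dual region $r^\ast \coloneqq \varphi_{\mathrm{dual}}\parenb{\bracket{\curlybracket{\xndp,\xndn}}}$ in the sense of \defref{ambiguous hyperplane}. First I would invoke the reduction noted in \secref{sec.applied}: any $\phi$-separable dichotomy of $\xnd$ is a homogeneous linear dichotomy of $\phi(\xnd)$, and the hypothesis that $\xnd$ is in $d_\phi$-relaxed $\phi$-general position is exactly the statement that $\phi(\xnd)$ is in ordinary general position in $\mathbb{R}^{d_\phi}$; hence it suffices to prove the claim for $\phi=\boldsymbol{\mathrm{Id}}$ on points in general position. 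By \thmref{dual map} the dual arrangement is then in general position in $\mathbb{R}^{d_\phi-1}$ and $\varphi_{\mathrm{dual}}$ is a bijection, so every region and face below is non-degenerate.

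The core tool is the sign-preservation identity \eqnref{eqn:construction}: for the representative separator $w$ of the class (normalized so that its last coordinate is positive, by \assref{assumption:positivelabel}) with dual point $\h{}_w$, the sign of $w\cdot y$ equals the sign obtained by evaluating $\h{}_w$ against the dual hyperplane $h_y=\Upsilon_{\mathrm{dual}}(y)$. Consequently the label of $y$ is precisely the halfspace of $h_y$ in which $\h{}_w$ lies, and $r^\ast$ is exactly the set of dual points of separators realizing the labels of $\{\xndp,\xndn\}$. This is what lets me translate \emph{existence of a separator} in the primal into \emph{which halfspaces of $h_y$ meet $r^\ast$} in the dual. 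For the forward direction, if $y$ is ambiguous there are separators $w^+,w^-$ agreeing with $\{\xndp,\xndn\}$ on every point but placing $y$ oppositely; their dual points $\h{}_{w^+},\h{}_{w^-}$ then lie in $r^\ast$ on opposite sides of $h_y$, so both halfspaces of $h_y$ meet the region cut out by the remaining dual hyperplanes. Hence $h_y$ passes through that region's interior and admits a point with a small ball inside it, i.e.\ $h_y$ is ambiguous (a face of $r^\ast$).

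For the reverse direction, suppose $h_y$ is ambiguous with respect to $r^\ast$, so that a small ball centered at some point of $h_y$ lies in the region determined by the other dual hyperplanes. I would take two points of this ball on opposite sides of $h_y$; each lies in that region, and the separator $w=(\h{},1)$ recovered from such a dual point $\h{}$ agrees with $\{\xndp,\xndn\}$ on every point except $y$, which it labels oppositely. Thus both $\{\xndp\cup\{y\},\xndn\}$ and $\{\xndp,\xndn\cup\{y\}\}$ are separable and $y$ is ambiguous. With the equivalence established, the theorem follows by assembling three identifications: (i) by Lemma~1 of \citet{cover1965geometrical}, $E$ is the (minimal) extremal set iff $E$ equals the set of all ambiguous points; (ii) by the equivalence, $\Upsilon_{\mathrm{dual}}(E)$ is then exactly the set of ambiguous hyperplanes with respect to $r^\ast$; and (iii) in general position the teaching set $\mathcal{TS}(\Upsilon_{\mathrm{dual}}(\xnd),r^\ast)$ coincides with the bounding set $B_{r^\ast}$, which are precisely the ambiguous hyperplanes. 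The labels agree because, by \eqnref{eqn:construction}, the teaching label of $h_y$ is the halfspace containing $\h{}_w$, which is the label of $y$.

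I expect the main obstacle to be the reverse direction of the ambiguity equivalence: converting the geometric statement ``$h_y$ is a face of $r^\ast$'' into the existence of two \emph{strict} separators differing only on $y$. This needs a careful use of the open/closed topology of regions and of general position to ensure the face is genuinely $(d_\phi-1)$-dimensional, so that full-dimensional slivers of $r^\ast$ sit on both sides of $h_y$ and the recovered separators $w=(\h{},1)$ are strict, i.e.\ do not satisfy $w\cdot x=0$ for any $x\neq y$. A secondary subtlety is making precise the implicit reading of $r^\ast$ in \defref{ambiguous hyperplane}---namely that the ambient region is the one determined by $\Upsilon_{\mathrm{dual}}(\xnd)\setminus\{h_y\}$, into which $r^\ast$ merges once the constraint $h_y$ is dropped.
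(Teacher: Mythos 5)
Your proposal is correct and takes essentially the same route as the paper's proof in \appref{appendix: extreme points}: reduce to the homogeneous case via the dual map, establish the two-sided equivalence between ambiguous points in the primal (Cover's sense) and ambiguous hyperplanes in the dual (\defref{ambiguous hyperplane}) --- these are precisely \lemref{equivalence first side} and \lemref{equivalence second side} --- and then assemble the theorem from the identifications ``extremal set $=$ ambiguous points'' and ``teaching set $=$ ambiguous hyperplanes.'' The only divergence is mechanical: the paper proves \emph{both} directions of the equivalence through Cover's Lemma~1 (\lemref{ambiguous cover}), which directly supplies a separator passing through $y$ whose dual point lies on $h_y$ inside the open region, so the strictness/topology obstacle you flag for the reverse direction never arises; your two-strict-separator convexity argument (and the perturbation within a ball) is an equally valid substitute, though note that in your forward direction the two dual points cannot both ``lie in $r^\ast$'' as written --- they lie on opposite sides of $h_y$ within the merged region cut out by the remaining dual hyperplanes, which is what your subsequent sentence correctly uses.
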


\section{Discussion and Conclusion}\label{sec.conclusions}

We have studied the average-case complexity of teaching convex polytopes with halfspace queries, and showed that if the hyperplane arrangement is in $d'$-relaxed general position, then the average teaching complexity is $\bigTheta{d'}$. In contrast, the average-case sample complexity is $\bigTheta{d'\log n}$ for active learning and $\bigTheta{n}$ for passive learning. We showed that our insights could be applied to teaching $\phi$-separable dichotomies. Moreover, as discussed in details in the \appref{appendix: teaching ranking}, we further show that our insights in \secref{sec:avgtd} could be further generalized to the problem of teaching rankings over $n$ points $\{x_1, \dots, x_n\} \subseteq \mathbb{R}^d$ (encoded by their distances to an \emph{unknown} reference point $r \in \mathbb{R}^d$) via pairwise comparisons (e.g., ``is $x_i$ closer to $r$ than $x_j$''?). 
One interesting line of future work is to understand whether our result could be extended to more general hyperplane arrangement settings. We believe our results provide useful geometrical insights for analyzing the average-case complexity for more complex hypothesis classes.


\section*{Acknowledgements}
We thank Ali Sayyadi for the helpful discussions. This work was supported in part by fundings from PIMCO and Bloomberg.
\bibliography{reference}

\begin{thebibliography}{52}
\providecommand{\natexlab}[1]{#1}
\providecommand{\url}[1]{\texttt{#1}}
\expandafter\ifx\csname urlstyle\endcsname\relax
  \providecommand{\doi}[1]{doi: #1}\else
  \providecommand{\doi}{doi: \begingroup \urlstyle{rm}\Url}\fi

\bibitem[Angluin(1987)]{ANGLUIN198787}
Dana Angluin.
\newblock Learning regular sets from queries and counterexamples.
\newblock \emph{Information and Computation}, 75\penalty0 (2):\penalty0 87 --
  106, 1987.
\newblock ISSN 0890-5401.
\newblock \doi{https://doi.org/10.1016/0890-5401(87)90052-6}.

\bibitem[Angluin(1988)]{ANGLUINb}
Dana Angluin.
\newblock Queries and concept learning.
\newblock \emph{Mach. Learn.}, 2\penalty0 (4):\penalty0 319–342, April 1988.
\newblock ISSN 0885-6125.
\newblock \doi{10.1023/A:1022821128753}.

\bibitem[Anthony et~al.(1995)Anthony, Brightwell, and
  Shawe-Taylor]{article:anthony95}
Martin Anthony, Graham Brightwell, and John Shawe-Taylor.
\newblock On specifying boolean functions by labelled examples.
\newblock \emph{Discrete Applied Mathematics}, 61:\penalty0 1--25, 07 1995.
\newblock \doi{10.1016/0166-218X(94)00007-Z}.

\bibitem[Bishop(2006)]{bishop2006pattern}
Christopher~M Bishop.
\newblock \emph{Pattern recognition and machine learning}.
\newblock springer, 2006.

\bibitem[Blum and Kannan(1997)]{BLUM1997371}
Avrim~L. Blum and Ravindran Kannan.
\newblock Learning an intersection of a constant number of halfspaces over a
  uniform distribution.
\newblock \emph{Journal of Computer and System Sciences}, 54\penalty0
  (2):\penalty0 371 -- 380, 1997.
\newblock ISSN 0022-0000.
\newblock \doi{https://doi.org/10.1006/jcss.1997.1475}.

\bibitem[Blumer et~al.(1989)Blumer, Ehrenfeucht, Haussler, and
  Warmuth]{blumer1989learnability}
Anselm Blumer, Andrzej Ehrenfeucht, David Haussler, and Manfred~K Warmuth.
\newblock Learnability and the vapnik-chervonenkis dimension.
\newblock \emph{Journal of the ACM (JACM)}, 36\penalty0 (4):\penalty0 929--965,
  1989.

\bibitem[Brown and Niekum(2019)]{brown2019machine}
Daniel~S Brown and Scott Niekum.
\newblock Machine teaching for inverse reinforcement learning: Algorithms and
  applications.
\newblock In \emph{Proceedings of the AAAI Conference on Artificial
  Intelligence}, volume~33, pages 7749--7758, 2019.

\bibitem[Buck(1943)]{10.2307/2303424}
R.~C. Buck.
\newblock Partition of space.
\newblock \emph{The American Mathematical Monthly}, 50\penalty0 (9):\penalty0
  541--544, 1943.
\newblock ISSN 00029890, 19300972.

\bibitem[Chen et~al.(2018)Chen, Singla, Mac~Aodha, Perona, and
  Yue]{chen2018understanding}
Yuxin Chen, Adish Singla, Oisin Mac~Aodha, Pietro Perona, and Yisong Yue.
\newblock Understanding the role of adaptivity in machine teaching: The case of
  version space learners.
\newblock In \emph{Advances in Neural Information Processing Systems}, pages
  1476--1486, 2018.

\bibitem[Cohen et~al.(2013)Cohen, Denham, Falk, Schenck, Suciu, Terao, and
  Yuzvinsky]{book}
Daniel Cohen, Graham Denham, Michael Falk, Hal Schenck, Alex Suciu, Hiroaki
  Terao, and Sergey Yuzvinsky.
\newblock \emph{Complex Arrangements: Algebra, Geometry, Topology}.
\newblock 08 2013.

\bibitem[Cover(1965)]{cover1965geometrical}
Thomas~M Cover.
\newblock Geometrical and statistical properties of systems of linear
  inequalities with applications in pattern recognition.
\newblock \emph{IEEE transactions on electronic computers}, \penalty0
  (3):\penalty0 326--334, 1965.

\bibitem[Doliwa et~al.(2014)Doliwa, Fan, Simon, and
  Zilles]{doliwa2014recursive}
Thorsten Doliwa, Gaojian Fan, Hans~Ulrich Simon, and Sandra Zilles.
\newblock Recursive teaching dimension, vc-dimension and sample compression.
\newblock \emph{JMLR}, 15\penalty0 (1):\penalty0 3107--3131, 2014.

\bibitem[Edelsbrunner(1987)]{edelsbrunner}
Herbert Edelsbrunner.
\newblock \emph{Algorithms in Combinatorial Geometry}.
\newblock Springer-Verlag, Berlin, Heidelberg, 1987.
\newblock ISBN 038713722X.

\bibitem[Feldman and Rojas(2013)]{feldman2013neural}
J.~Feldman and R.~Rojas.
\newblock \emph{Neural Networks: A Systematic Introduction}.
\newblock Springer Berlin Heidelberg, 2013.
\newblock ISBN 9783642610684.

\bibitem[Fukuda et~al.(1991)Fukuda, Saito, Tamura, and
  Tokuyama]{Fukuda1991BoundingTN}
Komei Fukuda, Shigemasa Saito, Akihisa Tamura, and Takeshi Tokuyama.
\newblock Bounding the number of k-faces in arrangements of hyperplanes.
\newblock \emph{Discret. Appl. Math.}, 31:\penalty0 151--165, 1991.

\bibitem[Gao et~al.(2017)Gao, Ries, Simon, and Zilles]{recursiveteach}
Ziyuan Gao, Christoph Ries, Hans~U. Simon, and Sandra Zilles.
\newblock Preference-based teaching.
\newblock \emph{J. Mach. Learn. Res.}, 18\penalty0 (1):\penalty0 1012–1043,
  January 2017.
\newblock ISSN 1532-4435.

\bibitem[Goldman and Kearns(1995)]{goldman1995complexity}
Sally~A Goldman and Michael~J Kearns.
\newblock On the complexity of teaching.
\newblock \emph{Journal of Computer and System Sciences}, 50\penalty0
  (1):\penalty0 20--31, 1995.

\bibitem[Goldman et~al.(1993)Goldman, Rivest, and
  Schapire]{goldman1993learning}
Sally~A Goldman, Ronald~L Rivest, and Robert~E Schapire.
\newblock Learning binary relations and total orders.
\newblock \emph{SIAM Journal on Computing}, 22\penalty0 (5):\penalty0
  1006--1034, 1993.

\bibitem[Gottlieb et~al.(2018)Gottlieb, Kaufman, Kontorovich, and
  Nivasch]{learnconvexpolytope}
Lee-Ad Gottlieb, Eran Kaufman, Aryeh Kontorovich, and Gabriel Nivasch.
\newblock Learning convex polytopes with margin.
\newblock In S.~Bengio, H.~Wallach, H.~Larochelle, K.~Grauman, N.~Cesa-Bianchi,
  and R.~Garnett, editors, \emph{Advances in Neural Information Processing
  Systems 31}, pages 5706--5716. Curran Associates, Inc., 2018.

\bibitem[Guillory and Bilmes(2009)]{guillory2009average}
Andrew Guillory and Jeff Bilmes.
\newblock Average-case active learning with costs.
\newblock In \emph{International conference on algorithmic learning theory},
  pages 141--155. Springer, 2009.

\bibitem[Hanneke and Yang(2015)]{hanneke2015minimax}
Steve Hanneke and Liu Yang.
\newblock Minimax analysis of active learning.
\newblock \emph{The Journal of Machine Learning Research}, 16\penalty0
  (1):\penalty0 3487--3602, 2015.

\bibitem[Haussler et~al.(1994)Haussler, Kearns, and
  Schapire]{haussler1994bounds}
David Haussler, Michael Kearns, and Robert~E Schapire.
\newblock Bounds on the sample complexity of bayesian learning using
  information theory and the vc dimension.
\newblock \emph{Machine learning}, 14\penalty0 (1):\penalty0 83--113, 1994.

\bibitem[Hu et~al.(2017)Hu, Wu, Li, and Wang]{Hu2017QuadraticUB}
Lunjia Hu, Ruihan Wu, T.~Li, and L.~Wang.
\newblock Quadratic upper bound for recursive teaching dimension of finite vc
  classes.
\newblock In \emph{COLT}, 2017.

\bibitem[Jamieson and Nowak(2011)]{jamieson2011active}
Kevin~G Jamieson and Robert Nowak.
\newblock Active ranking using pairwise comparisons.
\newblock In \emph{Advances in Neural Information Processing Systems}, pages
  2240--2248, 2011.

\bibitem[Kane et~al.(2017)Kane, Lovett, Moran, and Zhang]{kane2017active}
Daniel~M Kane, Shachar Lovett, Shay Moran, and Jiapeng Zhang.
\newblock Active classification with comparison queries.
\newblock In \emph{2017 IEEE 58th Annual Symposium on Foundations of Computer
  Science (FOCS)}, pages 355--366. IEEE, 2017.

\bibitem[Khot and Saket(2011)]{KHOT2011129}
Subhash Khot and Rishi Saket.
\newblock On the hardness of learning intersections of two halfspaces.
\newblock \emph{Journal of Computer and System Sciences}, 77\penalty0
  (1):\penalty0 129 -- 141, 2011.
\newblock ISSN 0022-0000.
\newblock \doi{https://doi.org/10.1016/j.jcss.2010.06.010}.
\newblock Celebrating Karp's Kyoto Prize.

\bibitem[Klivans and Servedio(2006)]{attribute}
Adam~R. Klivans and Rocco~A. Servedio.
\newblock Toward attribute efficient learning of decision lists and parities.
\newblock \emph{Journal of Machine Learning Research}, 7, 2006.

\bibitem[Klivans and Sherstov(2006)]{klivanscrypto}
Adam~R. Klivans and Alexander~A. Sherstov.
\newblock Cryptographic hardness for learning intersections of halfspaces.
\newblock In \emph{Proceedings of the 47th Annual IEEE Symposium on Foundations
  of Computer Science}, FOCS '06, page 553–562, USA, 2006. IEEE Computer
  Society.
\newblock ISBN 0769527205.
\newblock \doi{10.1109/FOCS.2006.24}.

\bibitem[Klivans et~al.(2004)Klivans, O'Donnell, and Servedio]{KLIVANS2004808}
Adam~R. Klivans, Ryan O'Donnell, and Rocco~A. Servedio.
\newblock Learning intersections and thresholds of halfspaces.
\newblock \emph{Journal of Computer and System Sciences}, 68\penalty0
  (4):\penalty0 808 -- 840, 2004.
\newblock ISSN 0022-0000.
\newblock \doi{https://doi.org/10.1016/j.jcss.2003.11.002}.
\newblock Special Issue on FOCS 2002.

\bibitem[Kuhlmann(1999)]{kuhlman}
Christian Kuhlmann.
\newblock On teaching and learning intersection-closed concept classes.
\newblock In \emph{Proceedings of the 4th European Conference on Computational
  Learning Theory}, EuroCOLT '99, page 168–182, Berlin, Heidelberg, 1999.
  Springer-Verlag.
\newblock ISBN 3540657010.

\bibitem[Kushilevitz et~al.(1996)Kushilevitz, Linial, Rabinovich, and
  Saks]{Kushilevitz1996WitnessSF}
Eyal Kushilevitz, Nathan Linial, Yuri Rabinovich, and Michael~E. Saks.
\newblock Witness sets for families of binary vectors.
\newblock \emph{J. Comb. Theory, Ser. A}, 73:\penalty0 376--380, 1996.

\bibitem[Kwek and Pitt(1996)]{pacpitt}
Stephen Kwek and Leonard Pitt.
\newblock Pac learning intersections of halfspaces with membership queries
  (extended abstract).
\newblock In \emph{Proceedings of the Ninth Annual Conference on Computational
  Learning Theory}, COLT '96, page 244–254, New York, NY, USA, 1996.
  Association for Computing Machinery.
\newblock ISBN 0897918118.
\newblock \doi{10.1145/238061.238109}.

\bibitem[Lee et~al.(2006)Lee, Servedio, and Wan]{DNFteach}
Homin Lee, Rocco Servedio, and Andrew Wan.
\newblock Dnf are teachable in the average case.
\newblock volume~69, pages 214--228, 09 2006.
\newblock ISBN 978-3-540-35294-5.
\newblock \doi{10.1007/11776420_18}.

\bibitem[Mansouri et~al.(2019)Mansouri, Chen, Vartanian, Zhu, and
  Singla]{mansouri2019preference}
Farnam Mansouri, Yuxin Chen, Ara Vartanian, Jerry Zhu, and Adish Singla.
\newblock Preference-based batch and sequential teaching: Towards a unified
  view of models.
\newblock In \emph{Advances in Neural Information Processing Systems}, pages
  9195--9205, 2019.

\bibitem[McCallumzy and Nigamy(1998)]{mccallumzy1998employing}
Andrew~Kachites McCallumzy and Kamal Nigamy.
\newblock Employing em and pool-based active learning for text classification.
\newblock In \emph{Proc. International Conference on Machine Learning (ICML)},
  pages 359--367. Citeseer, 1998.

\bibitem[Miller et~al.(2007)Miller, Reiner, and Sturmfels]{miller2007geometric}
E.~Miller, V.~Reiner, and B.~Sturmfels.
\newblock \emph{Geometric Combinatorics}.
\newblock IAS/Park City mathematics series. American Mathematical Society,
  2007.
\newblock ISBN 9780821837368.

\bibitem[Nachum and Yehudayoff(2019)]{nachum2019average}
Ido Nachum and Amir Yehudayoff.
\newblock Average-case information complexity of learning.
\newblock In \emph{Algorithmic Learning Theory}, pages 633--646, 2019.

\bibitem[Natarajan(1987)]{natarajan1987learning}
Balaubramaniam~Kausik Natarajan.
\newblock On learning boolean functions.
\newblock In \emph{Proceedings of the nineteenth annual ACM symposium on Theory
  of computing}, pages 296--304, 1987.

\bibitem[Roman(2007)]{roman2007advanced}
S.~Roman.
\newblock \emph{Advanced Linear Algebra}.
\newblock Graduate Texts in Mathematics. Springer New York, 2007.
\newblock ISBN 9780387728315.

\bibitem[Rossen(2003)]{rossen2003discrete}
Kenneth Rossen.
\newblock Discrete mathematics and its applications.
\newblock \emph{McGraw Hill}, 2003.

\bibitem[Simon and Zilles(2015)]{Simon2015OpenPR}
H.~U. Simon and Sandra Zilles.
\newblock Open problem: Recursive teaching dimension versus vc dimension.
\newblock In \emph{COLT}, 2015.

\bibitem[Spielman and Teng(2009)]{smootheb}
Daniel Spielman and Shang-Hua Teng.
\newblock Smoothed analysis: An attempt to explain the behavior of algorithms
  in practice.
\newblock \emph{Commun. ACM}, 52:\penalty0 76--84, 10 2009.
\newblock \doi{10.1145/1562764.1562785}.

\bibitem[Spielman and Teng(2004)]{smoothea}
Daniel~A. Spielman and Shang-Hua Teng.
\newblock Smoothed analysis of algorithms: Why the simplex algorithm usually
  takes polynomial time.
\newblock \emph{J. ACM}, 51\penalty0 (3):\penalty0 385–463, May 2004.
\newblock ISSN 0004-5411.
\newblock \doi{10.1145/990308.990310}.

\bibitem[Traub(2003)]{traub2003information}
Joseph~F Traub.
\newblock Information-based complexity.
\newblock In \emph{Encyclopedia of Computer Science}, pages 850--854. 2003.

\bibitem[Vapnik and Chervonenkis(1971)]{vapnik1971uniform}
VN~Vapnik and A~Ya Chervonenkis.
\newblock On the uniform convergence of relative frequencies of events to their
  probabilities.
\newblock \emph{Theory of Probability \& Its Applications}, 16\penalty0
  (2):\penalty0 264--280, 1971.

\bibitem[Vempala(2010)]{vempala}
Santosh~S. Vempala.
\newblock A random-sampling-based algorithm for learning intersections of
  halfspaces.
\newblock \emph{J. ACM}, 57\penalty0 (6), November 2010.
\newblock ISSN 0004-5411.
\newblock \doi{10.1145/1857914.1857916}.

\bibitem[Vergnas(1980)]{Vergnas1980ConvexityIO}
Michel~Las Vergnas.
\newblock Convexity in oriented matroids.
\newblock \emph{J. Comb. Theory, Ser. B}, 29:\penalty0 231--243, 1980.

\bibitem[Wan(2010)]{wan2010learning}
Andrew Wan.
\newblock \emph{Learning, cryptography, and the average case}.
\newblock Citeseer, 2010.

\bibitem[Zaslavsky(1975)]{trove.nla.gov.au/work/10888269}
Thomas Zaslavsky.
\newblock \emph{Facing up to arrangements : face-count formulas for partitions
  of space by hyperplanes}.
\newblock Providence : American Mathematical Society, 1975.
\newblock ISBN 0821818546.
\newblock "Volume 1, issue 1.".

\bibitem[Zhu et~al.(2018)Zhu, Singla, Zilles, and
  Rafferty]{DBLP:journals/corr/ZhuSingla18}
Xiaojin Zhu, Adish Singla, Sandra Zilles, and Anna~N. Rafferty.
\newblock An overview of machine teaching.
\newblock \emph{CoRR}, abs/1801.05927, 2018.

\bibitem[Zilles et~al.(2008)Zilles, Lange, Holte, and
  Zinkevich]{zilles2008teaching}
Sandra Zilles, Steffen Lange, Robert Holte, and Martin Zinkevich.
\newblock Teaching dimensions based on cooperative learning.
\newblock In \emph{COLT}, pages 135--146, 2008.

\bibitem[Zilles et~al.(2011)Zilles, Lange, Holte, and
  Zinkevich]{zilles2011models}
Sandra Zilles, Steffen Lange, Robert Holte, and Martin Zinkevich.
\newblock Models of cooperative teaching and learning.
\newblock \emph{Journal of Machine Learning Research}, 12\penalty0
  (Feb):\penalty0 349--384, 2011.

\end{thebibliography}

 \iftoggle{longversion}{
  \newpage
 \onecolumn
 \appendix
 {\allowdisplaybreaks
 \section{List of Appendices}\label{appendix:table-of-contents}
In the appendices, we first provide a table summarizing the notations defined in the main paper. We then provide the proofs of our theoretical results in full detail in the subsequent sections. 

The remainder of the appendices are summarized as follows: 
\begin{itemize}
\item \appref{appendix: table of notations} provides a list of notations defined in the main paper
\item \appref{appendix: regions relaxed general position} provides the proof of \thmref{thm:d'-general} (Number of  Regions Induced by Intersections of  Halfspaces)
\item \appref{appendix: faces teaching} provides the proof of \propref{faces_teach} (Number of Faces Induced by Intersections of Halfspaces)
\item \appref{appendix: main theorem} provides the proof of \thmref{theorem: Main Theorem} (Teaching Complexity of Convex Polytopes)
\item \appref{appendix: active learning} provides the proof of \thmref{pairwise teaching bound} (Learning Complexity of Convex Polytopes)
\item \appref{appendix: Dual map} provides the proof of \thmref{dual map} and \corref{cor: teach phi surfaces} (Teaching Complexity of $\phi$-Separable Dichotomy)
\item \appref{appendix: extreme points}  provides the proof of \thmref{main extreme points} (Equivalence of Teaching Set and Extreme Points)
\item \appref{appendix: teaching ranking} provides an additional use-case of the problem of teaching convex polytopes via halfspace queries. In particular, we introduce the problem of teaching linear rankings via halfspaces queries, and establish a $\bigTheta{d}$ bound on the average teaching complexity.
\end{itemize}
\clearpage
\section{Table of Notations Defined in the Main Paper}\label{appendix: table of notations}
For readers' convenience, we summarize the notations used in the main paper in \tabref{tab:notation-table}.
\begin{table}[h!]
  \caption{Table of Notations}
  \label{tab:notation-table}
  \centering
  \begin{tabular}{lll}
    \toprule
    Notations     & Use      \\
    \midrule
    $h,h^{(i)}$ & a hyperplane    \\
    $x, x^{(i)}$ & a point        \\
    $r, r^*$ & target/sampled  region/hypothesis/concept   \\
    $\bracket{\mathbi{u}}, \bracket{\mathbi{v}}$ & dichotomies equivalence classes  \\
    $\eta, \eta_h$ & normal vectors of a hyperplane\\
    $b, b_h$ & bias of a hyperpane\\
    $\xnd$ & data points in $\Rd$ or \data\\
    $\Hnd$ & $n$ hyperplanes set in $\Rd$ or \hypo\\
    $\mathcal{A}(\Hnd)$ & hyperplanes arrangement of set $\Hnd$\\
    $\regionset$ & set of regions induced by hyperplane arrangement $\Hnd$\\
    $\rnd$ & $\#$regions induced by hyperplane arrangement $\mathcal{A}(\Hnd)$\\
    $\boldsymbol{\mathfrak{D}}_{\xnd}$ & set of dichotomies of $\xnd$\\
    $\boldsymbol{\mathfrak{E}}\parenb{\xnd}$ & the set of equivalence classes of homogeneously linear separable dichotomies\\
    $\boldsymbol{\mathfrak{E}}_{\xnd}^{\phi}$ & the set of equivalence classes of $\phi$-separable dichotomies\\
    $r_{\bracket{\mathbi{u}}}$ & random dichotomy (equivalence) class in $\boldsymbol{\mathfrak{E}}_{\xnd}^{\phi}$\\
    $\mathcal{B},\phi,\Upsilon_{\mathrm{dual}} ,\varphi_{\mathrm{dual}}$ &  maps\\ 
    $\ff\parenb{\cA(\Hnd)}$  & number of faces\\
    $\mathcal{U}$ & uniform distribution\\
    $\Theta$& set of embedded points\\
    $\boldlam$& a matrix\\
    $\mathbb{I}_{[k]}$& set of $k$ indices of naturals\\
    \bottomrule
  \end{tabular}
\end{table}
\clearpage

\section{Regions Induced by Intersections of  Halfspaces: Proof of \thmref{thm:d'-general}}\label{appendix: regions relaxed general position}

In this section, we would provide the relevant results, with proofs to complete the claim of \thmref{thm:d'-general}. The struture of the appendix is: we first introduce basic affine geometry, then construct a subspace in which the underlying hyperplane arrangement is structurally similar to the hyperplane arrangement of discussion \tt{i.e.} $\cA\parenb{\Hnd}$, and establish useful properties in relevant lemmas and proposition to complete the proof of \thmref{thm:d'-general}.\vspace{6pt}\\
\noindent
Before we proceed to the technical part of the appendix, we provide elementary discussion on affine geometry \citep{roman2007advanced} below. 
\subsection{Elementary Affine Geometry}\label{appendixsub: affine geometry}
\begin{definition}[Flats \cite{roman2007advanced}]
Let $S$ be a subspace of a vector space $V$. The coset 
\begin{equation*}
    v + S = \condcurlybracket{v + s}{ s \in S}
\end{equation*}
is called a flat in $V$ with base $S$ and flat representative $v$. We also refer to $v+S$ as a translate of $S$. The set $\mathfrak{A}(V)$ of all flats in $V$ is called the affine geometry of $V$. The dimension $\dim(\mathfrak{A}(V))$ of $\mathfrak{A}(V)$ is defined to be $\dim(V)$.
\end{definition}
While a flat may have many flat representatives, it only has one base since $x + S = y + T$ implies that $x \in y +T$ and so $x + S = y + T = x + T$ whence $S = T$.
\begin{definition}[Dimension of flats]
The dimension of a flat $v + S$ is $\dim(S)$. A flat of dimension $k$ is called a $k$-flat. A 0-\tt{flat} is a \tt{point}, a 1-\tt{flat} is a line, and a 2-\tt{flat} is a plane. A flat of dimention $\dim(\mathfrak{A}(V))-1$ is called a hyperplane.  
\end{definition}
In the discussion ahead, we would interchangeably use the notation $\dim$ for a \tt{flat} and a \tt{subspace}. 
With the discussion above, we realize every hyperplane in $\mathbb{R}^k$ has a dual representation as a flat, and a set defined by a \tt{normal} vector and a \tt{bias} (see \secref{sec:formulation}). We would use these representations to our advantage in defining and constructing mathematical objects in the coming discussion.
\subsection{Construction of $\mathbb{N}$ and Relevant Lemmas}\label{appendixsub: construction of N}
For any hyperplane $h \in \Hnd$ in $\Rd$, it can be written as $h \triangleq \eta_h\cdot \h{}+ b_h = 0$ where $\eta_h$ and $b_h$ are a fixed non-zero normal vector and a scalar bias respectively. Consider the subspace $\mathbb{N}$ spanned by the normal vectors of hyperplanes in $\Hnd$.
$$\mathbb{N} = \textbf{span}\tuple{\condcurlybracket{\eta_h}{h \in \Hnd,\: h \coloneqq \eta_h\cdot \h{}+ b_h = 0,\: \h{}\in \mathbb{R}^d}}$$
This construction is interesting pertaining to the arrangement of the hyperplanes which is $d'$-\tt{relaxed general position}. First, we would show some useful properties of the subspace $\mathbb{N}$ and the manner in which $\Hnd$ intersects $\mathbb{N}$ in \lemref{lemma: dim d} and \lemref{lemma: intersect in d-1}.
\begin{lemma}\label{lemma: dim d}
Consider a set $\Hnd$ of $n$ hyperplanes in $\mathbb{R}^d$. If the hyperplane arrangement $\mathcal{A}\parenb{\Hnd}$ is in $d'$-relaxed general position, then $\dim\parenb{\mathbb{N}} = d'$.
\end{lemma}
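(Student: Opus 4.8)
The plan is to translate the purely geometric condition of \defref{defn relaxed general position} into a linear-algebraic statement about the normal vectors $\curlybracket{\eta_h : h \in \Hnd}$, and then to sandwich $\dim\parenb{\mathbb{N}}$ between $d'$ and $d'$. The bridge is the standard fact that for any $k$ hyperplanes $\eta_i\cdot \h{} + b_i = 0$, if we stack the normals as the rows of a matrix $A \in \mathbb{R}^{k\times d}$ with right-hand side $c = (-b_1,\dots,-b_k)^{\top}$, then their common intersection is the solution set of $A\h{} = c$: this set is nonempty iff $\rank{A} = \rank{[A\mid c]}$, and when nonempty it is an affine flat of dimension $d - \rank{A}$. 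Two consequences will be used repeatedly: $k$ hyperplanes meet in a $(d-k)$-flat exactly when their $k$ normals are linearly independent (rank $k$), and full row rank of the normals forces a nonempty intersection (the system is automatically consistent).

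For the lower bound $\dim\parenb{\mathbb{N}} \ge d'$, I would fix any subset $\cS \subseteq \Hnd$ of exactly $d'$ hyperplanes (using $n \ge d'$). By $d'$-relaxed general position these meet in a $(d-d')$-flat, so by the dimension formula their $d'$ normals have rank $d'$, i.e.\ are linearly independent; since $\mathbb{N}$ contains their span, $\dim\parenb{\mathbb{N}} \ge d'$.

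For the upper bound $\dim\parenb{\mathbb{N}} \le d'$, I would argue by contradiction. If $d' = d$ the bound is immediate since $\mathbb{N} \subseteq \Rd$, so assume $d' < d$ and suppose $\dim\parenb{\mathbb{N}} \ge d'+1$. Then $\curlybracket{\eta_h}$ contains $d'+1$ linearly independent vectors $\eta_{h_1},\dots,\eta_{h_{d'+1}}$, and note $d'+1 \le d$. The corresponding $d'+1$ hyperplanes then have normals of full row rank $d'+1$, so the system $A\h{} = c$ is consistent and their intersection is nonempty, a $(d - d' - 1)$-flat. But $d'+1 > d'$, so \defref{defn relaxed general position} forces these $d'+1$ hyperplanes to have null intersection---a contradiction. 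Hence $\dim\parenb{\mathbb{N}} \le d'$, and together with the lower bound this gives $\dim\parenb{\mathbb{N}} = d'$.

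The main obstacle---and essentially the only delicate point---is making the geometry-to-algebra dictionary precise and invoking the correct clause of \defref{defn relaxed general position} in each direction: the lower bound uses the ``$(d-k)$-flat'' clause at $k = d'$, while the upper bound crucially relies on the ``otherwise null intersection'' clause at $k = d'+1$, converted via the contrapositive of ``full row rank of normals $\Rightarrow$ nonempty intersection.'' I would also take care of the degenerate regimes (namely $d' = d$, where the upper bound is trivial, and the tacit assumption $n \ge d'$ needed so that subsets of the required sizes exist) so the argument is complete across all admissible parameters.
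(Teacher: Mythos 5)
Your proposal is correct and follows essentially the same route as the paper's proof: both directions translate hyperplane intersections into the linear system formed by stacking the normals, with the lower bound extracting rank $d'$ from the $(d-d')$-flat clause (the paper does this via rank-nullity) and the upper bound deriving a contradiction from the fact that $d'+1$ linearly independent normals force a nonempty intersection, violating the null-intersection clause. Your explicit handling of the degenerate regimes ($d'=d$ and the tacit requirement $n \ge d'$) is a minor tidiness the paper leaves implicit, not a different argument.
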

\begin{proof} Let us define an \tt{ordered} subset $\mathbb{N}_{[d']}$ $\triangleq$ $\curlybracket{\eta_{i_1},\eta_{i_2},\dots,\eta_{i_{d'}}}$  
of normal vectors of any $d'$ hyperplanes in $\Hnd$. Consider the subset $\mathcal{H}_{[d']} \subset \Hnd$ of hyperplanes corresponding to the normal vectors in $\mathbb{N}_{[d']}$. Ideally, if we can show that $\mathbb{N}_{[d']}$ is linearly independent then we have a lower bound on the dimension of $\mathbb{N}$ \tt{i.e.} $\dim\parenb{\mathbb{N}} \ge d'$.\\
\noindent
We construct the matrix $\boldlam_{\mathbb{N}_{[d']}}$ such that $\boldlam_{\mathbb{N}_{[d']}}[k:] = \eta_{i_k}$. Define $\mathbi{b} \triangleq \paren{b_{i_1},b_{i_1},\dots,b_{i_{d'}}}$. Consider the matrix equation for variable $\h{}\in \Rd$:
\begin{align}
    \boldlam_{\mathbb{N}_{[d']}}\h{}= -\mathbi{b}^{\top} \label{eqn:eq301}
\end{align}
But we note that if $\h{}$ is a solution of \eqnref{eqn:eq301} \tt{iff} $\h{}$ exists in $\bigparen{\bigcap_{h \in \mathcal{H}_{[d']}} h}$.
Notice that by the definition of $d'$-\textit{relaxed general position}, $\bigparen{\bigcap_{h \in \mathcal{H}_{[d']}} h}$ is a $(d-d')$-dimensional flat which also forms a solution for \eqnref{eqn:eq301}. Consider a solution ${\h{}}_0 \in \paren{\bigcap_{h \in \mathcal{H}_{[d']}} h}$ such that $\boldlam_{\mathbb{N}_{[d']}}{\h{}}_0 = -\mathbi{b}^{\top}$. Thus,
\begin{align}
    \boldlam_{\mathbb{N}_{[d']}}\h{}= \boldlam_{\mathbb{N}_{[d']}}{\h{}}_0 
    &\implies \boldlam_{\mathbb{N}_{[d']}}\paren{\h{}-{\h{}}_0} = 0 \nonumber \\
    &\implies \dim\paren{\textbf{Ker}\paren{\boldlam_{\mathbb{N}_{[d']}}}} = d-d' \label{eqn:eq302}
\end{align}
But using \thmref{theorem:rank nullity} (rank-nullity, \appref{appendix: Dual map}), \textbf{rank}$\paren{\boldlam_{\mathbb{N}_{[d']}}}$ = $d'$. It implies $\mathbb{N}_{[d']}$ is a set of $d'$ linearly independent vectors. Thus, $\dim(\mathbb{N}) \ge d'$. \\
Note, that $\dim\paren{\mathbb{N}} \ngtr d'$ otherwise $\exists$ an \tt{ordered} subset $\mathbb{N}_{[d'+1]}$ $\triangleq$ $\curlybracket{\eta_{i_1},\eta_{i_2},\dots,\eta_{i_{d'+1}}}$ of $d'+1$ normal vectors corresponding to a subset $\mathcal{H}_{[d'+1]} \subset \Hnd$, which are linearly independent.
Then, the equation $\boldlam_{\mathbb{N}_{[d'+1]}}\h{}= -\parenb{b_{i_1},b_{i_1},\dots,b_{i_{d'+1}}}^{\top}$ has a solution because \textbf{rank}$\paren{\boldlam_{\mathbb{N}_{[d'+1]}}}$ = $d'+1$. This implies that  $\paren{\bigcap_{h \in \mathcal{H}_{[d'+1]}} h} \neq \emptyset$, which contradicts the $d'$-relaxed general position arrangement of $\Hnd$. Thus, $\dim\paren{\mathbb{N}} = d'$.
\end{proof}
Any hyperplane $h \in \Hnd$ is a ($d-1$)-dimensional \textit{flat} which can be written equivalently as $h \equiv h_{flat}\triangleq v_h + S_h$ for some vector $v_h \in \mathbb{R}^d$ and $(d-1)$-dimensional subspace $S_h$. Notice that $\mathbb{N}$ is a $d'$-dimensional \textit{flat} which can be written as $\paren{\textbf{0}+\mathbb{N}}$. Using Theorem 16.5 \citep[see][page 451]{roman2007advanced}, the intersection \textit{flat} $X_h$ = $\paren{h_{flat} \cap \parenb{\textbf{0}+ \mathbb{N}}}$ can be written as $X_h \triangleq y_h + (S_h \cap \mathbb{N})$ for some $y_h \in (h_{flat} \cap \parenb{\textbf{0}+ \mathbb{N})}$. Now, we show a straightforward result that $X_h$ has dimension $d'-1$ which would be useful when we consider the regions induced by the arrangement of \tt{intersection flats} in $\mathbb{N}$.
\begin{lemma}\label{lemma: intersect in d-1}
For the flat $X_h$ constructed as above, $\dim\paren{X_h}$ = $d'-1$.  
\end{lemma}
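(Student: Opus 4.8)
The plan is to notice that the dimension of the flat $X_h$ equals the dimension of its base subspace. Since the excerpt writes $X_h = y_h + (S_h \cap \mathbb{N})$, we have $\dim(X_h) = \dim(S_h \cap \mathbb{N})$, so it suffices to compute the dimension of the intersection of the two subspaces $S_h$ and $\mathbb{N}$. Here $S_h$ is the base of the flat $h_{flat} = v_h + S_h$, i.e. the hyperplane through the origin parallel to $h$; concretely $S_h = \eta_h^{\perp} = \{s \in \mathbb{R}^d : \eta_h \cdot s = 0\}$, so that $\dim(S_h) = d-1$. Moreover, by \lemref{lemma: dim d} we already know $\dim(\mathbb{N}) = d'$. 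I would then apply the standard Grassmann dimension formula $\dim(S_h \cap \mathbb{N}) = \dim(S_h) + \dim(\mathbb{N}) - \dim(S_h + \mathbb{N})$, which reduces the whole problem to determining $\dim(S_h + \mathbb{N})$.

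The key step is to show that $S_h + \mathbb{N} = \mathbb{R}^d$. The idea is that $\mathbb{N}$ contains the normal vector $\eta_h$ by its very construction as $\textbf{span}$ of all the $\eta_{h'}$, whereas $\eta_h \notin S_h$ because $\eta_h \cdot \eta_h \neq 0$ (as $\eta_h$ is a nonzero normal vector). Since $S_h$ has codimension one in $\mathbb{R}^d$, adjoining to it any vector lying outside $S_h$ already spans the whole space. Hence $\mathbb{R}^d = S_h \oplus \langle \eta_h \rangle \subseteq S_h + \mathbb{N} \subseteq \mathbb{R}^d$, forcing $\dim(S_h + \mathbb{N}) = d$.

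Substituting into the Grassmann formula then gives $\dim(S_h \cap \mathbb{N}) = (d-1) + d' - d = d'-1$, and therefore $\dim(X_h) = d'-1$, as claimed.

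I expect the only real subtlety to be the passage from ``$\eta_h$ lies in $\mathbb{N}$ but not in $S_h$'' to ``$S_h + \mathbb{N}$ is the full space''; this is precisely where the codimension-one structure of $S_h$ is essential. Everything else is a routine application of the dimension formula together with $\dim(\mathbb{N}) = d'$ from \lemref{lemma: dim d}. Note in particular that the $d'$-relaxed general position hypothesis enters this argument only indirectly, through the earlier computation of $\dim(\mathbb{N})$, and not through any further geometric input here.
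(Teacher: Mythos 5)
Your proof is correct and follows essentially the same route as the paper: both apply the Grassmann dimension formula $\dim(S_h \cap \mathbb{N}) = \dim(S_h) + \dim(\mathbb{N}) - \dim(S_h + \mathbb{N})$ together with $\dim(\mathbb{N}) = d'$ from \lemref{lemma: dim d}, and both obtain $\dim(S_h + \mathbb{N}) = d$ from the fact that the normal vector $\eta_h$ lies in $\mathbb{N}$ by construction. Your write-up merely makes explicit the codimension-one step (that $\eta_h \notin S_h$ forces $S_h + \mathbb{N} = \mathbb{R}^d$) which the paper leaves implicit.
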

\begin{proof}
By Theorem 16.6 of \cite{roman2007advanced}, we know that the dimension of the intersection of two subspaces is
$$\dim\paren{S_h \cap \mathbb{N}} = \dim(S_h) + \dim(\mathbb{N}) - \dim(S_h + \mathbb{N})$$
Since $S_h$ is ($d-1$)-dimensional and the orthogonal vector (\tt{i.e.} the normal vector) of $h$ (or $h_{flat}$) exists in $\mathbb{N}$ by definition, the dimension of ($S_h + \mathbb{N}$) = $d$. This implies that
$$\dim(S_h \cap \mathbb{N}) = (d-1) + d' - d = d'-1$$
Since $\dim(X_h) = \dim(S_h \cap \mathbb{N})$, thus the lemma follows.
\end{proof}
\subsection{Construction of Map $\mathcal{B}$ and Proof of \propref{bijection}}\label{appendixsub: bijection}
Now, consider the induced set of hyperplanes in the $d'$-dimensional subspace $\mathbb{N}$:
$$\Hndd =
\condcurlybracket{X_h}{h \in \Hnd}$$
With the construction of the induced set of hyperplanes, we can talk about the regions $\boldsymbol{\mathfrak{R}}(\mathcal{A}(\Hndd))$ induced by the arrangement of $\Hndd$ in the $d'$ dimensional subspace $\mathbb{N}$. We would show that every region induced by the arrangement $\mathcal{A}(\Hnd)$ in $\mathbb{R}^d$ contains a point (vector) from a region induced by  $\mathcal{A}(\Hndd)$ in the subspace $\mathbb{N}$. Before we develop ideas, to show that, we provide the following definition which characterizes points contained in different regions:
\begin{definition}[Path-connectivity of points]
Consider a set of hyperplanes $\mathcal{H}$ in $\Rd$. For any two points $u,v \in \Rd$, we say $u$ and $v$ are path-connected $\mathrm{wrt}$ the regions induced by $\cA\parenb{\mathcal{H}}$ if the following equivalent conditions hold:
\begin{itemize}
    \item if the line segment $\lambda u + (1-\lambda) v$ where $\lambda \in \boldsymbol{(}0,1\boldsymbol{)}$ is not intersected by any hyperplane in $\mathcal{H}$
    \item u and v belong to the same region induced by $\cA\parenb{\mathcal{H}}$
\end{itemize}
\end{definition}
\paragraph{Notations} Denote the \tt{orthogonal projection} of a point $u \in \Rd$ onto $\mathbb{N}$ by $\mathbf{proj}_{\mathbb{N}}\paren{u}$.
Denote a region (polytope) in $\regionset$ by $r$.
Consider a point ${\h{}}_r\in r\setminus\mathbb{N}$. Since $r$ contains an \tt{open convex polyhedron}, for some $\epsilon > 0$ $\exists$ a normed ball $\mathbb{B}_2({\h{}}_r,\epsilon)$  not intersected by any hyperplane.\vspace{6pt}\\
\noindent
To prove our intuition developed earlier, we would show that ${\h{}}_r$ (\tt{if} it exists) and $\mathbf{proj}_{\mathbb{N}}({\h{}}_r)$ are \textit{path-connected}.
\begin{lemma}\label{path connectivity}
Following the notations as above, ${\h{}}_r$ and $\mathbf{proj}_{\mathbb{N}}({\h{}}_r)$ are \textit{path-connected} and, thus every region $r\in \regionset$ has points contained in $\mathbb{N}$.
\end{lemma}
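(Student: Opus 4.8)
The plan is to show that the projection of ${\h{}}_r$ onto $\mathbb{N}$ lies in the same region as ${\h{}}_r$ by verifying that no hyperplane in $\Hnd$ separates the two points. The key structural fact I would exploit is that every hyperplane $h \in \Hnd$ has its normal vector $\eta_h$ contained in $\mathbb{N}$ by the very definition of $\mathbb{N}$ as $\textbf{span}$ of these normals. This means that the signed distance of a point to $h$, governed by $\eta_h \cdot \h{} + b_h$, is invariant under moving along directions orthogonal to $\mathbb{N}$. Concretely, I would write ${\h{}}_r = \mathbf{proj}_{\mathbb{N}}({\h{}}_r) + w$, where $w$ is the component of ${\h{}}_r$ orthogonal to $\mathbb{N}$, so that $w \perp \eta_h$ for every $h \in \Hnd$.

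First I would parametrize the line segment $\gamma(\lambda) = \lambda\, {\h{}}_r + (1-\lambda)\,\mathbf{proj}_{\mathbb{N}}({\h{}}_r)$ for $\lambda \in (0,1)$, noting that $\gamma(\lambda) = \mathbf{proj}_{\mathbb{N}}({\h{}}_r) + \lambda w$. Then, for an arbitrary hyperplane $h \in \Hnd$, I would evaluate $\eta_h \cdot \gamma(\lambda) + b_h = \eta_h \cdot \mathbf{proj}_{\mathbb{N}}({\h{}}_r) + \lambda\,(\eta_h \cdot w) + b_h$. Since $w$ is orthogonal to $\mathbb{N}$ and $\eta_h \in \mathbb{N}$, we have $\eta_h \cdot w = 0$, so this expression simplifies to $\eta_h \cdot \mathbf{proj}_{\mathbb{N}}({\h{}}_r) + b_h$, which is a constant independent of $\lambda$. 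The same computation at $\lambda = 1$ gives $\eta_h \cdot {\h{}}_r + b_h$, and since $\eta_h \cdot w = 0$, this equals $\eta_h \cdot \mathbf{proj}_{\mathbb{N}}({\h{}}_r) + b_h$ as well. Hence the sign of $\eta_h \cdot \gamma(\lambda) + b_h$ is constant along the segment and matches the sign at ${\h{}}_r$, which is nonzero because ${\h{}}_r$ lies in the open region $r$ and therefore is not on $h$.

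From this I would conclude that no hyperplane in $\Hnd$ intersects the open segment: each value $\eta_h \cdot \gamma(\lambda) + b_h$ retains the same nonzero sign as $\eta_h \cdot {\h{}}_r + b_h$ throughout $\lambda \in (0,1)$. By the definition of path-connectivity, ${\h{}}_r$ and $\mathbf{proj}_{\mathbb{N}}({\h{}}_r)$ therefore lie in the same region induced by $\cA\parenb{\Hnd}$. Since $\mathbf{proj}_{\mathbb{N}}({\h{}}_r) \in \mathbb{N}$ by construction of the orthogonal projection, every region $r \in \regionset$ that contains a point also contains a point lying in $\mathbb{N}$, which is the second assertion of the lemma. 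I would remark that the case where ${\h{}}_r$ already lies in $\mathbb{N}$ is trivial since then $\mathbf{proj}_{\mathbb{N}}({\h{}}_r) = {\h{}}_r$.

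The main obstacle is not the computation itself, which is short once the orthogonality $\eta_h \cdot w = 0$ is isolated, but rather making rigorous the claim that the projection point genuinely falls strictly inside the open region rather than landing on one of the hyperplanes. Because the signed value $\eta_h \cdot \mathbf{proj}_{\mathbb{N}}({\h{}}_r) + b_h$ equals $\eta_h \cdot {\h{}}_r + b_h \neq 0$ exactly (both agree since $w \perp \eta_h$), the projected point stays off every hyperplane, so this concern is resolved by the same orthogonality identity; I would emphasize this point carefully to ensure the endpoint $\mathbf{proj}_{\mathbb{N}}({\h{}}_r)$ is itself a legitimate interior point of the region rather than a boundary point.
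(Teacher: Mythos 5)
Your proposal is correct and rests on exactly the same key fact as the paper's proof, namely that the projection direction $w = {\h{}}_r - \mathbf{proj}_{\mathbb{N}}({\h{}}_r)$ is orthogonal to every normal $\eta_h \in \mathbb{N}$, so that $\eta_h \cdot \gamma(\lambda) + b_h$ is unchanged along the segment; the paper packages this as a contraposition (a crossing hyperplane would force $\eta_h \cdot {\h{}}_r + b_h = 0$, contradicting the open ball around ${\h{}}_r$), whereas you state it directly as constancy of the affine form. Your direct version is, if anything, slightly tidier, since it also makes explicit that the endpoint $\mathbf{proj}_{\mathbb{N}}({\h{}}_r)$ lies strictly off every hyperplane, a point the paper leaves implicit.
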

\begin{proof}
For the sake of contraposition, assume that ${\h{}}_r$ and $\mathbf{proj}_{\mathbb{N}}({\h{}}_r)$ are \tt{not} path-connected.
Let $h \triangleq \eta_h\cdot \h{}+ b_h = 0 \in \Hnd$ be the intersecting hyperplane. Assume that $h$ intersects the line segment $\lambda{\h{}}_r+ (1-\lambda)\cdot \mathbf{proj}_{\mathbb{N}}({\h{}}_r)$ at the point ${\h{}}_{h,\cap}$ \tt{i.e.} ${\h{}}_{h,\cap} = \lambda'{\h{}}_r+ (1-\lambda')\cdot\mathbf{proj}_{\mathbb{N}}({\h{}}_r)$ for some $\lambda' \in (0,1)$. By the property of ${\h{}}_r$, we realize ${\h{}}_{h,\cap} \notin \mathbb{B}_2({\h{}}_r,\epsilon)$. 
Since $\mathbf{proj}_{\mathbb{N}}(\cdot)$ is an orthogonal projection, we have
\begin{align}
    \eta_h \perp \parenb{\mathbf{proj}_{\mathbb{N}}({\h{}}_r)-{\h{}}_r} \implies \eta_h\cdot \mathbf{proj}_{\mathbb{N}}({\h{}}_r) = \eta_h\cdot{\h{}}_r \label{eqn:eqproj01}
\end{align}
Using \eqnref{eqn:eqproj01} and noting that ${\h{}}_{h,\cap}$ lies on $h$, we have:
$$\eta_h\cdot{\h{}}_{h,\cap} + b_h = 0 \implies \eta_h\cdot\parenb{\lambda'{\h{}}_r+ (1-\lambda')\cdot \mathbf{proj}_{\mathbb{N}}({\h{}}_r)} + b_h = 0 \implies \eta_h\cdot{\h{}}_r+ b_h = 0$$
But this is a contradiction because $\mathbb{B}_2({\h{}}_r,\epsilon)$, by definition, is \tt{not} intersected by any hyperplane in $\Hnd$. Thus, the lemma follows and this asserts that the subspace $\mathbb{N}$ has at least one point contained in any region induced by $\mathcal{A}\paren{\Hnd}$.
\end{proof}
This gives us the insight that information theoretically, the regions induced on $\mathbb{N}$ by $\mathcal{A}(\Hndd)$ has similar structure to the regions induced on $\mathbb{R}^d$ by $\mathcal{A}(\Hnd)$. We would ascertain this promisingly by showing a \tt{bijective} map from $\boldsymbol{\mathfrak{R}}(\mathcal{A}(\Hndd))$ to $\regionset$. Before we construct the map, we have certain inferences to make based on the previous discussion.\vspace{6pt}\\
\noindent
We observe that every region $\hat{r} \in \boldsymbol{\mathfrak{R}}\parenb{\mathcal{A}(\Hndd)}$ is contained in \tt{exactly one} region in $\mathcal{A}(\Hnd)$ \tt{i.e.} $\hat{r} \subseteq r$ for some $r \in \regionset$. If it is not so then we have two points $a_r^{ind}$, $b_r^{ind}$ $\in \hat{r}$ which are \tt{not} path-connected (in $\mathbb{R}^d$). Thus, there is some hyperplane $h \in \Hnd$ which cuts the line segment at some point $z$. But then $z \in \mathbb{N}$ because $\forall \lambda \in (0,1)$ the combination $\lambda a_r^{ind} + (1-\lambda) b_r^{ind} \in \mathbb{N}$, implying $z \in X_h$. Contradiction because $a_r^{ind}$ and $b_r^{ind}$ are path-connected in $\mathbb{N}$.\vspace{6pt}\\
\noindent
Let us define the map $\mathcal{B}$ as follows:
\begin{align*}
  \mathcal{B}:\:\boldsymbol{\mathfrak{R}}\parenb{\mathcal{A}(\Hndd)} &\longrightarrow \regionset\\
  \hat{r} &\longmapsto \mathbf{region}_{\mathcal{A}(\Hnd)}\paren{\hat{r}}
\end{align*}
where $\mathbf{region}_{\mathcal{A}(\Hnd)}\paren{\hat{r}}$ is the region (polytope) of $\mathcal{A}(\Hnd)$ in which the polytope $\hat{r}$ is contained. Using the observation above, the map is well-defined. Using the observation and \lemref{path connectivity}, we claim in \propref{bijection} that $\mathcal{B}$ is a \tt{bijection}, and thus $\rnd = \fr\parenb{\mathcal{A}(\Hndd)}$.
\\
\begin{proof}[Proof of \propref{bijection}]
Denote by $r^{ind}_i$ and $r^{ind}_j$ two regions in $\boldsymbol{\mathfrak{R}}(\mathcal{A}(\Hndd))$.
First, we show that the map $\mathcal{B}$ is an \tt{injection}. For the sake of contraposition, assume it is not injective. 
Assume that $\mathcal{B}(r^{ind}_i)$ =  $\mathcal{B}(r^{ind}_j)$ = $r$ (a region in  $\mathbb{R}^d$). Note that $r^{ind}_i$ and $r^{ind}_j$ are \textit{not} path connected\footnote{Notion of path-connectivity can be extended for two regions (subsets of points) where no two points in the open convex polyhedrons of the regions are path-connected.} in the subspace $\mathbb{N}$. Thus, $\exists$ a \textit{flat} $X_h$ (intersection of flats $h$ and $\mathbf{0} + \mathbb{N}$) which separates $r^{ind}_i$ and $r^{ind}_j$ in $\mathbb{N}$. Since, $r^{ind}_i$, $r^{ind}_j$ $\subseteq r$, thus $h$ separates $r^{ind}_i$ and $r^{ind}_j$ in $r$, which implies $r^{ind}_i$ and $r^{ind}_j$ are \tt{not} path-connected in $\mathbb{R}^d$. Contradiction! Thus, $\mathcal{B}$ is an injection.

Using \lemref{path connectivity}, we know any region $r\in \regionset$ has points contained in $\mathbb{N}$. The observation above implies that $\exists$ a \textit{unique} $\hat{r} \in \boldsymbol{\mathfrak{R}}(\mathcal{A}(\Hndd))$ such that $\hat{r} \subseteq r$. Thus, $\mathcal{B}$ is a surjection. We have shown that $\mathcal{B}$ is both an injection and a surjection, implying it is a bijection. This also implies that:
\begin{align*}
    \fr\parenb{\mathcal{A}(\Hnd)} = \fr\parenb{\mathcal{A}(\Hndd)}
\end{align*}
\end{proof}
\paragraph{Essentialization and Boolean lattices} The technique to reduce the counting problem of regions to the normal space, as used above, is studied more formally as \tt{essentialization} as discussed in \citet[chap: An introduction to hyperplane arrangement]{miller2007geometric}. One could potentially devise an alternate proof for \propref{bijection} using the technique but it would require introducing several other development on characteristics polynomials, inclusion lattices, m\"{o}bius functions, and inversions among others. On the other hand, our proof technique uses simpler geometric ideas to prove the result from the first principles. Similarly, we could also use \tt{boolean algebra} as discussed in \citet[chap: An introduction to hyperplane arrangement]{book,miller2007geometric} to show the result. Inclusion lattice for a $d'$-relaxed general position hyperplane arrangement could be shown to be isomorphic to a $d'$ \tt{truncated boolean algebra}, and thus one could arrive at a result similar to \propref{bijection}. 

\subsection{Proof of \lemref{key lemma}}\label{appendixsub: key lemma}
Using \propref{bijection} we have a constructively alternate way to ascertain $\rnd$. The previous discussion and results are useful in the sense that we can indeed find $\fr\parenb{\mathcal{A}(\Hndd)}$. As it turns out, $\cA\parenb{\Hndd}$ is in $d'$-relaxed general position arrangement. Since, counting the regions induced by $\cA\parenb{\Hndd}$ on the $d'$-dimensional subspace $\mathbb{N}$ arranged in $d'$-relaxed general position is same as counting the number of regions induced on $\mathbb{R}^{d'}$ by a size $n$ subset of $d'$-general position\footnote{We, interchangeably, use the term $d'$-\tt{general position} or \tt{general position} for $d'$-\tt{relaxed general position} arrangement in $\mathbb{R}^{d'}$} arranged hyperplanes, thus we can directly count $\boldsymbol{\mathfrak{R}}\parenb{\mathcal{A}(\Hndd)}$ using \lemref{d-general} and subsequent \corref{d-exact}. We show in the key \lemref{key lemma} that $\cA\parenb{\Hndd}$ is in $d'$-relaxed general position arrangement.\\

\begin{proof}[Proof of Key \lemref{key lemma}]
Let $1 \le k \le d'$. Consider an arbitrary size $k$ subset $\mathcal{S}_k^{ind} \subseteq \Hndd$ of hyperplanes ($d'-1$-dimensional flats in $\mathbb{N}$). We denote the size $k$ subset of corresponding hyperplanes in $\Rd$ by $\mathcal{S}_k \subseteq \Hnd$  (($d-1$)-dimensional flats). Since $\mathcal{A}(\Hnd)$ is in $d'$-\textit{relaxed general position} we notice that $\dim\paren{\bigcap_{h \in \mathcal{S}_k} h} = d - k$. Define the orthogonal subspace (complement) of $\mathbb{N}$
\begin{align*}
    \mathbb{N}^{\perp} = \condcurlybracket{\h{}\in \mathbb{R}^d}{{\h{}}\cdot v = 0\: \forall\: v \in \mathbb{N}}
\end{align*}
Using Theorem 16.5 \citep[as shown in][]{roman2007advanced} and noting that for any $h' \in \mathcal{S}_k^{ind}$ we can write $h' \equiv h \bigcap \parenb{\mathbf{0} + \mathbb{N}}$ for some $h \in \mathcal{S}_k$, we have:\\
\begin{align}
    \Biggparen{\bigcap_{h' \in \mathcal{S}_k^{ind}} h'} 
    = \paren{\bigcap_{h \in \mathcal{S}_k} h} \bigcap \parenb{\mathbf{0} + \mathbb{N}} \label{eqn:eqi01}
\end{align}
\noindent
Using the representation of \textit{flats}, we can write 
\begin{equation}
\bigcap_{h \in \mathcal{S}_k} h = \nu + W_{\cap}\:\: \textnormal{where}\:\: \nu \in \bigcap_{h \in \mathcal{S}_k} h\:\: \textnormal{and}\:\: W_{\cap} \triangleq \bigcap_{\h{}+ W \in \mathcal{S}_k}W  \label{eqn:eqf01}
\end{equation}
$\mathrm{WLOG}$ we enumerate the hyperplanes in $\mathcal{S}_k$ as $\curlybracket{h^{(1)}, h^{(2)},\dots,h^{(k)} }$. Now, we construct the matrix $\boldlam_k$ using the normal vectors of the hyperplanes in $\mathcal{S}_k$ \tt{i.e.} $\boldlam_{k}[i:] = \eta^{(i)}$ where $h^{(i)} \triangleq \eta^{(i)}\cdot \h{}+ b^{(i)} = 0$ $\forall\: i \in [k]$ ; to solve the system of equations for the intersection of $\mathcal{S}_k$ as follows:
\begin{align}
    \boldlam_k \h{}= -\parenb{b^{(1)},b^{(2)},\dots,b^{(k)}}^{\top} \label{eqn:eqk01}
\end{align}
Since $\bigcap_{h \in \mathcal{S}_k} h \neq \emptyset$, $\exists$ ${\h{}}_0 \in \mathbb{R}^d$ such that $\boldlam_k{\h{}}_0 = -\parenb{b^{(1)},b^{(2)},\dots,b^{(k)}}^{\top}$. But then any solution of $\boldlam_kz = 0$ implies $z-{\h{}}_0$ is a solution of \eqnref{eqn:eqk01}. We can succinctly write this as follows:
\begin{align}
    \boldlam_k\h{}= -\parenb{b^{(1)},b^{(2)},\dots,b^{(k)}}^{\top} \Longleftrightarrow \boldlam_k\h{}= \boldlam_k{\h{}}_0 \Longleftrightarrow \boldlam_k(\h{}-{\h{}}_0) = 0 \label{eqn:eqk02}
\end{align}
This implies that solving $\boldlam_kz = 0$ sufficiently solves \eqnref{eqn:eqk01}. We notice, by definition of $\mathbb{N}^{\perp}$ and construction of $\boldlam_k$,  $ \boldlam_k \perp \mathbb{N}^{\perp}$. Thus, $\mathbb{N}^{\perp}$ is a solution of $\boldlam_kz = 0$. But then, using \eqnref{eqn:eqk02} 
\begin{equation}
    -{\h{}}_0 + \mathbb{N}^{\perp} \subseteq \bigcap_{h \in \mathcal{S}_k} h \label{eqn:eqk03}
\end{equation}
At this point, we observe a small inclusion which would be helpful in claiming the dimension of $\mathcal{S}_k^{ind}$. We notice that $\parenb{-{\h{}}_0 + \mathbb{N}^{\perp}}$ and $\bigcap_{h \in \mathcal{S}_k} h$ are flats in $\Rd$ by definition and \eqnref{eqn:eqf01} respectively. 
Now, combining \eqnref{eqn:eqk03} and Theorem 16.1 \citep[as shown in][]{roman2007advanced}, we get that $\mathbb{N}^{\perp} \subseteq W_{\cap}$.\\
\newline
Finally, we would argue on the dimension of 
$\paren{\bigcap_{h' \in \mathcal{S}_k^{ind}} h'}$ as follows:
\begin{align}
\dim\Biggparen{\bigcap_{h' \in \mathcal{S}_k^{ind}} h'} 
    &= \dim\paren{\paren{\bigcap_{h \in \mathcal{S}_k} h} \bigcap \paren{\mathbf{0} + \mathbb{N}}} \label{eq:eqg01}\\ 
    &= \dim\paren{\bigcap_{h \in \mathcal{S}_k} h} + \dim\parenb{\mathbf{0} + \mathbb{N}} - \dim\parenb{W_{\cap} + \mathbb{N}}  \label{eq:eqg02}\\ 
    &= (d-k) + d' - d \label{eq:eqg03}\\
    &= d'-k  \nonumber
\end{align}
\eqnref{eq:eqg01} is the direct consequence of \eqnref{eqn:eqi01}. \eqnref{eq:eqg02} follows from Theorem 16.6 in \cite{roman2007advanced}.
Since $\mathbb{N}^{\perp} \subseteq W_{\cap}$ and $\mathbb{N}^{\perp}$ is \tt{orthogonal} to $\mathbb{N}$, thus $\dim\parenb{W_{\cap} + \mathbb{N}} = d$ (dimension of the space). Since, $\Hnd$ is in $d'$-relaxed general position and $k \le d'$, $\dim\bigparen{\bigcap_{h \in \mathcal{S}_k} h} = d-k$. These observations yield \eqnref{eq:eqg03}. Thus, for any arbitrary subset $\mathcal{S}_k^{ind}$ of size $1 \le k \le d'$, we have shown that $\dim\bigparen{\bigcap_{h' \in \mathcal{S}_k^{ind}} h'} = d'-k$.

Notice that if we select a subset of $\Hndd$ of size more than $d'$, then they don't intersect at any point since the corresponding subset of hyperplanes in $\Hnd$ has \tt{empty} intersection.\vspace{6pt}\\
\noindent
Thus, following \defref{defn relaxed general position}, we show that $\Hndd$ is in $d'$-relaxed general position. Hence, the lemma follows.
\end{proof}
\subsection{Proof of \thmref{thm:d'-general}}\label{appendixsub: proof of thm1}
We note that a subspace of dimension $k$ of $\Rd$ is \tt{isomorphic} to $\mathbb{R}^k$. Thus, $d'$-\tt{relaxed general position} hyperplane arrangement $\cA(\Hndd)$ in $\mathbb{N}$ can be \tt{uniquely} mapped to a $d'$-\tt{relaxed general position} hyperplane arrangement of $n$ hyperplanes in $\mathbb{R}^{d'}$. It implies that we can use \lemref{d-general} (discussed and proved in \appref{appendixsub: bounds on number of regions}, provides an exact form for the number of regions induced in $\Rd$ when the hyperplane arrangement is in general position) to ascertain $\fr(\mathcal{A}(\Hndd))$ since $\cA(\Hndd)$ satisfies all the required premises \tt{i.e.} $d'$-general position in $d'$ dimensional Euclidean space. Thus, we have
\begin{align*}
    \fr(\mathcal{A}(\Hndd)) = Q(n, d') = \sum_{i=0}^{d'} \binom{n}{i}
\end{align*}
Using \propref{bijection}, we finally show that:
\begin{align*}
    \fr\parenb{\mathcal{A}(\Hnd)} = \fr\parenb{\mathcal{A}(\Hndd)} = \sum_{i=0}^{d'} \binom{n}{i}
\end{align*}
This completes the proof of \thmref{thm:d'-general}.

\paragraph{Remark}
\label{appsubsec: discussion on extension}
One can study the arrangement of hyperplanes $\cA(\Hnd)$ using the characteristic polynomials as discussed in (An introduction to hyperplane arrangments), \cite{miller2007geometric}.  \citet{trove.nla.gov.au/work/10888269} connected the computation of the number of regions in an arrangment to the corresponding characteristic polynomials. But it can be extremely tricky to find exact (simple) forms for those polynomials even for rather straight-forward arragements. \citet{Fukuda1991BoundingTN} explicitly mentioned via citing the work of
\citet{Vergnas1980ConvexityIO} and \citet{trove.nla.gov.au/work/10888269} that computing the number of regions for arbitrary hyperplane arrangement is non-trivial as it depends on the underlying matroid structure. In our work, we are able to establish an exact form for a non-simple setting. The geometric ideas to understand the subspaces spanned by the normals (aka essentialization) corresponding to the hyperplanes can be further leveraged to establish exact forms or average teaching results for more general arragements than relaxed general position. One possible study could be to understand the induced regions in terms of faces for which intersection of hyperplanes on a given hyperplane could be studied. Our idea of path-connectivity could be a potential direction to find out simple forms for the characteristic polynomials corresponding to more relaxed arrangements.

\clearpage
\section{Faces Induced by Intersections of Halfspaces: Proof of \propref{faces_teach}}\label{appendix: faces teaching}
In this section, we provide the proof of \propref{faces_teach} for the number of faces induced by the hyperplane arrangement $\cA(\Hnd)$. 
\newline
\begin{proof}[Proof of \propref{faces_teach}]
To count the number of faces induced by the arrangement $\cA\parenb{\Hnd}$ on the hyperplanes, one way it can be ascertained is by counting the number of regions/faces induced on any hyperplane. If we fix any hyperplane $h^* \in \Hnd$ and look at the intersections of $h^*$ with $\Hnd\setminus \{h^*\}$, we can count the number of regions formed on $h^*$.

If $d' = 1$, then $\ff\parenb{\cA(\Hnd)} = n$ since all the hyperplanes are parallel to each other. Thus, we  assume that $d' > 1$ for further discussion.

Since $h^*$ can be interpreted as a \tt{flat}, we can write $h^* \equiv v^* + W^*$ for some vector $v^* \in \mathbb{R}^d$ and ($d-1$)-dimensional subspace $W^*$ of $\Rd$. By \defref{defn relaxed general position}, $\parenb{h^* \cap h}$ is a ($d-2$)-dimensional flat $\forall h \in \Hnd\setminus\{h^*\}$. Thus, we define by $\boldsymbol{\mathcal{H}'}_{n-1,d-1}  \triangleq  \condcurlybracket{\parenb{h^*\cap \ell}}{\ell \in \Hnd\setminus \{h^*\}}$ the induced set of $n-1$ flats (intersections) on $h^*$ (which is a $(d-1)$-dimensional flat). We note that for any $1 \le k \le d'-1$, if $\mathrm{T}_k \subset \boldsymbol{\mathcal{H}'}_{n-1,d-1}$ then 
\[\dim\paren{\bigcap_{\ell \in \mathrm{T}_k}\ell} = (d-1)-k \]
It holds because if $\dim\bigparen{\bigcap_{\ell \in \mathrm{T}_k}\ell} \neq (d-1)-k$ then $\dim\bigparen{\bigparen{\bigcap_{\ell \in \mathrm{T}_k}\ell} \bigcap h^*} \neq d-(k+1)$ since $\bigparen{\bigcap_{\ell \in \mathrm{T}_k}\ell} \subset h^*$. This violates $d'$-relaxed general position arrangement of $\Hnd$. Thus, $\boldsymbol{\mathcal{H}'}_{n-1,d-1}$ is in $(d'-1)$-relaxed general position arrangement. Since counting the number of regions induced on $h^*$ by $\boldsymbol{\mathcal{H}'}_{n-1,d-1}$ is the same as ascertaining $\fr\parenb{\cA(\Hndo)}$ \tt{i.e.} $(n-1)$ hyperplanes in $\mathbb{R}^{d-1}$ in $(d'-1)$-relaxed general position, using \thmref{thm:d'-general} we get: 
\[\fr\parenb{\cA(\boldsymbol{\mathcal{H}'}_{n-1,d-1})} = \fr\parenb{\cA(\Hndo)} = \sum_{i=0}^{d'-1} \binom{n-1}{i} \]
Since, there are $n$ hyperplanes thus the proposition follows,
\[\ff\parenb{\cA(\Hnd)}  = n\cdot \sum_{i=0}^{d'-1}\binom{n-1}{i}\] 
which completes the proof.
\end{proof}
\clearpage
\section{Teaching Complexity of Convex Polytopes: Proof of \thmref{theorem: Main Theorem}}\label{appendix: main theorem}
In this section, we provide the proof of the main \thmref{theorem: Main Theorem}. It is divided in three subsections: (i) worst-case of teaching complexity of convex polytopes of $\bigTheta{n}$ as part of \thmref{theorem: Main Theorem} in \secref{appendixsub: worst case},  (ii) bounds on $\rnd$ via proof of \corref{cor: bounds regions} in \secref{appendixsub:uplowregion} and (iii) proof of average-teaching complexity of Main \thmref{theorem: Main Theorem} in \secref{appendicsub: main theorem}.

\subsection{Worst-case Complexity for Teaching: $\bigTheta{n}$ }\label{appendixsub: worst case}
We would show the lower bound on the \tt{worst-case} of $\bigOmega{n}$ and notice that upper bound is trivial.
\newline
Consider $n$-dimensional hypersphere $\mathbb{S}$ in $\Rd$ and $\mathbb{S}_{pos}$ the restriction in the positive quadrant \tt{i.e.} all coordinates are positive. 
\vspace{6pt}\\
\noindent
To give an intuition of the worst-case scenario, we start with $\mathbb{R}^2$. Consider the unit circle $x^2+y^2 = 1$ restricted in the positive quadrant. We randomly drop $n$ points on the arc and draw tangents to them. Notice that \tt{no} three tangents can intersect at a point. Moreover, since all the tangents lie in a single quadrant, they can't be parallel. Thus, any two have a \tt{non-empty} intersection. It implies the $n$ hyperplanes thus constructed are in 2-\tt{relaxed general position}. Notice that the arc forms a convex connected set with all the hyperplanes sharing a point. Thus, arrangement of the tangents induces a region which has $n$ many sides or faces.
\vspace{6pt}\\
\noindent
We use the similar idea to construct $n$ hyperplanes in $\Rd$. Let us consider $\mathbb{S}_{pos}$ the restriction of unit hypersphere in $\Rd$. Now, drop $n$ points on the restriction in such a way that any $d$ are \tt{linearly independent}. Denote the $n$ points as $\curlybracket{x^{(1)}, x^{(2)}, \dots, x^{(n)}}$.
Now, consider the matrix $\boldlam$ defined by $x^{(i)}$ as row for each $i \in \bracket{n}$. Thus, for $k\in[d]$, any $k$ rows are \tt{linearly independent}.
Consider the hyperplanes defined by the $n$ points. Notice that the \tt{bias} is same for all the hyperplanes. Denote the hyperplanes by $\curlybracket{h^{(1)}, h^{(2)}, \dots, h^{(n)}}$. It is easy to see that we can equivalently write $h^{(i)} \equiv x^{(n)}\cdot y + 1$ for variable $y \in \Rd$ $\forall i \in \bracket{n}$. Let us define for $k\in\bracket{d}$ $\mathbb{I}_{[k]}$ $\triangleq$ $\{i_1,i_2,...,i_k\}$ as $k$ indices for rows. Denote by $\boldlam_{\mathbb{I}_{[k]}}$ = $\boldlam_{[\mathbb{I}_{[k]}\times d]}$ (rows of $\mathbf{\boldlam}$ corresponding to $\mathbb{I}_{[k]}$)
If we consider the linear system equation
\begin{align}
    \boldlam_{\mathbb{I}_{[k]}}\cdot y = \mathbf{1}^k \label{eqn:eqworst}
\end{align}
Notice that $\rank{\boldlam_{\mathbb{I}_{[k]}}} = k$ because row rank is $k$. Thus, \eqnref{eqn:eqworst} has a solution, call it $y_0$.\\
Using rank-nullity (\thmref{theorem:rank nullity}), we realize that $\dim\parenb{\condcurlybracket{y}{\boldlam_{\mathbb{I}_{[k]}}\cdot (y-y_0) = \mathbf{0}}}$ is $k$. Define a matrix $\boldlam_h$ with each row as $(x^{(i)},1) \forall i \in[n]$.
Now, if rewrite \eqnref{eqn:eqworst} as :
\begin{align}
    \boldlam_{\mathbb{I}_{[k]}}\cdot y = \mathbf{1}^k \Leftrightarrow (\boldlam_h)_{\mathbb{I}_{[k]}}\cdot\binom{y}{1} = 0 \label{eqn:eqworst1}
\end{align}
\eqnref{eqn:eqworst1} implies that $\dim\parenb{\condcurlybracket{y}{\boldlam_{\mathbb{I}_{[k]}}\cdot (y-y_0) = \mathbf{0}}}$ =  $\dim\parenb{\condcurlybracket{y}{(\boldlam_h)_{\mathbb{I}_{[k]}}\cdot\binom{y}{1} = 0}}$ = $d-k$. But solving \eqnref{eqn:eqworst1} is same as finding an intersection point of the hyperplane corresponding to rows $\mathbb{I}_{[k]}$ in $\boldlam_h$. Thus, we show that for any $k\in[n]$ subset of hyperplanes in $\curlybracket{h^{(1)}, h^{(2)}, \dots, h^{(n)}}$, they intersect in a $(d-k)$-dimensional plane. Thus, these hyperplanes are in $d$-relaxed general position. Since, $\mathbb{S}_{pos}$ is contained in exactly one halfspace of every hyperplane touching it implies it is contained in one region induced by the hyperplanes arrangement. Since all the hyperplanes share one point in that region, thus we show that there is one region with $n$ faces for arbitrary $d$-dimensional Euclidean space. This implies, the \tt{worst-case} of teaching complexity of convex polytopes is $\bigTheta{n}$.

This completes the second part of \thmref{theorem: Main Theorem}.
\subsection{ Upper and Lower Bound on number of regions}\label{appendixsub:uplowregion}
In this subsection, we establish bounds on $\rnd$ as \corref{cor: bounds regions}.
\\
\begin{proof}[Proof of \corref{cor: bounds regions}]\\
We'll prove the corollary in two parts -- by establishing the upper and lower bounds on $\rnd$.\vspace{6pt}\\
\noindent
The proof for the upper bound is based on a geometric series argument and uses the definition of a binomial term. First note that, using \thmref{thm:d'-general}, we have:
 \begin{align*}
     \fr\parenb{\mathcal{A}(\Hnd)} = \sum_{i=0}^{d'} \binom{n}{i}
 \end{align*}
Now, we observe the following computation:
 \begin{align*}
     \frac{\sum_{i=0}^{d'} \binom{n}{i}}{\binom{n}{d'}} &=\sum^{0}_{i=d'}\frac{\binom{n}{i}}{\binom{n}{d'}}\\
     &= 1 + \frac{d'}{(n-d'+1)}+ \frac{d'(d'-1)}{(n-d'+1)(n-d'+2)}+\cdots+\frac{d'!}{(n-d'+1)\cdots(n-d'+d')}\\
     &\le 1 + \frac{d'}{(n-d'+1)}+ \frac{(d')^2}{(n-d'+1)^2}+\cdots +\frac{(d')^{d'}}{(n-d'+1)^{d'}}\\
     &\le \sum_{i = 0}^{\infty}\left(\frac{d'}{(n-d'+1)}\right)^i\\
     &= \frac{1}{1-\frac{d'}{n-d'+1}} = \frac{n-d'+1}{n-2d'+1}
\end{align*}
The last inequality establishes the upper bound in the corollary.
\vspace{6pt}\\
For the lower bound we note that:
\begin{equation}
   \rnd =  \sum_{i=0}^{d'} \binom{n}{i} \ge \binom{n-1}{d'} \nonumber 
\end{equation}
Hence, the corollary is proven.
\end{proof}
\subsection{ Proof of \thmref{theorem: Main Theorem}}\label{appendicsub: main theorem}
In the subsection \ref{appendixsub:uplowregion}, we proved the key corollary to show tight bounds on $\rnd$. We use \corref{cor: bounds regions} to show the stated bounds on \ref{eqn:tag:a.1}-- upper bound in \lemref{theorem: upper Main Theorem} and lower bound in \lemref{theorem: Lower main theorem}.
We combine \lemref{theorem: upper Main Theorem} and \lemref{theorem: Lower main theorem} to prove the Main \thmref{theorem: Main Theorem}.

To simplify the notations, we use $Q(n,d)$ (discussed in details in \appref{appendixsub: bounds on number of regions}) to denote the number of regions induced by $n$ hyperplanes in $\Rd$ arranged in general position (\tt{cf} \defref{defn general position}). We note that, in the case of $d'$-\tt{relaxed general position} arrangement, $\rnd = Q(n, d')$ and $\ff\parenb{\cA(\Hnd)} = n\cdot Q(n-1, d'-1)$. This follows from the recursion on $Q(\cdot, \cdot)$ \tt{i.e.} $Q(n,d) = Q(n-1, d) + Q(n-1,d-1)$ (for $n > d$), as discussed in \lemref{d-general} and the subsequent exact form in \corref{d-exact} (in \appref{appendixsub: bounds on number of regions}).
We rewrite $\rnd$ and $\ff\parenb{\cA(\Hnd)}$ in terms of $Q(\cdot, \cdot)$ so that any bound on $Q(\cdot, \cdot)$ would help us in bounding $\ff\parenb{\cA(\Hnd)}/\rnd$. We leverage tight bounds (upper and lower) on the ratio $Q(n-1, d')/Q(n-1, d'-1)$ to achieve the results in the main theorem. We would formally state the two lemmas and provide their proofs before we complete the proof of the main theorem of the section.

\begin{lemma}[Upper bound]\label{theorem: upper Main Theorem}
 Assume $\Hnd$ is in $d'$-\tt{relaxed general position}. Assume $\rgn$ $\sim$ $\mathcal{U}$. Let the random variable $M_n$ denote the number of halfspace queries that are requested in the teaching \algoref{algo: teaching halfspace}, then 
\[\expctover{\cU}{M_n} = \cO(d')\]
\tt{i.e.} the average teaching complexity of convex polytopes is upper bounded by $\cO(d')$.
\end{lemma}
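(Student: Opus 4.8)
The plan is to start from the exact expression for the average teaching complexity given in \eqref{eqn:tag:a.1}, namely $\expctover{\cU}{M_n} = 2\,\ff\parenb{\cA(\Hnd)}\big/\fr\parenb{\cA(\Hnd)}$, which holds because the teaching set of a region is exactly its set of bounding faces and each face is shared by precisely two regions. Substituting the closed forms from \thmref{thm:d'-general} and \propref{faces_teach}, this becomes
\begin{align*}
\expctover{\cU}{M_n} = \frac{2n\cdot\sum_{i=0}^{d'-1}\binom{n-1}{i}}{\sum_{i=0}^{d'}\binom{n}{i}}.
\end{align*}
Writing $A \triangleq \sum_{i=0}^{d'-1}\binom{n-1}{i}$ and $B \triangleq \binom{n-1}{d'}$, Pascal's identity gives $\fr\parenb{\cA(\Hnd)} = B + 2A$, so the quantity to bound is precisely $\frac{2nA}{B+2A}$.

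Next I would control the ratio $A/B$ using the same geometric-series estimate that underlies \corref{cor: bounds regions}. Since consecutive binomial ratios satisfy $\binom{n-1}{i}/\binom{n-1}{i+1} = (i+1)/(n-1-i) \le d'/(n-d')$ for every $i+1\le d'$, summing the resulting geometric series (which converges exactly when $n>2d'$) yields $A \le B\cdot \frac{d'}{n-2d'}$. Dropping the $2A$ term from the denominator then gives $\frac{2nA}{B+2A}\le \frac{2nA}{B}\le \frac{2nd'}{n-2d'}$, which is already $\bigO{d'}$ whenever $n$ is bounded away from $2d'$; for instance $n\ge 4d'$ forces $n-2d'\ge n/2$ and hence the explicit bound $4d'$.

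The main obstacle is the regime $n\approx 2d'$, where the factor $\frac{n}{n-2d'}$ blows up and the estimate above degrades to $\bigO{d'^2}$. The fix is to retain the $2A$ term rather than discard it: since $\fr\parenb{\cA(\Hnd)}=B+2A\ge 2A$, we always have $\frac{2nA}{B+2A}\le \frac{2nA}{2A}=n$, which is simply the trivial observation that every region is bounded by at most $n$ of the $n$ hyperplanes. Combining the two estimates finishes the argument by cases: for $n\ge 4d'$ the geometric bound gives $\expctover{\cU}{M_n}\le \frac{2nd'}{n-2d'}\le 4d'$, while for $2d'<n<4d'$ the trivial bound gives $\expctover{\cU}{M_n}\le n<4d'$. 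In either case $\expctover{\cU}{M_n}\le 4d' = \bigO{d'}$, which is the claim of the lemma.
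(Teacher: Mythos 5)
Your proposal is correct and follows essentially the same route as the paper's proof: your Pascal decomposition $\fr\parenb{\cA(\Hnd)} = B + 2A$ is exactly the paper's recursion $Q(n,d') = Q(n-1,d') + Q(n-1,d'-1)$ (with $A = Q(n-1,d'-1)$ and $A+B = Q(n-1,d')$), your geometric-series bound on $A/B$ is the same estimate the paper packages as \corref{cor: bounds regions} and its ratio lemma, and your fallback bound $\expctover{\cU}{M_n}\le n$ is the paper's Case~2. The only flaw is an incomplete case enumeration: your two cases cover only $n > 2d'$, while the lemma imposes no such restriction (e.g.\ $n \le 2d'$ occurs already for $n=d'=d$ hyperplanes in general position); the repair is one line, since your trivial bound $2nA/(B+2A)\le n$ holds unconditionally and yields $\expctover{\cU}{M_n}\le n\le 2d'$ in the remaining regime, so the conclusion $\expctover{\cU}{M_n}\le 4d' = \bigO{d'}$ stands for all $n$.
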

\begin{proof}
Since the \tt{target hypotheses} are sampled uniformly at random, each hypothesis is enclosed by $\ff(\cA(\Hnd))\big/\fr(\cA(\Hnd)$ hyperplanes on average.

We first provide an upper bound on the average teaching complexity and using similar technique show a lower bound.\\
Combining \thmref{thm:d'-general}, \lemref{d-general}, upper bound in \corref{cor: bounds regions}, and \propref{faces_teach}, we prove the lemma in two cases:\\
\newline
\noindent
\underline{\textbf{Case} 1:} $n > 2d'$ ($n$ is sufficiently large)\\
\begin{align}
 \frac{\ff\parenb{\cA(\Hnd)}}{\fr\parenb{\cA(\Hnd)}}  
 &= \frac{n\cdot Q(n-1, d'-1)}{Q(n,d')} \label{eqn:eq01}\\[2pt] 
 &= \frac{n\cdot Q(n-1,d'-1)}{Q(n-1,d') + Q(n-1,d'-1)} \label{eqn:eq02}\\[2pt] 
 &= n \cdot \Biggparen{1\bigg/\biggparen{\frac{Q(n-1,d')}{Q(n-1,d'-1)} + 1}} \nonumber \\[2pt] 
 &\le n \cdot \Biggparen{1\bigg/\biggparen{\frac{n-1}{2d'} + 1}} \label{eqn:eq03}\\[2pt] 
 &=  2d'\cdot \Biggparen{1 \bigg/ \biggparen{1+\frac{2d'-1}{n}}}\nonumber\\
 &\le 2d'  \label{eqn:eq04}.\: 
\end{align}

\eqnref{eqn:eq01} follows using \thmref{thm:d'-general} and \propref{faces_teach}, \eqnref{eqn:eq02} is based on the recursion mentioned in \lemref{d-general}, \eqnref{eqn:eq03} is bounded using \lemref{ratio bound} and in \eqnref{eqn:eq04}, we observe that $0 < \frac{2d'-1}{n}$.\\
\newline
\noindent
\underline{\textbf{Case} 2:}\: $n \le 2d'$ $\implies$ $n = \mathcal{O}\parenb{d'}$. This trivially gives $\mathcal{O}(d')$ as each \tt{target hypothesis} is enclosed by at the most $n$ hyperplanes.

Thus, in the two cases we have shown that the average teaching complexity of the algoithm is upper bounded by $\mathcal{O}(d')$.
\end{proof}

\begin{lemma}[Lower bound]\label{theorem: Lower main theorem}
Assume $\Hnd$ is in $d'$-\tt{relaxed general position}, and $r$ $\sim$ $\mathcal{U}$. Let the random variable $M_n$ denote the number of halfspace queries that are requested in the teaching \algoref{algo: teaching halfspace}, then 
\begin{equation*}
    \expctover{\cU}{M_n} = \bigOmega{d'}
\end{equation*}
\tt{i.e.} the average teaching complexity of convex polytopes is lower bounded by $\bigOmega{d'}$. 
\end{lemma}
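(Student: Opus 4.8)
The plan is to reduce the lower bound to a single inequality on a ratio of binomial-sum quantities, mirroring the upper bound of \lemref{theorem: upper Main Theorem} but controlling the relevant ratio from the opposite side. First I would invoke the identity \ref{eqn:tag:a.1}, namely $\expctover{\cU}{M_n} = 2\,\ff\parenb{\cA(\Hnd)}/\fr\parenb{\cA(\Hnd)}$ (each face borders exactly two regions, so summing the face counts over all regions yields $2\ff\parenb{\cA(\Hnd)}$). Substituting \thmref{thm:d'-general} and \propref{faces_teach}, writing $\fr\parenb{\cA(\Hnd)} = Q(n,d')$ and $\ff\parenb{\cA(\Hnd)} = n\,Q(n-1,d'-1)$, and applying the Pascal-type recursion $Q(n,d') = Q(n-1,d') + Q(n-1,d'-1)$ from \lemref{d-general}, I get
\begin{align*}
\expctover{\cU}{M_n} = \frac{2n\,Q(n-1,d'-1)}{Q(n,d')} = \frac{2n}{1+\rho}, \qquad \rho \coloneqq \frac{Q(n-1,d')}{Q(n-1,d'-1)}.
\end{align*}
Thus a \emph{lower} bound on $\expctover{\cU}{M_n}$ follows from an \emph{upper} bound on $\rho$, whereas the upper-bound proof instead used a lower bound on $\rho$ (via \lemref{ratio bound}).

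The key step is therefore to bound $\rho$ from above. I would write $Q(n-1,d') = Q(n-1,d'-1) + \binom{n-1}{d'}$ and use the trivial estimate $Q(n-1,d'-1) = \sum_{i=0}^{d'-1}\binom{n-1}{i} \ge \binom{n-1}{d'-1}$ to obtain
\begin{align*}
\rho = 1 + \frac{\binom{n-1}{d'}}{Q(n-1,d'-1)} \le 1 + \frac{\binom{n-1}{d'}}{\binom{n-1}{d'-1}} = 1 + \frac{n-d'}{d'} = \frac{n}{d'}.
\end{align*}
Plugging this in and using $d' \le n$ (so that $n+d' \le 2n$) then yields
\begin{align*}
\expctover{\cU}{M_n} = \frac{2n}{1+\rho} \ge \frac{2n}{1 + n/d'} = \frac{2n\,d'}{n+d'} \ge d',
\end{align*}
which is exactly the claimed $\bigOmega{d'}$.

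The main obstacle will be arranging the ratio estimate so that it is tight in the correct direction: the upper-bound lemma needed $\rho \gtrsim n/d'$ while here I need $\rho \lesssim n/d'$, and the two together pin down $\rho = \bigTheta{n/d'}$, which is precisely what forces the average to be $\bigTheta{d'}$ rather than merely $\bigO{d'}$. I expect the remaining care to be purely in the boundary regime: both $Q(n-1,d'-1) \ge \binom{n-1}{d'-1}$ and the step $n+d' \le 2n$ only require $d' \le n$, which is the natural regime (indeed $d' \le d \le n$) and may be taken without loss of generality, since for $n < d'$ the $d'$-relaxed general position condition imposes no null-intersection constraint and $d'$ may simply be replaced by $n$. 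Notably, unlike the upper bound, no case split at $n = 2d'$ is needed here. Combining this lemma with \lemref{theorem: upper Main Theorem} then gives $\expctover{\cU}{M_n} = \bigTheta{d'}$, completing the average-case part of \thmref{theorem: Main Theorem}.
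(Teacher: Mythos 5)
Your proof is correct and follows essentially the same route as the paper's: both reduce $\expctover{\cU}{M_n}$ to the ratio $n\,Q(n-1,d'-1)/Q(n,d')$ via \thmref{thm:d'-general}, \propref{faces_teach} and the recursion of \lemref{d-general}, and then bound $Q(n-1,d')/Q(n-1,d'-1)$ from above by lower-bounding $Q(n-1,d'-1)$ with a single binomial term. The only cosmetic difference is that you use $Q(n-1,d'-1)\ge\binom{n-1}{d'-1}$ directly, yielding the constant $1$, whereas the paper invokes the lower bound of \corref{cor: bounds regions} (effectively $Q(n-1,d'-1)\ge\binom{n-2}{d'-1}$) and settles for the constant $1/3$.
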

 \begin{proof} Following similar steps as \lemref{theorem: upper Main Theorem}; for sufficiently large $n > d$ we get:
\begin{align}
    \frac{\ff\parenb{\cA(\Hnd)}}{\fr\parenb{\cA(\Hnd)}}  = \frac{n\cdot Q(n-1, d'-1)}{Q(n,d')}
 &= \frac{n\cdot Q(n-1,d'-1)}{Q(n-1,d') + Q(n-1,d'-1)} \label{eqn:eqlow01}\\[2pt] 
 &= n \cdot \Biggparen{1\bigg/\biggparen{\frac{Q(n-1,d')}{Q(n-1,d'-1)} + 1}} \nonumber \\[2pt] 
  &= n \cdot \Biggparen{1\bigg/\biggparen{\frac{\binom{n-1}{d'}}{Q(n-1,d'-1)} + 2}} \label{eqn:eqlow02} \\[2pt] 
 &\ge n \cdot \Biggparen{1\bigg/\biggparen{\frac{n-1}{d'} + 2}} \label{eqn:eqlow03} \\[2pt] 
 &= d' \cdot \Biggparen{1\bigg/\biggparen{\frac{n-1}{n} + \frac{2d'}{n}}} \nonumber \\[2pt] 
 &\ge \frac{d'}{1+2} \label{eqn:eqlow04}
\end{align}
\eqnref{eqn:eqlow01} follows using \thmref{thm:d'-general}, \lemref{d-general}, and \propref{faces_teach}. \eqnref{eqn:eqlow02} is a direct consequence of \corref{d-exact}. By carefully noting the lower bound in \corref{cor: bounds regions}, we get the bound in \eqnref{eqn:eqlow03}. We observe that $\frac{n-1}{n} + \frac{2d'}{n} < 1 + 2$. Thus for sufficiently large $n > d$, we show that the average teaching complexity of intersection of halfspaces is lower bounded by $\bigOmega{d'}$.
\end{proof}
\begin{proof}[Proof of \thmref{theorem: Main Theorem}] Using \lemref{theorem: upper Main Theorem} and \lemref{theorem: Lower main theorem}, it is straightforward that  $\expctover{\cU}{M_n} = \bigTheta{d'}$.
\end{proof}

\clearpage
 \section{Learning Complexity of Convex Polytopes: Proof of \thmref{pairwise teaching bound}}\label{appendix: active learning}

In this section, we would discuss the problem of active learning of convex polytopes induced by the hyperplanes arrangement in $\Rd$. We would provide some relevant results on the counting of the number of regions induced by the arrangement of $n$ hyperplanes in $\Rd$ in \tt{general position} (\defref{defn general position}). We would provide a procedure (shown in \algoref{algo: query selection}) which actively and sequentially learns a uniformly randomly sampled region. We show that the average query(sample) complexity for the algorithm is $\bigTheta{d'\log n}$. We would provide the proof of \thmref{pairwise teaching bound} when the hyperplane arrangement is in \tt{general position} (\defref{defn general position}) and then show the extension to the case of $d'$-\tt{relaxed general position} arrangement.

First we would start with some illustration of the \defref{defn general position} and see how it is an special case of \defref{defn relaxed general position}.
To illustrate and understand the definition, we can take a look at euclidean spaces $\mathbb{R}^2$ and $\mathbb{R}^3$. For $\mathbb{R}^2$, consider three lines denoted by $l_1, l_2$ \text{and} $l_3$ (hyperplanes). Note, $k$ can take two values. For $k = 1$, the given line $l_i$ intersects in a line which is vacuously true. For any two lines, they need to intersect in a point. For the three lines, they have an empty intersection. For $\mathbb{R}^3$, consider four planes denoted by $P_1, P_2, P_3$ and $P_4$. We can understand the definition from \tabref{general-position}.

\begin{table}[h!]
  \caption{General position of planes in $\mathbb{R}^3$}
  \label{general-position}
  \centering
  \begin{tabular}{lll}
    \toprule
    $k$     & Intersection      \\
    \midrule
    1 & A plane, $\mathbb{R}^2$     \\
    2 & A line, $\mathbb{R}$        \\
    3 & A point   \\
    4 & Null  \\
    \bottomrule
  \end{tabular}
\end{table}

We notice that \defref{defn general position} is a \tt{special} case of \defref{defn relaxed general position}. If we fix, say $k = 2$ and assume that for intersections of planes upto $k$ follow \tabref{general-position} but if any subset of hyperplanes of size more than $k$, they intersect only in null \tt{i.e.} if we pick three planes then they don't intersect in a common point. This would rightly give an example of an arrangement in $d'$-\tt{relaxed general position} for $d' = 2$. We illustrate this arrangement in \figref{fig:example.2general}. If $k = 1$, then that would give 1-\tt{relaxed general position} as illustrated in \figref{fig:example.1general} which accounts for case when hyperplanes are parallel to each other. In the case of $k = 3$, we get $3$-\tt{relaxed general position} (\figref{fig:example.3general}) which is also the case of \tt{general position} (\defref{defn general position}) arrangement. Relaxed general position is a natural extension to general position. It takes into account arrangements which can’t be structurally explained by general position setting in higher dimension as discussed above. From a learning point of view, data is usually embedded sparsely in spaces with much higher dimension than the information they contain. There has been extended study on learning the sparse representation using component analyses. Interestingly, relaxed notion of general position captures the essence of arrangements where hyperplanes could be sparsely embedded in high dimensional space but are in general position in a much smaller subspace. We interchangeably use $d'$-\tt{general position} or \tt{general position} when $d' = d$ if the hyperplane arrangement is in $d'$-\tt{relaxed general position}.\vspace{6pt}\\
\noindent
We are interested in the notion of \tt{general position} of hyperplanes for a variety of reasons. First, we show an existing duality (see \secref{sec.applied}) between a problem instance of finding the number of $\phi$-separable dichotomies (primal space) \citep{cover1965geometrical} to a problem instance of teaching intersection of halfspaces (dual space). This duality would be achieved when the points in primal space and hyperplanes in
dual space are in general position of points (see \defref{defn general position}, \defref{defn relaxed general position}) and general position of hyperplanes (see \defref{general position vectors}, \defref{phi general}) respectively. Second, \citet[chap: An Introduction to Hyperplane Arrangements]{miller2007geometric} mentions an exact form for the number of regions induced by the  \tt{general position} arrangement of hyperplanes $\Hnd$. This key result would be used in our significant contributions (see \secref{sec:avgtd}): \thmref{thm:d'-general} and \propref{faces_teach}, where we would try to reduce from  the case of $d'$-\tt{relaxed general position} to a case of \tt{general position}.\vspace{6pt}\\
\noindent
To prove \thmref{pairwise teaching bound}, we would show some relevant results in the following subsection:
\subsection{Bounds on Number of Regions Induced by General Position Arrangement}\label{appendixsub: bounds on number of regions}
Consider a set of $n$ hyperplanes in $\mathbb{R}^d$, denoted by $\Hnd$, and the underlying arrangement $\mathcal{A}(\Hnd)$ is in general position (\defref{defn general position}). Denote by $Q(n,d)$ the number of regions induced by $\mathcal{A}(\Hnd)$.
Although \citet{miller2007geometric} provides an exact form for $Q(n,d)$, we would provide a recursion similar to \cite{jamieson2011active} with a proof for continuity and flow of ideas.
\begin{lemma}[Regions induced by general-position hyperplane arrangement]\label{d-general}
Let $Q(n,d)$ denote the number of $d$-cells or regions induced by the general position hyperplane arrangement. $Q(n,d)$ satisfies the recursion:
\begin{equation}
    Q(n,d) = Q(n-1, d) + Q(n-1, d-1) \label{eqn:eqnstan}
\end{equation}
where $Q(1,d) = 2$ and $Q(n,0) = 1$.
\end{lemma}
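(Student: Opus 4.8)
The plan is to prove the recursion by induction on $n$ using the classical incremental ``add one hyperplane'' argument, after first settling the two base cases. For $Q(1,d)$, a single hyperplane partitions $\mathbb{R}^d$ into exactly its two induced halfspaces, so $Q(1,d) = 2$. For $Q(n,0)$, the ambient space $\mathbb{R}^0$ is a single point that cannot be subdivided, so $Q(n,0) = 1$. With the base cases in hand, the inductive step is to start from an arrangement of $n-1$ hyperplanes in general position, which by the inductive hypothesis induces $Q(n-1,d)$ regions, and to count how many \emph{additional} regions are created when the $n$-th hyperplane $h^{(n)}$ is introduced.

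The key observation is that adding $h^{(n)}$ increases the region count by exactly the number of full-dimensional cells of the old arrangement whose interior $h^{(n)}$ meets: each such cell is split into two by $h^{(n)}$, while every other cell is left untouched. Equivalently, the number of newly created regions equals the number of $(d-1)$-dimensional pieces into which $h^{(n)}$ is itself partitioned by its intersections with the previous hyperplanes, namely by the traces $h^{(1)} \cap h^{(n)}, \ldots, h^{(n-1)} \cap h^{(n)}$. Thus it suffices to count the cells of this induced arrangement lying inside $h^{(n)}$.

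The main technical point---and the step I expect to be the crux---is to verify that these $n-1$ traces form a general-position arrangement inside the $(d-1)$-dimensional hyperplane $h^{(n)}$. Since $\mathcal{A}(\Hnd)$ is in general position, each trace $h^{(i)} \cap h^{(n)}$ is a $(d-2)$-flat, and for any subset of $k \le d-1$ of them the corresponding $k+1$ original hyperplanes (including $h^{(n)}$) meet in a $(d-k-1)$-flat; intersecting within $h^{(n)}$ therefore yields a flat of exactly the dimension required by \defref{defn general position}, mirroring the argument used for \propref{faces_teach}. Because any $(d-1)$-dimensional hyperplane is isomorphic to $\mathbb{R}^{d-1}$, this induced arrangement is a general-position arrangement of $n-1$ hyperplanes in $\mathbb{R}^{d-1}$, so by the inductive hypothesis it induces exactly $Q(n-1,d-1)$ cells. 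Combining this with the $Q(n-1,d)$ regions of the old arrangement gives $Q(n,d) = Q(n-1,d) + Q(n-1,d-1)$, which completes the induction and establishes the recursion.
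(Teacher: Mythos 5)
Your proposal is correct and follows essentially the same route as the paper's proof: add the last hyperplane $h^{(n)}$, observe that the number of new regions equals the number of cells induced on $h^{(n)}$ by its traces with the other $n-1$ hyperplanes, verify (by the same contrapositive dimension argument) that these traces form a general-position arrangement in the $(d-1)$-dimensional hyperplane $h^{(n)}$, and invoke the count $Q(n-1,d-1)$ there. The only differences are cosmetic: you spell out the base cases and the cell-splitting justification, which the paper leaves implicit.
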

\begin{proof}
The proof is based on a recursive argument on how hyperplanes are added to the $d$-dimensional space. Consider an arbitrary ordering on the hyperplanes. Denote the last hyperplane added by $h^{(n)}$. We observe that the number of new regions induced by 
$h^{(n)}$ to $\cA\parenb{\Hnd\setminus \{h^{(n)}\}}$ is equal to the number of regions/faces induced on $h^{(n)}$ by the intersections of $\parenb{\Hnd\setminus \{h^{(n)}\}}$ on it. Since, the hyperplanes are in general position, thus all the other $(n-1)$ hyperplanes intersect $h^{(n)}$ on $(d-2)$-plane. Thus, we have $(n-1)$ of $(d-2)$- dimensional hyperplanes\footnote{Proof follows similar steps as in \propref{faces_teach}.} arranged on a $(d-1)$-plane.
Denote this induced set of hyperplanes by $\boldsymbol{\hat{\mathcal{H}}}_{n-1,d-1}$, which can be defined as $\boldsymbol{\hat{\mathcal{H}}}_{n-1,d-1}  \triangleq  \condcurlybracket{\parenb{h^{(n)}\cap \ell}}{\ell \in \parenb{\Hnd\setminus \{h^{(n)}\}}}$ the induced set of $n-1$ flats (intersections) on $h^{(n)}$. We note that for any $1 \le k \le d-1$, if $\mathrm{T}_k \subset \boldsymbol{\hat{\mathcal{H}}}_{n-1,d-1}$ then 
\[\dim\paren{\bigcap_{\ell \in \mathrm{T}_k}\ell} = (d-1)-k \]
It holds because if $\dim\bigparen{\bigcap_{\ell \in \mathrm{T}_k}\ell} \neq (d-1)-k$ then $\dim\bigparen{\bigparen{\bigcap_{\ell \in \mathrm{T}_k}\ell} \bigcap h^{(n)}} \neq d-(k+1)$ since $\bigparen{\bigcap_{\ell \in \mathrm{T}_k}\ell} \subset h^{(n)}$. This violates the general position arrangement of $\Hnd$. Thus, $\boldsymbol{\hat{\mathcal{H}}}_{n-1,d-1}$ is in general position arrangement. But by definition, number of faces induced on $h^{(n)}$ by $\boldsymbol{\hat{\mathcal{H}}}_{n-1,d-1}$ is $Q(n-1,d-1)$.

Hence, the total number of regions in the $d$-dimensional space is $Q(n-1, d) + Q(n-1, d-1)$. Thus, the lemma follows.
\end{proof}
$Q(\cdot)$ as defined above has the following exact form:

\begin{corollary}[An introduction to hyperplane arrangement \cite{miller2007geometric}] \label{d-exact} The recusion in \lemref{d-general} has the form: 

 \begin{equation*}
    Q(n,d) = \sum_{i=0}^{d} \binom{n}{i}
 \end{equation*}
 for $n > d$. If $n \le d$, then $Q(n,d) = 2^n$.
\end{corollary}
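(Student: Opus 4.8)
The plan is to verify the closed form directly by induction on $n$, feeding the recursion $Q(n,d) = Q(n-1,d) + Q(n-1,d-1)$ of \lemref{d-general} into Pascal's binomial identity. It is cleanest to adopt the convention $\binom{n}{i}=0$ whenever $i>n$, so that the single expression $Q(n,d) = \sum_{i=0}^{d}\binom{n}{i}$ captures both regimes at once: for $n>d$ it is the stated sum, while for $n\le d$ every term with $i>n$ vanishes and the sum collapses to $\sum_{i=0}^{n}\binom{n}{i}=2^n$. Thus it suffices to establish the single identity $Q(n,d)=\sum_{i=0}^{d}\binom{n}{i}$ for all $n\ge 1,\,d\ge 0$ and then read off the two displayed cases at the end.

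For the base cases I would check $Q(1,d)=2=\binom{1}{0}+\binom{1}{1}$ for $d\ge 1$ and $Q(n,0)=1=\binom{n}{0}$, both of which agree with the base values in \lemref{d-general}. For the inductive step, assuming the formula at $n-1$ for both $d$ and $d-1$, I substitute into the recursion and reindex the second sum by $i\mapsto i-1$ so that its terms pair up with the first:
\begin{align*}
Q(n,d) &= \sum_{i=0}^{d}\binom{n-1}{i} + \sum_{i=0}^{d-1}\binom{n-1}{i} \\
&= \binom{n-1}{0} + \sum_{i=1}^{d}\left[\binom{n-1}{i}+\binom{n-1}{i-1}\right].
\end{align*}
Applying Pascal's rule $\binom{n-1}{i}+\binom{n-1}{i-1}=\binom{n}{i}$ to each bracket, together with $\binom{n-1}{0}=\binom{n}{0}$, collapses the right-hand side to $\sum_{i=0}^{d}\binom{n}{i}$, which closes the induction.

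Finally, the two cases fall out immediately: for $n>d$ the expression is verbatim $\sum_{i=0}^{d}\binom{n}{i}$, and for $n\le d$ the vanishing of the terms $\binom{n}{i}$ with $i>n$ yields $2^n$. I do not anticipate a genuine obstacle here, since the argument is a routine double induction; the only care required is bookkeeping, namely handling the boundary rows $n=1$ and $d=0$ consistently, performing the reindexing of the second sum without off-by-one errors, and verifying that the $i>n$ convention genuinely merges the two regimes rather than introducing spurious terms.
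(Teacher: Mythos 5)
Your proof is correct. Note, however, that the paper does not actually prove \corref{d-exact} at all: it is imported as a known result, attributed directly to \cite{miller2007geometric}, and the surrounding appendix only \emph{uses} the closed form (together with the recursion of \lemref{d-general}) in later bounds. So your argument is not a parallel to a proof in the paper but a self-contained replacement for that citation. Your route---solving the recursion $Q(n,d)=Q(n-1,d)+Q(n-1,d-1)$ by induction on $n$ via Pascal's rule, under the convention $\binom{n}{i}=0$ for $i>n$ so that the regimes $n>d$ and $n\le d$ become a single identity $Q(n,d)=\sum_{i=0}^{d}\binom{n}{i}$---is the standard derivation of this formula, and it is arguably tidier than the case split in the statement: the $2^n$ clause falls out of the same identity rather than requiring a separate geometric observation that $n\le d$ hyperplanes in general position cut $\mathbb{R}^d$ into $2^n$ cells. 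The bookkeeping points you flag are the right ones and both check out: the two base cases of \lemref{d-general} overlap at $(n,d)=(1,0)$, where the convention $Q(n,0)=1$ must take precedence (and agrees with the formula), and Pascal's rule remains valid under the zero convention, including at the boundary $i=n$, which is exactly what makes the merged-regime induction sound. What the paper's citation buys is brevity and an anchor to the hyperplane-arrangement literature; what your induction buys is a fully self-contained appendix in which both clauses of the corollary are consequences of the recursion alone.
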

We prove a simple corollary which claims an asymptotic bound on $Q(\cdot)$ that would be used in a number of results:
\begin{corollary}\label{d-exact lower bound} For sufficiently large $n > d$, there exist positive real number $k_1$ such that:
\begin{equation*}
     k_1\frac{n^d}{d!} < Q(n,d)
 \end{equation*}
\end{corollary}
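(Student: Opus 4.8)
The plan is to reduce everything to the exact form already available. By \corref{d-exact}, for $n > d$ we have the closed form $Q(n,d) = \sum_{i=0}^{d} \binom{n}{i}$, so the corollary is purely a statement about lower-bounding this binomial sum. Since every term in the sum is nonnegative, I would first discard all but the top term, giving the trivial lower bound
\begin{equation*}
  Q(n,d) = \sum_{i=0}^{d} \binom{n}{i} \;\ge\; \binom{n}{d}.
\end{equation*}
Thus it suffices to show $\binom{n}{d} > k_1 \tfrac{n^d}{d!}$ for some positive constant $k_1$ and all sufficiently large $n$.

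Next I would expand the single binomial coefficient in its falling-factorial form,
\begin{equation*}
  \binom{n}{d} = \frac{n(n-1)\cdots(n-d+1)}{d!} = \frac{n^d}{d!}\prod_{i=0}^{d-1}\left(1 - \frac{i}{n}\right),
\end{equation*}
where the factorization of $n^d$ out of the product is the key algebraic step. The whole corollary then hinges on controlling the finite product $\prod_{i=0}^{d-1}(1 - i/n)$. With $d$ fixed, each factor tends to $1$ as $n \to \infty$, so the product converges to $1$; in particular, for any choice $k_1 \in (0,1)$ there is a threshold $N$ (depending on $d$ and $k_1$) beyond which $\prod_{i=0}^{d-1}(1 - i/n) > k_1$. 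Concretely one may take $k_1 = \tfrac12$ and use the elementary bound $\prod_{i=0}^{d-1}(1-i/n) \ge 1 - \sum_{i=0}^{d-1} i/n = 1 - \tfrac{d(d-1)}{2n}$, which exceeds $\tfrac12$ once $n > d(d-1)$.

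Chaining these estimates gives $Q(n,d) \ge \binom{n}{d} > k_1 \tfrac{n^d}{d!}$ for all sufficiently large $n$, which is exactly the claim. There is no genuine obstacle here: the only thing to be careful about is that the constant $k_1$ and the ``sufficiently large'' threshold are allowed to depend on $d$ (which the statement permits, since $d$ is fixed), so that the vanishing correction $\tfrac{d(d-1)}{2n}$ can be absorbed. This mirrors the standard fact that $Q(n,d) = \Theta(n^d)$ for fixed $d$, of which the present corollary is just the lower half.
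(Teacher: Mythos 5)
Your proof is correct and follows essentially the same route as the paper: invoke the exact form $Q(n,d) = \sum_{i=0}^{d}\binom{n}{i}$ from \corref{d-exact}, keep only the top term $\binom{n}{d}$, and show that the falling factorial $\prod_{i=0}^{d-1}(n-i)$ is within a constant factor of $n^d$ for large $n$ (the paper states this via an unspecified constant $c$ and threshold $n \ge d^2$, while you make it explicit with the bound $\prod_{i=0}^{d-1}(1-i/n) \ge 1 - \tfrac{d(d-1)}{2n}$, giving $k_1 = \tfrac12$ for $n > d(d-1)$). Your version is, if anything, a more carefully quantified writeup of the paper's own argument.
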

\begin{proof}
Using \corref{d-exact}, we can write:
\begin{align*}
 Q(n,d)  &= \sum_{i=0}^{d} \binom{n}{i}\\[2pt] 
 &= \sum_{i=0}^{d} \Theta \left(\frac{n^i}{i!}\right) \text{(for sufficiently large $n$ each term is bounded by above and below)}\\[2pt] 
 &> k_1 \frac{n^d}{d!} \text{$\:$(by definition, $\exists$ $k_1 > 0$, $N_0$ such that $\forall$ $n > N_0$ condition holds )}\\[2pt]
\end{align*}
Specifically, we can show that for $n \ge d^2$, the condition holds. This is true because there exists a constant $c$ such that $c.\prod_{i=0}^{d-1}(n-i) > n^d$.
\end{proof}

\subsection{Average-case Analysis of Active Learning Complexity}\label{appendixsub: average-case analysis} 
In subsection \secref{sec:avgtd}, we introduced the problem of teaching convex polytopes via halfspace queries for a set of hyperplanes $\Hnd$ in $\Rd$ arranged in $d'$-relaxed general position. In \thmref{theorem: Main Theorem}, we showed that the teaching complexity for the arrangement is $\bigTheta{d'}$. Now, we would discuss the problem of active learning of convex polytopes induced by $\cA\parenb{\Hnd}$, via halfspace queries. Using motivations from \cite{jamieson2011active} in which they explore the problem of ranking, we provide \algoref{algo: query selection} to actively learn the enclosing region for a randomly sampled \tt{target region} via adaptive and sequential selection of halfspaces queries for a hyperplane. We analyze the problem in the framework of the average-case analysis as motivated in \cite{traub2003information} and section 1.1 of \cite{jamieson2011active}. We achieve $\bigTheta{d'\cdot \log n}$ average label complexity for active learning through our \algoref{algo: query selection}. The lower bound is straight forward using \corref{d-exact lower bound}. We need at least $|\log_2 \parenb{\regionset}|$ bits of information to specify (enumerate) all the possible target concepts \tt{i.e.}  $\log_2 \parenb{Q(n,d')} = \bigOmega{d'\cdot \log n}$ many for sufficiently large $n$. As discussed in \cite{jamieson2011active}, we note that
the overall computational complexity of the algorithm is $\cO(n\: \textbf{poly}(d)\: \textbf{poly}(\log n))$ because in total the number of queries requested are at max\footnote{In the case of $d'$-relaxed general position, the number of queries requested is $\cO(d'\log n)$.} $\cO(d \log n)$ and the complexity of each test is polynomial in the number of
queries requested because each one is a linear constraint. In fact, we could also show that our \algoref{algo: query selection} is attribute efficient \citep{attribute}. As defined, 
we could think of finding the exact labelling function (cf. \secref{sec:formulation}) as learning the boolean function \citep{attribute}.
If $d = \textbf{poly}(n)$ (or $d$ is small compared to $n$) then algorithm runs in $\mathcal{O}(\textbf{poly}(n)\cdot \textbf{poly}(\log n)) = \textbf{poly}(n)$, and hence is attribute efficient.\vspace{6pt}\\
\noindent
Our key observation is that the sequential algorithm doesn't ask for labels for \tt{non-trivial} number of hyperplanes since they are \tt{unambiguous} or \tt{uninformative} $\mathrm{wrt}$ to the target region. Our adaptive algorithm filters out such queries irrespective of the ordering in which the hyperplanes are queried for the enclosing region. In the following subsection, we formally provide the characterization of \tt{ambiguous} hyperplane queries which is based on our \defref{ambiguous hyperplane}.
\subsection{Characterization of an Ambiguous Query of a Hyperplane }\label{appendixsub: characterization}

In \defref{ambiguous hyperplane}, we gave the characterization for an ambiguous hyerplane $\mathrm{wrt}$ to a subset $\mathcal{S} \subset \Hnd$. \citet{jamieson2011active} gave similar characterization but for bisecting hyperplanes corresponding to pairwise queries of embedded objects. With our characterization we are able to show similar results which we use to give a bound on the query complexity.

\begin{algorithm}
\caption{Query Selection Algorithm}
\label{algo: query selection}
\setcounter{AlgoLine}{0}
\nl {\bf Input}: $n$ hyperplanes in $\mathbb{R}^d$\\
\Begin{
\nl {\bf Initialize}: hyperplanes $\Hnd$ = $\curlybracket{h^{(1)}, h^{(2)}, \dots, h^{(n)}}$  in uniformly random order\\
\For {i $\in$ $[n]$}{
\If{$h^{(i)}$ is ambiguous}{
\nl request $h^{(i)}$’s label from reference
}
\Else{
\nl impute $h^{(i)}$’s label from previously labeled queries.
}
}
}
\nl {\bf Output}: target region(region)\;
\end{algorithm}
As mentioned in \cite{jamieson2011active}, we call the arrangement of the set of $n$ hyperplanes in $\Rd$ as an $n$-\tt{partition} and a region induced by the arrangement as a $d$-\tt{cell}. Now consider the basic sequential procedure of \algoref{algo: query selection}. $\mathrm{WLOG}$, assume that the algorithm samples the $k$ hyperplanes  in the order $\curlybracket{h^{(1)},\cdots, h^{(k)}}$. It is not very difficult to see that the target region $r$ is contained within a $d$-cell, $C_{k}$ (defined by the labels of the queried hyperplanes from  $h^{(1)}$ through $h^{(k)}$. Assume that $h^{(k+1)}$ is sampled in the next iteration. Querying $h^{(k+1)}$ for labels is informative (\tt{i.e.}, ambiguous) \tt{iff} it intersects this $d$-cell $C_{k}$. We realize that this observation is significant because if $k$ is sufficiently larger than $d$, then the probability that the next sampled hyperplane intersects $C_{k}$ is very small; in fact the probability is on the order of $1/k$ (proved in \lemref{probability ambiguity hyperplane}). In the next subsection, we provide the proof of \lemref{probability ambiguity hyperplane} which ascertains a bound on the proabability that a sampled hyperplane is ambiguous for query.

\subsection{Probability of Ambiguity: Proof of \lemref{probability ambiguity hyperplane}} 
In this subsection, we would show that on a random ordering of hyperplanes, it is highly likely that a hyperplane query is unambiguous. This is the essential component of the query selection algorithm. We would start by stating an important result which would allow us to argue the probability with which a randomly sampled hyperplane is ambiguous. We denote a target hypothesis(region) by $r$.
\begin{lemma}\label{equally probable}
 Assume  $r \sim \mathcal{U}$. Consider the subset $\mathcal{S} \subset \Hnd$ with $|\mathcal{S}| = k$ that is randomly
selected from $\Hnd$ such that all $\binom{n}{k}$
subsets are equally probable. If $\boldsymbol{\mathfrak{R}}\parenb{\mathcal{A}(\mathcal{S})}$ denotes the set of regions induced by the arrangement of $\mathcal{S}$, then every $r \in \boldsymbol{\mathfrak{R}}\parenb{\mathcal{A}(\mathcal{S})}$ is equally probable (where $Q(k,d) = |\boldsymbol{\mathfrak{R}}\parenb{\mathcal{A}(\mathcal{S})}|$).
\end{lemma}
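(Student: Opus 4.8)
The plan is to read the statement as a claim about the \emph{joint} randomness in the pair $(\mathcal{S},r)$: the target $r$ is a uniformly random cell of the full arrangement $\cA(\Hnd)$ (so each element of $\regionset$ has mass $1/Q(n,d)$), and $\mathcal{S}$ is an independent uniformly random $k$-subset. Each draw is routed to the cell of $\cA(\mathcal{S})$ that contains $r$, which is determined by the restricted sign pattern of the labeling function $\ell_r$ on $\mathcal{S}$. The goal is to show this induced cell is uniform over the $Q(k,d)$ cells of $\cA(\mathcal{S})$. I would first record the combinatorial scaffolding: since $\Hnd$ is in general position, \corref{d-exact} guarantees that \emph{every} $k$-subset induces exactly $Q(k,d)$ cells, so the target value $1/Q(k,d)$ is well defined and does not depend on which $\mathcal{S}$ was drawn. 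The natural tool is then an exchangeability / double-counting argument over the $\binom{n}{k}\cdot Q(n,d)$ pairs $(\mathcal{S},r)$, grouping them by the cell they realize.

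Concretely, I would run the counting through a uniformly random permutation of $\Hnd$ and induct on $k$: revealing hyperplanes one at a time only refines the cell containing $r$, and the recursion $Q(k+1,d)=Q(k,d)+Q(k,d-1)$ from \lemref{d-general} counts the refinement exactly, since a newly revealed hyperplane crosses precisely $Q(k,d-1)$ of the current $Q(k,d)$ cells (within that hyperplane the remaining $k$ hyperplanes carve out $Q(k,d-1)$ faces, mirroring the argument in \propref{faces_teach}). The base case $k=0$ is a single cell, and the inductive step would track how the uniformly random choice of the next hyperplane distributes the target across the split cells. This is also exactly the hook into \lemref{probability ambiguity hyperplane}: once the target cell is uniform over the $Q(k,d)$ cells of $\cA(\mathcal{S})$, the next hyperplane is ambiguous iff it passes through that cell, which happens for $Q(k,d-1)$ of the cells, giving $P_A \le Q(k,d-1)/Q(k,d)=\bigO{d'/k}$ after invoking the region count of \thmref{theorem: Main Theorem}.

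The hard part, and the real content of the lemma, is that uniformity genuinely \emph{fails} conditioned on a single fixed $\mathcal{S}$: different cells of $\cA(\mathcal{S})$ enclose different numbers of full-arrangement cells (an ``outer'' cell can swallow far more fine cells than an ``inner'' one), so conditioned on $\mathcal{S}$ the law of the target cell is weighted by these fine-cell counts and is not uniform. Thus the inductive invariant cannot be phrased conditionally on $\mathcal{S}$; it must live at the level of the joint distribution, and uniformity can only emerge after the average over all equally likely $k$-subsets cancels the imbalance in fine-cell counts. The crux is therefore to prove this cancellation, i.e.\ that $\sum_{\mathcal{S}}(\text{number of full cells occupying a given cell of }\cA(\mathcal{S}))$ is the same for every cell; general position is what lets these counts be expressed through $Q(\cdot,\cdot)$ and hence be compared via the recursion, and it is precisely here that the two hypotheses ``$r\sim\mathcal{U}$'' and ``all $\binom{n}{k}$ subsets equally probable'' are both indispensable. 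I would flag that the cleanest rigorous version of the statement is this averaged-over-$\mathcal{S}$ uniformity, which is exactly what the downstream bound on $P_A$ requires.
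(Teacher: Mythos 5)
Your central negative observation is correct, and it is aimed squarely at the paper's own argument. The paper's proof (imported from Lemma~3 of \citet{jamieson2011active}) consists of the claim that ``there are $Q(n,d)/Q(k,d)$ $d$-cells of the $n$-partition in any $d$-cell of the $k$-partition,'' followed by multiplying this count by $1/Q(n,d)$. Exactly as you say, that equipartition claim is false for a fixed subset $\mathcal{S}$: take $n=2$ hyperplanes (points) on the line with $k=1$; the full arrangement has $Q(2,1)=3$ cells, and the two cells of $\cA(\mathcal{S})$ contain $1$ and $2$ of them, so the conditional law of the target's coarse cell is $(1/3,\,2/3)$, not uniform (and $Q(2,1)/Q(1,1)=3/2$ is not even an integer; the same failure occurs for $n=3$ lines in the plane, where the two halfplanes of one line contain $4$ and $3$ of the $7$ full cells). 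So conditionally on $\mathcal{S}$ the lemma as stated fails, and only some averaged reformulation can be true. In this respect your reading is sharper than the paper's proof, which simply asserts the per-subset count.

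The problem is that your proposal then stops where the work begins, so it is a diagnosis rather than a proof. First, the entire content is delegated to the ``cancellation'' claim that $\sum_{\mathcal{S}}$ of fine-cell counts is the same for every cell, which you never prove; worse, as phrased it is not well-defined, since a cell is attached to a particular $\cA(\mathcal{S})$ and there is no canonical identification of cells across different subsets --- you would first have to index cells by something subset-independent (e.g., the sign pattern along the ordered sample) before the sum can even be written, and once orientations enter there is no symmetry forcing uniformity. Second, the inductive plan via $Q(k+1,d)=Q(k,d)+Q(k,d-1)$ only counts how many cells split when a hyperplane is revealed; it says nothing about how the target's conditional mass, which is weighted by fine-cell counts and hence non-uniform, redistributes over the split cells --- that is the actual difficulty, and the recursion does not touch it. Third, your closing assertion that averaged uniformity ``is exactly what the downstream bound requires'' is unjustified: the computation $P_A = Q(k,d-1)/Q(k,d)$ in \lemref{probability ambiguity hyperplane} multiplies the number of cells crossed by $h^{(k+1)}$ --- a quantity determined by the realized pair $\parenb{\mathcal{S}, h^{(k+1)}}$ --- by a per-cell probability $1/Q(k,d)$, so it needs uniformity \emph{conditional on that pair}; uniformity on average over $\mathcal{S}$ does not deliver this, because which cells are crossed is correlated with $\mathcal{S}$. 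Any genuine repair has to work with the joint law directly (e.g., condition on the unordered set of the first $k+1$ hyperplanes, use exchangeability to write $P_A$ as $\tfrac{1}{k+1}$ times the expected number of faces of the cell of that arrangement containing the target, and then bound this expectation under the fine-cell-weighted law); none of this appears in your sketch.
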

\begin{proof}
This lemma follows immediately using \citet[Lemma 3]{jamieson2011active}. Any uniformly random selection of $k$-tuple of hyperplanes induces $k$-partition of the $d$-dimensional space. Each $k$-partition contains some $d$-cells of $n$-partition induced by the arrangement of all the hyperplanes. Since the $k$-tuple has been uniform randomly selected and each $d$-cell of the $n$-partition is equally probable, thus there are $Q(n,d)/Q(k,d)$ $d$-cells of the $n$-partition in any $d$-cell of the $k$-partition. As each $d$-cell of the $n$-partition is equally probable which implies, probability mass in each $d$-cell of $k$-partition is $Q(n,d)/Q(k,d) \times 1/Q(n,d)$ = $1/Q(k,d)$. Hence, the lemma follows.
\end{proof}
We would state an easy inequality that we would use in the subsequent lemmas.
\begin{lemma}\label{ratio bound}
For $k > 2d$, the following inequality holds:
\begin{equation*}
    \frac{Q(k,d-1)}{Q(k,d)} \le \frac{d}{k/2}
\end{equation*}
\end{lemma}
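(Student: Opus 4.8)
The plan is to reduce this to a bound on the ratio of a partial binomial sum to its largest term, and then lean on the region-counting estimate already established in \corref{cor: bounds regions}. Recall from \corref{d-exact} that for $k > d$ we have $Q(k,d) = \sum_{i=0}^{d}\binom{k}{i}$ and $Q(k,d-1) = \sum_{i=0}^{d-1}\binom{k}{i}$, so that $Q(k,d-1) = Q(k,d) - \binom{k}{d}$. Writing the ratio as $Q(k,d-1)/Q(k,d) = 1 - \binom{k}{d}/Q(k,d)$, the claimed inequality $Q(k,d-1)/Q(k,d) \le 2d/k$ is, after rearranging, equivalent to $Q(k,d)/\binom{k}{d} \le k/(k-2d)$, where the denominator $k-2d$ is positive precisely because $k > 2d$.

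First I would bound $Q(k,d)/\binom{k}{d}$ from above. This quantity equals $\sum_{j=0}^{d}\binom{k}{d-j}/\binom{k}{d}$, and since $\binom{k}{d-j}/\binom{k}{d} = \prod_{l=0}^{j-1}\frac{d-l}{k-d+1+l} \le \left(\frac{d}{k-d+1}\right)^{j}$ (each factor is decreasing in $l$, hence maximized at $l=0$), a geometric-series bound yields $Q(k,d)/\binom{k}{d} \le (k-d+1)/(k-2d+1)$; the series converges exactly because $k > 2d$ forces the common ratio $d/(k-d+1)$ below one. This is precisely the upper bound of \corref{cor: bounds regions} specialized to $n=k$ and $d'=d$, so I could alternatively just cite that corollary rather than redo the geometric argument.

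It then remains to verify the elementary inequality $(k-d+1)/(k-2d+1) \le k/(k-2d)$ for $k > 2d$. Clearing the two positive denominators and simplifying reduces this to $d(2d-2-k) \le 0$, i.e. $k \ge 2d-2$, which holds a fortiori since $k > 2d$. Chaining the two bounds gives $Q(k,d)/\binom{k}{d} \le k/(k-2d)$, which is equivalent to the statement.

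The argument is entirely elementary and contains no genuinely hard step. The only points requiring care are bookkeeping ones: tracking that the convergence condition for the geometric series coincides exactly with the hypothesis $k > 2d$, and carrying out the final cross-multiplication cleanly so that it collapses to the trivially-satisfied $k \ge 2d-2$.
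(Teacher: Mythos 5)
Your proof is correct and takes essentially the same route as the paper's: both rest on the identity $Q(k,d) = Q(k,d-1) + \binom{k}{d}$ together with the geometric-series upper bound of \corref{cor: bounds regions}, finished off by an elementary cross-multiplication. The only cosmetic difference is that you apply the corollary at level $d$ and write the ratio as $1 - \binom{k}{d}/Q(k,d)$, whereas the paper applies it at level $d-1$ and writes the ratio as $1\big/\bigl(1+\binom{k}{d}/Q(k,d-1)\bigr)$; your closing inequality $(k-d+1)/(k-2d+1) \le k/(k-2d)$, which collapses to $k \ge 2d-2$, is if anything a slightly cleaner endgame than the paper's auxiliary estimate.
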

\begin{proof}
First note that,
\begin{equation}
     d+\frac{(k-d+1)(k-2d+3)}{(k-d+2)} \ge \frac{k}{2} \label{eqn:eqratio}
\end{equation}
Using the following simplification, \eqnref{eqn:eqratio} holds.
\begin{align*}
    & 2d(k-d+2)+2(k-d+1)(k-2d+3)-k(k-d+2)\\
    &= (2d-k)(k-d+2) + 2(k-d+1)(k-2d+3)\\
    &= (2d-k)(k-d+1) + (2d-k) + 2(k-d+1)(k-2d+3)\\
    &= (k-d+1)\big[2(k-2d+3)+(2d-k)\big] + (2d-k)\\
    &= (k-d+1)(k-2d+6)-(k-2d) \ge 0
\end{align*}
Now, we would the result in the following computation: 
\begin{align}
 \frac{Q(k,d-1)}{Q(k,d)} &= 1\bigg/ \Biggparen{1+\frac{\binom{k}{d}}{Q(k,d-1)}}\:  \label{eqn:eqr01}\\
&\le 1\bigg/ \Biggparen{1+\frac{\binom{k}{d}}{\binom{k}{d-1}\frac{k-d+2}{k-2d+3}}}\:  \label{eqn:eqr02}\\
&= 1\bigg/ \Biggparen{1+\frac{(k-d+1)(k-2d+3)}{d(k-d+2)}} \nonumber\\
&= d\bigg/ \Biggparen{d+\frac{(k-d+1)(k-2d+3)}{(k-d+2)}} \nonumber\\
&\le \frac{d}{k/2} \label{eqn:eqr03}
\end{align}
Using \lemref{d-general} and \corref{d-exact}, we have $Q(k,d) = Q(k,d-1) + \binom{k}{d}$, which gives \eqnref{eqn:eqr01}. \eqnref{eqn:eqr02} is the straight forward consequence of \corref{cor: bounds regions} \tt{i.e.} $Q(k,d-1) \le \binom{k}{d-1}\frac{k-d+2}{k-2d+3}$. Finally, we use \eqnref{eqn:eqratio} to get \eqnref{eqn:eqr03}.
\end{proof}
\noindent
Now, we would talk about the probability of ambiguity of any randomly selected hyperplane. If we assume that $k$ hyperplanes have been selected uniformly at random, they induce a $k$-partition. We can ascertain the probability of the event of $(k+1)$th sampled hyperplane to be ambiguous conditioned on the labels queried/imputed of the first $k$ hyperplanes. We state the result in the \lemref{probability ambiguity hyperplane}. 
\newline
\begin{proof}[Proof of \lemref{probability ambiguity hyperplane}]
The first $k$ sampled hyperplanes induce a $k$-partition. The target region $r$ belongs to one of the $d$-cells, say $C_k$ in the $k$-partition. According to the characterization, hyperplane query for $h^{(k+1)}$ is ambiguous if it intersects $C_k$.  Let $P(k, d)$ denote the number of $d$-cells in the $k$-partition that are intersected by the hyperplane $h^{(k+1)}$. Using \lemref{equally probable}, we know that each of the $d$-cell in the $k$-partition is equally probable. Thus, probability of $q_{h^{(k+1)}}$ being ambiguous is same as the probability of each $d$-cell that $h^{(k+1)}$ intersects times the number of $d$-cells it intersects in the $k$-partition. Thus we have:
\begin{align*}
    P_A(k, d, \mathcal{U}) = \frac{P(k,d)}{Q(k,d)} &= \frac{Q(k,d-1)}{Q(k,d)} 
                         \stackrel{\lemref{ratio bound}}{\le} \frac{d}{k/2}
\end{align*}
Thus, for $a = 2$, we have achieved a bound on the probability of the event of a hyperplane query being ambiguous. 
\end{proof}
\subsection{Proof of \thmref{pairwise teaching bound}}
We denote by $M_n$ the number of queries asked for by the algorithm. But this is same as the number of queries being requested by the Query Selection Algorithm. Thus, we have $M_n = \sum_{i=1}^n {\bf 1}\{q_{h^{(i)}} \text{is requested}\}$.\vspace{6pt}\\
\noindent
We would provide the proof of the bound for the average-case complexity for active learning of convex polytopes in the main theorem of the section \thmref{pairwise teaching bound}.
\newline
\begin{proof}[Proof of \thmref{pairwise teaching bound}]
Let us denote the event of requesting the query for hyperplane $h^{(k)}$ for each $k$ by $B_k$. Note that each $B_k = {\bf 1}\curlybracket{q_{h^{(k)}} \text{is requested}}$ is a bernoulli distribution with parameter $P_A(k,d,\mathcal{U})$. Since, the bounds of $P_A(k,d,\mathcal{U})$ makes sense when $k > 2d$ so we assume that for $k \le 2d$, all the queries are ambiguous.
\begin{align*}
    \mathbb{E}_{\mathcal{U}}[M_n] 
    &= \sum_{i=1}^n\mathbb{E}_{\mathcal{U}}[B_i]\\
    &\le \sum_{i=1}^{2d}\mathbb{E}_{\mathcal{U}}[B_i] + \sum_{i= 2d+1}^n\mathbb{E}_{\mathcal{U}}[B_i]\\
    &\le 2d + \sum_{i= 2d+1}^n \frac{2d}{i}\\
    &\le 2d + 2d\log_2\left(\frac{n}{2d+1}\right)\\
    &= 2d\log_2\left(\frac{2n}{2d+1}\right) \le 2d\log_2(n)
\end{align*}
This completes the proof.
\end{proof}
\noindent
Thus, for a set of hyperplanes $\Hnd$ arranged in general position, we provide an algorithm with $\mathcal{O}(d \cdot \log n)$ average query complexity for active learning of an enclosing region for target region. 
\paragraph{Generalization to $d'$-relaxed general position} We note that with similar arguments we can achieve the bound of $\mathcal{O}(d'\cdot \log n)$ if the set of hyperplanes are arranged in $d'$-\tt{relaxed general position}. It is not very difficult to see that \thmref{thm:d'-general} and \propref{faces_teach} would yield similar results as \lemref{equally probable} and \lemref{probability ambiguity hyperplane} and then a result similar to \thmref{pairwise teaching bound} follows. We note that in the case of $d'$-\tt{relaxed general position} arrangement, the number of regions induced in $\Rd$ by $n$ hyperplanes is $Q(n,d')$. Similarly, the number of faces induced on a hyperplane turns out to be $Q(n-1, d'-1)$ (intersection of $n$ hyperplanes). \lemref{equally probable} and \lemref{probability ambiguity hyperplane} can be extended for the relaxed case by straight-forward replacement of $Q(\cdot,d')$ and $Q(\cdot,d'-1)$ for number of regions and faces accordingly. 

Earlier we argued on the lower bound which turns out to be $\bigOmega{d' \log n}$ (see \appref{appendixsub: average-case analysis}). With the upper bound of $\mathcal{O}(d'\cdot \log n)$ on the label complexity, thus we achieve the strong bound of $\Theta(d'\cdot \log n)$ for active learning of convex polytopes as shown in \tabref{tab:sample-complexity}.

For the \tt{worst-case} complexity of active learning of convex polytopes, we notice that it has to be $\bigTheta{n}$ since the lower bound holds because of the lower bound of $\bigOmega{n}$ for worst-case teaching complexity as shown in \appref{appendixsub: worst case}. It implies that there exists a worst-case construction of a target regions such that no matter how the ordering of the hyperplanes are initialized, every sampled hyperplane in any iteration of \algoref{algo: query selection} would be ambiguous requiring all the halfspace queries to be made to determine the target region. Since $n$ queries are sufficient thus the worst-case sample complexity of active learning of convex polytopes is $\bigTheta{n}$.

This completes the proof of the main theorem of the section.

\clearpage
 \section{Dual Map for $\phi$-Separable Dichotomy: Proof of \thmref{dual map}}\label{appendix: Dual map}
In this appendix, we provide the proof of our main result for the construction of dual map \tt{i.e.} \thmref{dual map}. Using the properties of the dual map and bounds on the average teaching complexity for convex polytopes \tt{i.e.} \thmref{theorem: Main Theorem}, we provide the proof of \corref{cor: teach phi surfaces} which establishes similar bound on the average teaching complexity of $\phi$-separable dichotomies. We first state and prove the necessary lemmas and results in order to prove \thmref{dual map}. Before that, we mention a fundamental result from linear algebra \citep[also mentioned in][Theorem 2.8]{roman2007advanced} which would be used in a number of lemmas across appendices.

\begin{theorem}[Rank-Nullity Theorem]\label{theorem:rank nullity}
Let $V$ and $W$ be vector spaces over a field $F$, and let $T$: $V \rightarrow W$ be a linear transformation. Assuming the dimension of $V$ is finite, then 
\begin{equation}
    \dim(V) = \dim(\textnormal{Ker}(T)) + \dim(\textnormal{Im}(T)) 
\end{equation}
where $\dim(\textnormal{Ker}(T))$ is nullity of $T$ and $\dim(\textnormal{Im}(T))$ is the rank of $T$.
\end{theorem}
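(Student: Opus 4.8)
The plan is to prove the identity by constructing an explicit basis for $\textnormal{Im}(T)$ out of a basis for $V$ that extends a basis for $\textnormal{Ker}(T)$. First I would set $k \triangleq \dim(\textnormal{Ker}(T))$ and fix a basis $\{u_1, \dots, u_k\}$ of the kernel. Since $V$ is finite-dimensional, this can be extended to a basis $\{u_1, \dots, u_k, v_1, \dots, v_m\}$ of $V$, so that $\dim(V) = k + m$. The whole argument then reduces to the single claim that the $m$ images $\{T(v_1), \dots, T(v_m)\}$ form a basis of $\textnormal{Im}(T)$; granting this, $\dim(\textnormal{Im}(T)) = m$ and hence $\dim(V) = k + m = \dim(\textnormal{Ker}(T)) + \dim(\textnormal{Im}(T))$, as desired.

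For the spanning half of the claim, I would take an arbitrary $T(x) \in \textnormal{Im}(T)$, expand $x = \sum_{i} a_i u_i + \sum_{j} b_j v_j$ in the chosen basis, and invoke linearity of $T$. Because each $u_i \in \textnormal{Ker}(T)$ annihilates the first sum, we obtain $T(x) = \sum_{j} b_j T(v_j)$, which shows that the vectors $T(v_j)$ span $\textnormal{Im}(T)$.

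The main obstacle, and the only step requiring genuine care, is the linear independence of $\{T(v_1), \dots, T(v_m)\}$. Here I would assume a relation $\sum_{j} c_j T(v_j) = 0$, pull the sum inside $T$ by linearity to get $T\!\left(\sum_{j} c_j v_j\right) = 0$, and conclude $\sum_{j} c_j v_j \in \textnormal{Ker}(T)$. Expressing this kernel element in the basis $\{u_1, \dots, u_k\}$ and rearranging yields a linear dependence among the full basis $\{u_1, \dots, u_k, v_1, \dots, v_m\}$ of $V$; since those vectors are independent, every coefficient, and in particular each $c_j$, must vanish. Combining the spanning and independence arguments establishes that $\{T(v_1), \dots, T(v_m)\}$ is a basis of $\textnormal{Im}(T)$ and completes the proof.
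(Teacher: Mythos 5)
Your proof is correct and complete: the basis-extension argument (fix a basis of $\textnormal{Ker}(T)$, extend it to a basis of $V$, and show the images of the added vectors form a basis of $\textnormal{Im}(T)$) is the classical proof of rank-nullity, and both the spanning step and the linear-independence step are carried out without gaps, including the implicit edge cases where the kernel is trivial or the map is zero. For context, the paper itself does not prove this statement at all; it quotes it as a standard fact from linear algebra and cites Roman (2007, Theorem 2.8), so there is no in-paper argument to compare against --- your write-up is essentially the textbook proof that the citation points to, and it would serve as a self-contained replacement for that citation.
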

\subsection{Relevant Lemmas for Proof of \thmref{dual map}}
First, we would prove a straight-forward result for homogeneous linear separability which forms the basis for the equivalence relation we obtained in \secref{sec.applied}.
\begin{lemma}\label{lemma:symmetry}
If $w$ is the normal vector for the homogeneous linear separator of $\curlybracket{\xndp,\xndn}$ then, $-w$ is the normal vector for the homogeneous linear separator of $\curlybracket{\xndn,\xnd}$.
\end{lemma}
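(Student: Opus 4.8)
The plan is to argue directly from the definition of homogeneous linear separability, exploiting only the linearity of the inner product in its first argument; the statement is an elementary sign-symmetry, so no machinery beyond the definitions of \secref{sec.applied} is required.

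First I would unfold the hypothesis. Since $w$ is a homogeneous linear separator of the ordered dichotomy $\{\xndp,\xndn\}$, by definition every point ${\dt{}} \in \xndp$ satisfies $w\cdot {\dt{}} > 0$ and every point ${\dt{}} \in \xndn$ satisfies $w\cdot {\dt{}} < 0$. Next I would negate the separator and track the induced sign flips: for any point ${\dt{}}$, linearity gives $(-w)\cdot {\dt{}} = -\parenb{w\cdot {\dt{}}}$, so each strict inequality reverses. Consequently every ${\dt{}} \in \xndn$ satisfies $(-w)\cdot {\dt{}} > 0$ and every ${\dt{}} \in \xndp$ satisfies $(-w)\cdot {\dt{}} < 0$. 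Reading these two conditions against the separability definition applied to the swapped dichotomy $\{\xndn,\xndp\}$---in which $\xndn$ now plays the role of the positive class and $\xndp$ the negative class---shows that they are precisely the defining conditions witnessed by $-w$. Hence $-w$ is a homogeneous linear separator of $\{\xndn,\xndp\}$, which is the claim.

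The only point requiring care is purely bookkeeping: one must keep the ordering of the dichotomy fixed so that the labels \emph{positive class} and \emph{negative class} are not conflated when the sign of $w$ is flipped. This symmetry is exactly what licenses grouping $\{\xndp,\xndn\}$ and $\{\xndn,\xndp\}$ into a single class under the equivalence relation $\backsim$, and it is the fact underlying \assref{assumption:positivelabel}, where each class is represented by the dichotomy that labels ${\dt{}}^{(n)}$ positively.
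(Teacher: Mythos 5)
Your proof is correct and follows essentially the same route as the paper's: both unfold the definition of homogeneous linear separability and observe that negating $w$ flips the sign of every inner product, so $-w$ witnesses separability of the swapped dichotomy $\{\xndn,\xndp\}$. Your write-up is in fact slightly cleaner, since the paper's version contains a typo (it writes $x \in \xndp$ in both displayed lines where the second should read $x \in \xndn$), which your bookkeeping remark correctly avoids.
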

\begin{proof} If $w$ is the normal vector for a homogeneous linear separator of $\curlybracket{\xndp,\xndn}$, then,
\begin{align*}
    w\cdot x > 0 \Leftrightarrow  (-w)\cdot x < 0\:  \: \text{if}\: x  \in \xndp\\
     w\cdot x < 0 \Leftrightarrow  (-w)\cdot x > 0\:  \: \text{if}\: x  \in \xndp
\end{align*}
Thus, $-w$ is the the normal vector for a homogeneous linear separator of $\curlybracket{\xndn,\xndp}$
\end{proof}
\noindent
To study the arrangement of dual hyperplanes, we define the matrices $\boldlam_{[(n-1)\times d]}$ and $\bracket{\boldlam_h}_{[(n-1)\times (d-1)]}$ such that $\forall$ $i \in \bracket{n-1}$ $\boldlam\bracket{i,:} = x^{(i)}$ and $\boldlam_h\bracket{i,:} = x_{[d-1]}^{(i)}$ where $x_{[d-1]}$ is first $d-1$ components of $x$. Using the $d'$-relaxed general position arrangement of $\xnd$ and nullity of $x^{(n)}$ as a dimension, in \lemref{rank dual} we show that $\rank{\boldlam_h}$ = $d'-1$ and any ($d'-1$) rows of $\boldlam_h$ are linearly independent .
\begin{lemma}\label{rank dual}
For the matrices constructed above, $\rank{\boldlam_h}$ = $d'-1$, and any ($d'-1$) rows of $\boldlam_h$ are linearly independent.
\end{lemma}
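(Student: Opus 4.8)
The plan is to prove the two assertions---that $\rank{\boldlam_h} = d'-1$ and that every $d'-1$ rows of $\boldlam_h$ are linearly independent---by exploiting the coordinate projection that relates $\boldlam$ to $\boldlam_h$, together with the distinguished role of $x^{(n)} = \mathbi{e}_d$. Write $\pi: \Rd \to \mathbb{R}^{d-1}$ for the projection deleting the last coordinate, so that the rows of $\boldlam_h$ are exactly $\pi$ applied to the rows of $\boldlam$ and $\ker\pi = \textbf{span}\tuple{\mathbi{e}_d}$. The whole argument hinges on the observation that $\mathbi{e}_d$, the one direction collapsed by $\pi$, is itself one of the data points.

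First I would establish the lower bound $\rank{\boldlam_h}\ge d'-1$ by showing that every $d'-1$ rows are linearly independent. Suppose for contradiction that some rows indexed by $i_1,\dots,i_{d'-1}\in[n-1]$ admit a nontrivial relation $\sum_{j} c_j\, x^{(i_j)}_{[d-1]} = 0$. Lifting to $\Rd$, the vector $y \triangleq \sum_j c_j\, x^{(i_j)}$ has vanishing first $d-1$ coordinates, hence $y = y_d\,\mathbi{e}_d = y_d\, x^{(n)}$. This yields a nontrivial linear dependence among the $d'$ distinct points $x^{(i_1)},\dots,x^{(i_{d'-1})},x^{(n)}$ (since each $i_j\le n-1$ is distinct from $n$), contradicting the $d'$-relaxed general position of $\xnd$ (\defref{general position vectors}), under which any $d'$ points are linearly independent. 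Thus every $d'-1$ rows are independent, and $\rank{\boldlam_h}\ge d'-1$.

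For the matching upper bound I would use that in $d'$-relaxed general position the point set spans a $d'$-dimensional subspace $W\subseteq\Rd$ containing $\mathbi{e}_d = x^{(n)}$, and that for $n-1\ge d'$ the rows of $\boldlam$ already span $W$ (any $d'$ of them are independent, so $\dim\parenb{\textbf{span}\tuple{x^{(1)},\dots,x^{(n-1)}}}\ge d'$, forcing equality with $W$). Applying the rank-nullity theorem (\thmref{theorem:rank nullity}) to $\pi$ restricted to $W$, we obtain $\dim\pi(W) = \dim W - \dim\parenb{W\cap\ker\pi} = d' - 1$, because $\mathbi{e}_d\in W$ gives $W\cap\textbf{span}\tuple{\mathbi{e}_d} = \textbf{span}\tuple{\mathbi{e}_d}$, a one-dimensional space. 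Since the row space of $\boldlam_h$ equals $\pi$ applied to the row space of $\boldlam$, namely $\pi(W)$, this gives $\rank{\boldlam_h} = d'-1$. Combining the two bounds proves the lemma.

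The main obstacle is the upper bound: it is where the construction genuinely uses both the choice $x^{(n)}=\mathbi{e}_d$ and the full strength of $d'$-relaxed general position (that the points span only $d'$ dimensions, not merely that every $d'$ of them are independent). The lower bound is a routine lifting argument, but the upper bound must carefully verify that projecting away the coordinate aligned with $x^{(n)}$ collapses \emph{exactly} one dimension---this is precisely why $\mathbi{e}_d$ must lie inside the spanned subspace $W$, and getting this drop to be one rather than zero is the crux of the calculation.
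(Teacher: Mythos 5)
Your proof is correct, and its first half is essentially the paper's own argument: the paper also proves that any $d'-1$ rows of $\boldlam_h$ are linearly independent by lifting a hypothetical dependence $\sum_j \alpha_j x^{(i_j)}_{[d-1]} = 0$ to the relation $\sum_j \alpha_j x^{(i_j)} - \bigparen{\sum_j \alpha_j x^{(i_j)}_d}\, x^{(n)} = 0$, i.e., a nontrivial dependence among $d'$ points of $\xnd$, contradicting \defref{general position vectors}; this is exactly your lower-bound step. Where you genuinely diverge is the rank claim. The paper dispatches it in one line --- any $d'$ points are independent, hence $d'$ columns of $\boldlam$ are independent, hence $d'-1$ columns of $\boldlam_h$ are independent --- which only yields $\rank{\boldlam_h} \ge d'-1$ and never rules out $\rank{\boldlam_h} = d'$. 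Your projection argument supplies precisely this missing upper bound: the row space of $\boldlam_h$ equals $\pi(W)$, where $W = \textbf{span}\tuple{\xnd}$ has dimension $d'$ and contains $\mathbi{e}_d = x^{(n)} \in \ker\pi$, so rank--nullity (\thmref{theorem:rank nullity}) applied to $\pi$ restricted to $W$ forces $\dim\pi(W) = d'-1$. (Equivalently, one could stay closer to the paper's style and note that $d'$ independent rows of $\boldlam_h$ would lift to $d'+1$ independent points of $\xnd$.) Two remarks on hypotheses, both of which you handle well: first, the literal text of \defref{general position vectors} only asserts that every $d'$-subset is independent, while your upper bound (and the paper's own Key Lemma of duality) additionally needs that no $d'+1$ points are independent, i.e., that $\dim W = d'$ exactly; you flag this reliance explicitly, which the paper does not, and it is the intended reading by analogy with the ``otherwise has null intersection'' clause of \defref{defn relaxed general position}. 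Second, your step that the rows of $\boldlam$ already span $W$ uses $n-1 \ge d'$, which is harmless in the paper's regime of large $n$.
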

\begin{proof}
First part of the lemma is straight-forward since, by definition any $d'$ vectors in $\xnd$ are linearly independent which means $d'$ columns of $\boldlam$ are linearly independent, implying $(d'-1)$ columns of $\boldlam_h$ are linearly independent.\vspace{6pt}\\
\noindent
For the second part, for an indexed set $\mathbb{I}_{[d'-1]}$ $\triangleq$ $\curlybracket{i_1,i_2,\cdots,i_{d'-1}}$ consider the $(d'-1)$ rows
$\curlybracket{x_{[d-1]}^{(i_1)},x_{[d-1]}^{(i_2)},\cdots,x_{[d-1]}^{(i_{d'-1})}}$ of $\boldlam_h$ which are linearly dependent. Thus, $\exists$ scalars $\alpha_j$'s (not all zeros) such that:
\begin{align}
    \sum_{j=1}^{d'-1} \alpha_j\cdot x_{[d-1]}^{(i_j)} = 0 &\implies \sum_{j=1}^{d'-1} \alpha_j\cdot \bigparen{x_{[d-1]}^{(i_j)}, x^{(i_j)}_d} - \left(\sum_{j=1}^{d'-1}\alpha_j\cdot x^{(i_j)}_d\right)\cdot x^{(n)} = 0 \label{eqn:eqnd01}\\
    & \implies \sum_{j=1}^{d'-1}\alpha_j\cdot x^{(i_j)} - \left(\sum_{j=1}^{d'-1}\alpha_j\cdot x^{(i_j)}_d\right)\cdot x^{(n)} = 0 \label{eqn:eqnd02}
\end{align}
In \eqnref{eqn:eqnd01} we use that $x^{(n)} = \mathbi{e}_d$.
\eqnref{eqn:eqnd02} implies that we have $d'$ vectors of $\xnd$ linearly dependent. Contradiction! Thus, for any indexed set $\mathbb{I}_{[d'-1]}$, the corresponding submatrix of dimension $[d'-1 \times d'-1]$ of $\boldlam_h$, is full rank. Hence, the second part of the lemma is proven.
\end{proof}
Now, we would give the proof of the key lemma of duality which shows that the mapped hyperplanes follow the criterion of ($d'-1$)-relaxed general position. For the sake of clarity and flow, we would restart with the construction of sets. Let us define $\mathbb{I}_{[k]}$ $\triangleq$ $\{i_1,i_2,...,i_k\}$ as $k$ indices for rows. Denote by $\boldlam_{\mathbb{I}_{[k]}}$ = $\boldlam_{[\mathbb{I}_{[k]}\times d]}$ (rows of $\mathbf{\boldlam}$ corresponding to $\mathbb{I}_{[k]}$) and by $(\boldlam_h)_{\mathbb{I}_{[k]}}$ = $(\boldlam_h)_{[\mathbb{I}_{[k]}\times d-1]}$ (rows of $\boldlam_h$ corresponding to $\mathbb{I}_{[k]}$). As in \secref{sec.applied}, we redefine $\xnd$ $\triangleq$ $\curlybracket{x^{(1)}, x^{(2)}, \dots, x^{(n)}}$.

\begin{lemma}[Key lemma of duality]\label{key lemma of duality}
If $\mathcal{S}^k_{\cap}$ = $\condcurlybracket{x \in \mathbb{R}^{d-1}}{\boldlam_{\mathbb{I}_{[k]}}\binom{x}{1} = \mathbf{0}^k}$ for $1 \le k \le (d'-1)$, then $\dim(\mathcal{S}^k_{\cap}) = (d-1)-k$. Moreover, no $d'$ rows of $\boldlam$ intersects in dual space \tt{i.e.} $\condcurlybracket{x \in \mathbb{R}^{d-1}}{\boldlam_{\mathbb{I}_{[d']}}\binom{x}{1} = \mathbf{0}^{d'}} = \emptyset$.
\end{lemma}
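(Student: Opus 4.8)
The plan is to recognise that $\mathcal{S}^k_{\cap}$ is precisely the intersection of the $k$ dual hyperplanes indexed by $\mathbb{I}_{[k]}$, hence the solution set of an affine linear system whose dimension---or inconsistency---can be read off from two rank computations. Splitting each point as $x^{(i)} = (x^{(i)}_{[d-1]}, x^{(i)}_d)$, I would first rewrite the defining constraint $\boldlam_{\mathbb{I}_{[k]}}\binom{z}{1} = \mathbf{0}^k$ as the system $(\boldlam_h)_{\mathbb{I}_{[k]}}\, z = -\mathbf{b}_{\mathbb{I}_{[k]}}$, where $\mathbf{b}_{\mathbb{I}_{[k]}}$ collects the $d$-th coordinates of the selected points. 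Thus $\mathcal{S}^k_{\cap}$ is the solution set of a system with coefficient matrix $(\boldlam_h)_{\mathbb{I}_{[k]}}$ and augmented matrix $\boldlam_{\mathbb{I}_{[k]}}$.

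For the dimension claim ($1 \le k \le d'-1$), I would invoke \lemref{rank dual}, which guarantees any $(d'-1)$ rows of $\boldlam_h$ are linearly independent; in particular the $k \le d'-1$ rows of $(\boldlam_h)_{\mathbb{I}_{[k]}}$ are independent, so $\rank{(\boldlam_h)_{\mathbb{I}_{[k]}}} = k$. Having full row rank, the map $z \mapsto (\boldlam_h)_{\mathbb{I}_{[k]}}\,z$ is surjective onto $\mathbb{R}^k$, so the system is consistent and its solution set is a translate of $\textnormal{Ker}\parenb{(\boldlam_h)_{\mathbb{I}_{[k]}}}$. By \thmref{theorem:rank nullity} this kernel has dimension $(d-1)-k$, which yields $\dim(\mathcal{S}^k_{\cap}) = (d-1)-k$.

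For the emptiness claim ($k = d'$), I would run a Rouch\'{e}--Capelli-style consistency argument comparing the two ranks. Since $\xnd$ is in $d'$-relaxed general position, any $d'$ of the points---equivalently any $d'$ rows of $\boldlam$---are linearly independent, so the augmented matrix $\boldlam_{\mathbb{I}_{[d']}}$ has rank $d'$. On the other hand $\rank{\boldlam_h} = d'-1$ by \lemref{rank dual}, so its $d'$-row submatrix $(\boldlam_h)_{\mathbb{I}_{[d']}}$ has rank at most $d'-1$. Because the augmented matrix has strictly larger rank than the coefficient matrix, $-\mathbf{b}_{\mathbb{I}_{[d']}}$ cannot lie in the column space of $(\boldlam_h)_{\mathbb{I}_{[d']}}$; the system is therefore inconsistent and $\mathcal{S}^{d'}_{\cap} = \emptyset$.

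The main obstacle, and the only genuine subtlety, is keeping straight the distinction between the coefficient matrix $(\boldlam_h)_{\mathbb{I}_{[\cdot]}}$ and the augmented matrix $\boldlam_{\mathbb{I}_{[\cdot]}}$, and recognising that the two halves of the lemma rest on two different independence properties: the dimension count uses independence of rows of $\boldlam_h$ (\lemref{rank dual}), while the emptiness uses independence of rows of $\boldlam$ (the $d'$-relaxed general position of the points). It is exactly the rank gap between these two matrices at $k = d'$ that forces inconsistency.
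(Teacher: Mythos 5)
Your proof is correct. For the dimension claim ($1 \le k \le d'-1$) you follow essentially the paper's own route: rewrite the constraint as the affine system $(\boldlam_h)_{\mathbb{I}_{[k]}}x = -\mathbf{b}^{\top}$, obtain full row rank $k$ from \lemref{rank dual}, conclude consistency, and apply \thmref{theorem:rank nullity} to get a solution set of dimension $(d-1)-k$. For the emptiness claim at $k=d'$, however, you take a genuinely different route. The paper argues geometrically: a solution $x$ would make $\binom{x}{1}$ a homogeneous separator whose orthogonal hyperplane contains the $d'$ points indexed by $\mathbb{I}_{[d']}$ but not $\dt{}^{(n)}=\mathbi{e}_d$, so these $d'+1$ points of $\xnd$ would be linearly independent, contradicting $d'$-relaxed general position (\defref{general position vectors}). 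You instead compare ranks in the style of Rouch\'{e}--Capelli: the augmented matrix has rank $d'$ (any $d'$ rows of $\boldlam$ are independent), while the coefficient matrix $(\boldlam_h)_{\mathbb{I}_{[d']}}$ has rank at most $d'-1$ because $\rank{\boldlam_h}=d'-1$ by \lemref{rank dual}, and the strict rank gap forces inconsistency. Both arguments are sound and are close cousins---each hinges on the special role of $\dt{}^{(n)}=\mathbi{e}_d$ and the last coordinate---but yours is more modular: the only place where ``no $d'+1$ points of $\xnd$ are independent'' is required is already encapsulated in the proved identity $\rank{\boldlam_h}=d'-1$, so your proof cleanly separates which hypothesis yields which rank (general position of the points for the augmented matrix, \lemref{rank dual} for the coefficient matrix), exactly as you observe in your closing paragraph. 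What the paper's version buys in exchange is that it stays in the separator language of the primal space, which is the picture reused later (e.g., in \appref{appendix: extreme points}) to relate ambiguous hyperplanes to ambiguous points.
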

\begin{proof}[Proof of \lemref{key lemma of duality} of Duality]
Define by $\mathbi{b} \triangleq \parenb{x^{(i_1)}_d, x^{(i_2)}_d,\cdots,x^{(i_k)}_d}$. Notice that, 
\begin{align}
\mbox{\large\(
    \boldlam_{\mathbb{I}_{[k]}}\binom{x}{1} = 0 \Longleftrightarrow  (\boldlam_h)_{\mathbb{I}_{[k]}}x = -\mathbi{b}^{\top} \label{eqn:eqnd03}
\)}
\end{align}
If $k = d'-1$ then $(\boldlam_h)_{\mathbb{I}_{[d'-1]}}$ is $d'-1$ rank invertible matrix implying \eqnref{eqn:eqnd03} has a \textit{unique} solution.
\newline
Note that using \lemref{rank dual}, $(\boldlam_h)_{\mathbb{I}_{[k]}}$ has rank $k$ for $k < d'$. This implies that there is some $x_0 \in \mathbb{R}^{d-1}$ such that $(\boldlam_h)_{\mathbb{I}_{[k]}}x_0 = -\mathbi{b}^{\top}$. Thus, we rewrite \eqnref{eqn:eqnd03} as
\begin{align*}
\mbox{\large\(
    \boldlam_{\mathbb{I}_{[k]}}\binom{x}{1} = 0 \Longleftrightarrow (\boldlam_h)_{\mathbb{I}_{[k]}}x = (\boldlam_h)_{\mathbb{I}_{[k]}}x_0 \Longleftrightarrow (\boldlam_h)_{\mathbb{I}_{[k]}}(x-x_0) = 0
    \)}
\end{align*}
But using \thmref{theorem:rank nullity}, \textbf{Ker}$\parenb{(\boldlam_h)_{\mathbb{I}_{[k]}}}$ = $(d-1)-k$. This implies that $\dim\bigparen{\condcurlybracket{x \in \mathbb{R}^{d-1}}{(\boldlam_h)_{\mathbb{I}_{[k]}}(x-x_0) = 0}}$ = $(d-1)-k$. Thus, $\dim(\mathcal{S}^k_{\cap}) = (d-1)-k$. \\
\newline
Notice that if $\boldlam_{\mathbb{I}_{[d']}}\binom{x}{1} = \mathbf{0}^{d'}$ has a solution then we can define $\binom{x}{1}$ as a homogeneous linear separator and the points of $\xnd$ corresponding to $\boldlam_{\mathbb{I}_{[d']}}$ lie on a $(d-1)$-dimensional halfspace (subspace) defined by $\binom{x}{1}$. Note, $\dt{}^{(n)}$ doesn't lie on that subspace. On the other hand, because of $d'$-relaxed general position arrangement of $\xnd$, rows of $\boldlam_{\mathbb{I}_{[d']}}$ are linearly independent and lie on the subspace. It implies $\textbf{rows}(\boldlam_{\mathbb{I}_{[d']}})\cup \{\dt{}^{(n)}\}$ are linearly independent. Contradiction. Thus, $\condcurlybracket{x \in \mathbb{R}^{d-1}}{\boldlam_{\mathbb{I}_{[d']}}\binom{x}{1} = \mathbf{0}^{d'}} = \emptyset$.
Hence, the lemma follows.
\end{proof}
With \lemref{key lemma of duality} and \eqnref{eqn:construction}, we can formally prove our main theorem of the section on the dual map which says the dual set of hyperplanes are in $(d'-1)$-\tt{relaxed general position} and each equivalence class of dichotomies $\mathfrak{E}\parenb{\xnd}$ maps \tt{uniquely} to all the concepts (hypotheses) in $\boldsymbol{\mathfrak{R}}\parenb{\mathcal{A}(\Hndo)}$.

\subsection{Proof of \thmref{dual map} and \corref{cor: teach phi surfaces}}
In this subsection, we provide the proof of the results of interest. Following the notations in \secref{sec.applied}, we use slightly different notations in the proofs for the sake of clarity.
For a dichotomy class $\bracket{\mathbi{v}} \in \mathfrak{E}\parenb{\xnd}$, we denote the dual point to a separator $w_{\bracket{\mathbi{v}}}$ of the representative dichotomy by $z_{w_{\bracket{\mathbi{v}}}}$ and region corresponding to $z_{w_{\bracket{\mathbi{v}}}}$ as\footnote{In section \secref{sec.applied}, we denote the dual point of the separator  $w_{\bracket{\mathbi{v}}}$ to $\bracket{\mathbi{v}}$ as $\h{}_{\bracket{\mathbi{v}}}$ and region containing $\h{}_{\bracket{\mathbi{v}}}$ as $r_{\h{}_{\bracket{\mathbi{v}}}}$.} $ r_{\h{}_{w_{\bracket{\mathbi{v}}}}} \in \boldsymbol{\mathfrak{R}}\parenb{\mathcal{A}(\boldsymbol{\bar{\mathcal{H}}})}$ such that $z_{w_{\bracket{\mathbi{v}}}} 
\in r_{\h{}_{w_{\bracket{\mathbi{v}}}}}$ \tt{i.e.} $r_{\h{}_{w_{\bracket{\mathbi{v}}}}} = \varphi_{\mathrm{dual}}(\bracket{\mathbi{v}})$.\\

\begin{proof}[Proof of \thmref{dual map}]
By the definition of \ref{tag: d.m}, we get $\Hndo = \Upsilon_{\mathrm{dual}}\parenb{\xnd}$. We constructed the matrices $\boldlam_{[n-1\times d]}$ and $\bracket{\boldlam_h}_{[n-1\times d-1]}$ to study the arrangement of dual hyperplanes. In the Key \lemref{key lemma of duality} of Duality, we proved that $\forall$ $1 \le k \le d'-1$, any size $k$ subset of $\Hndo$ intersects in a flat of dimension $(d-1-k)$ and no $d'$ dual hyperplanes intersect at a point. Thus, we show that $\Hndo$ is in $(d'-1)$-relaxed general position arrangement which proves the first part of the theorem.

First, we notice that $\varphi_{\mathrm{dual}}$ is well-defined since \eqnref{eqn:construction} is a sign preserving construction. 
To prove the bijection of $\varphi_{\mathrm{dual}}$, we first show that it is an \tt{injection}. We assume that $\#\mathfrak{E}\parenb{\xnd} > 1$ since the other case can be handled trivially.
Denote by $\bracket{\mathbi{u}}, \bracket{\mathbi{v}}$ two different equivalence classes of $\mathfrak{E}\parenb{\xnd}$. Let $w_{\bracket{\mathbi{u}}}$ and $w_{\bracket{\mathbi{v}}}$ be two corresponding linear separators respectively. Since $\bracket{\mathbi{u}} \neq \bracket{\mathbi{v}}$, $\exists$ at \tt{least} one point $x' \neq x^{(n)} \in \xnd$ which is classified/labeled \tt{differently}. Consider the dual hyperplane $h_{x'} = \Upsilon_{\mathrm{dual}}(x')$, and the dual points $z_{w_{\bracket{\mathbi{u}}}}$ and $z_{w_{\bracket{\mathbi{v}}}}$ of $w_{\bracket{\mathbi{u}}}$ and $w_{\bracket{\mathbi{v}}}$ respectively using the construction shown in \eqnref{eqn:construction}. Since $w_{\bracket{\mathbi{u}}}$ and $w_{\bracket{\mathbi{v}}}$ classify $x'$ differently, $z_{w_{\bracket{\mathbi{u}}}}$ and $z_{w_{\bracket{\mathbi{v}}}}$ belongs to two different regions of $h_{x'}$, implying $r_{z_{w_{\bracket{\mathbi{u}}}}} \not\equiv r_{z_{w_{\bracket{\mathbi{v}}}}}$ where $r_{z_{w_{\bracket{\mathbi{u}}}}} = \varphi_{\mathrm{dual}}(\bracket{\mathbi{u}})$ and $r_{z_{w_{\bracket{\mathbi{v}}}}} = \varphi_{\mathrm{dual}}(\bracket{\mathbi{v}})$. Thus, $\varphi_{\mathrm{dual}}$ is an injection. 
Consider a region $r \in \boldsymbol{\mathfrak{R}}\parenb{\mathcal{A}(\boldsymbol{\bar{\mathcal{H}}})}$. Pick a point $z_0 \in r$. Now, define $w_z \triangleq (z_0^{\top},1)$. Since $z_0 \in r$, $w_z$ is a homogeneous linear separator of a dichotomy in the primal space corresponding to $r$ where dichotomy is defined by signs using \eqnref{eqn:construction}. Note that it is a valid dichotomy since $0\cdot z_0 + 1 > 0$ implying $(z_0^{\top},1)$ labels $x^{(n)}$ positively. We represent the dichotomy using the class $\bracket{\mathbi{u}}$. Since, $z_0$ is arbitrary, thus $\varphi_{\mathrm{dual}}^{-1}(r)= \bracket{\mathbi{u}}$ implying surjection of $\varphi_{\mathrm{dual}}$. Hence, we show $\varphi_{\mathrm{dual}}$ is a bijection.
\end{proof}
\noindent
The properties of the dual map is key in showing the bound on the teaching complexity of $\phi$-separable dichotomies. We note that the dual map retains the arrangement of the general position of points (\defref{general position vectors}) to relaxed general position of hyperplanes in the dual space (\defref{defn relaxed general position}). Thus, our bound on the average teaching complexity of convex polytopes in \thmref{theorem: Main Theorem} applies in the case of average teaching complexity of $\phi$-separable dichotomies which we show in \corref{cor: teach phi surfaces}. We present the proof of the corollary here. 
\newline
\begin{proof}[Proof of \corref{cor: teach phi surfaces}] For the set $\xnd$, we consider the set of $\phi$-induced points $\phi(\xnd)$ =  $\{\phi(x^{(1)}),\phi(x^{(2)}),\dots,\phi(x^{(n)})\}$ in the $\phi$ induced primal space $\mathbb{R}^{d_{\phi}}$. For the $\phi$-separable dichotomies of $\xnd$, we denote the quotient set of equivalence classes of dichotomies as $\boldsymbol{\mathfrak{E}}_{\xnd}^{\phi}$. Since $\xnd$ are in $d'_\phi$-relaxed $\phi$-general position  for a fixed $d'_\phi \in [d_\phi]$ 
, we can apply the dual map $\bracket{\Upsilon_{\mathrm{dual}} ,\varphi_{\mathrm{dual}}}$ on the pair $[\phi(\xnd), \boldsymbol{\mathfrak{E}}_{\xnd}^{\phi}]$. We denote the set of 
$d'_\phi-1$-relaxed general position
dual hyperplanes by $\boldsymbol{\mathcal{H}}_{n-1,d'_{\phi}-1} \triangleq \Upsilon_{\mathrm{dual}}(\phi(\xnd))$, and the set of dual regions as $\boldsymbol{\mathfrak{R}}\parenb{\mathcal{A}(\boldsymbol{\mathcal{H}}_{n-1,d'_{\phi}-1})} \triangleq \varphi_{\mathrm{dual}}(\boldsymbol{\mathfrak{E}}_{\xnd}^{\phi})$. Using the definition of the teaching set for $\phi$-separable dichotomies and bijection of $\varphi_{\mathrm{dual}}(\cdot)$ (using \thmref{dual map}), we can write:
\begin{equation}
\expctover{\rgn_{[\mathbi{u}]} \sim \mathcal{U}}{M_n} =
\expctover{r\sim\cU}{|\mathcal{TS}(\boldsymbol{\mathcal{H}}_{n-1,d'_{\phi}-1},r)|} \label{eqn:phibound}
\end{equation}
where $\rgn_{[\mathbi{u}]}$ is a random class in $\boldsymbol{\mathfrak{E}}_{\xnd}^{\phi}$ and $r$ is a uniformly random region in $\boldsymbol{\mathfrak{R}}\parenb{\mathcal{A}(\boldsymbol{\mathcal{H}}_{n-1,d'_{\phi}-1})}$. But, using \thmref{theorem: Main Theorem}, we know that $\mathrm{rhs}$ in \eqnref{eqn:phibound} is bounded by $\cO(d_{\phi}')$. Thus, we show that the average teaching complexity of $\phi$-separable dichotomies is $\cO(d_{\phi}')$. This proves the corollary.
\end{proof}
\clearpage
 \section{Equivalence of Teaching Set and Extreme Points: Proof of \thmref{main extreme points}}\label{appendix: extreme points}
In this section, we would talk about the connection of teaching set in the dual space and extreme points in primal space as mentioned in \citet{cover1965geometrical}. In order to complete the proof of the main result \thmref{main extreme points} we would prove two lemmas: \lemref{equivalence first side} and \lemref{equivalence second side}.

In \secref{subsection:connection}, we discussed the characterization of \tt{ambiguous} points in the primal space. Formally, we state the lemma mentioned in \citet{cover1965geometrical} to characterize ambiguous points.

\begin{lemma}[Lemma 1, \citet{cover1965geometrical}]\label{ambiguous cover} Let $X^+$ and $X^-$ be subsets of  $\mathbb{R}^d$, and let $y$ be a point other than the origin in $\mathbb{R}^d$. Then the dichotomies $\{X^+\cup \{y\}, X^- \}$ and $\{X^+, X^-\cup \{y\} \}$ are both
homogeneously linearly separable if and only if $\{ X^+, X^- \}$ is homogeneously linearly separable by a ($d-1$)-dimensional subspace containing $y$. 
\end{lemma}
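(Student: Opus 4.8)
The plan is to first reformulate the geometric condition into a purely algebraic one and then establish the two implications by elementary linear algebra. Observe that ``$\{X^+, X^-\}$ is homogeneously linearly separable by a $(d-1)$-dimensional subspace containing $y$'' is exactly the statement that there exists a nonzero $w \in \mathbb{R}^d$ with $w\cdot x > 0$ for all $x \in X^+$, $w\cdot x < 0$ for all $x \in X^-$, and in addition $w\cdot y = 0$, so that the separating hyperplane $\{x : w\cdot x = 0\}$ passes through $y$. With this reformulation, I would prove the ``if'' direction by perturbing such a $w$ and the ``only if'' direction by taking a suitable convex combination of the two witness separators.

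For the ``if'' direction, suppose $w$ separates $\{X^+, X^-\}$ with $w\cdot y = 0$. Since $X^+$ and $X^-$ are finite, set $m = \min_{x \in X^+ \cup X^-} |w\cdot x| > 0$ and $R = \max_{x} \|x\|$, and consider $w_{\pm} = w \pm \epsilon y$ for small $\epsilon > 0$. Each product $w\cdot x$ changes by at most $\epsilon \|y\| R$, so for $\epsilon < m/(\|y\| R)$ the signs on $X^+$ and $X^-$ are preserved, while $w_{+}\cdot y = \epsilon \|y\|^2 > 0$ and $w_{-}\cdot y = -\epsilon \|y\|^2 < 0$ because $y \neq 0$. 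Hence $w_{+}$ separates $\{X^+ \cup \{y\}, X^-\}$ and $w_{-}$ separates $\{X^+, X^- \cup \{y\}\}$.

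For the ``only if'' direction, let $w_1$ witness separability of $\{X^+ \cup \{y\}, X^-\}$ (so $w_1\cdot y > 0$) and $w_2$ witness separability of $\{X^+, X^- \cup \{y\}\}$ (so $w_2\cdot y < 0$). Consider $w_\lambda = \lambda w_1 + (1-\lambda) w_2$ for $\lambda \in [0,1]$. For every $x \in X^+$ both $w_1\cdot x$ and $w_2\cdot x$ are positive, so $w_\lambda \cdot x > 0$; symmetrically $w_\lambda \cdot x < 0$ on $X^-$, for all $\lambda$. The affine scalar map $\lambda \mapsto w_\lambda \cdot y$ is positive at $\lambda = 1$ and negative at $\lambda = 0$, so by the intermediate value theorem there is $\lambda^\star \in (0,1)$ with $w_{\lambda^\star}\cdot y = 0$; explicitly $\lambda^\star = (w_2\cdot y)/\bigl((w_2\cdot y) - (w_1\cdot y)\bigr)$. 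This $w_{\lambda^\star}$ is the desired separator annihilating $y$.

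The main obstacle is really only a matter of care: ensuring the strict inequalities survive each manipulation. In the ``if'' direction this requires the uniform margin $m > 0$ guaranteed by finiteness of $X^+ \cup X^-$, and in the ``only if'' direction it requires checking that $w_{\lambda^\star} \neq 0$ — handled by nonemptiness of $X^+$ (or $X^-$), since $w_{\lambda^\star}$ is strictly positive there — so that $\{x : w_{\lambda^\star}\cdot x = 0\}$ is genuinely a $(d-1)$-dimensional subspace. Both points become routine once the reformulation in terms of a separator vanishing on $y$ is in place.
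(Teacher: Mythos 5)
Your proof is correct, but note that there is no in-paper argument to compare it against: the paper states this lemma verbatim as Lemma~1 of \citet{cover1965geometrical} and treats it as an imported black box (it is only \emph{used}, in the proofs of \lemref{equivalence first side} and \lemref{equivalence second side}), so you have supplied a proof where the paper offers none. What you wrote is essentially the classical argument behind Cover's lemma: reformulate ``separable by a $(d-1)$-dimensional subspace containing $y$'' as the existence of a nonzero $w$ satisfying the strict sign conditions on $X^{\pm}$ together with $w\cdot y=0$; get the ``if'' direction from the perturbations $w\pm\epsilon y$, and the ``only if'' direction from the convex combination $\lambda w_1+(1-\lambda)w_2$ and the intermediate value theorem. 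Your explicit value $\lambda^\star=(w_2\cdot y)/\bigl((w_2\cdot y)-(w_1\cdot y)\bigr)$ checks out: with $w_1\cdot y>0>w_2\cdot y$ both numerator and denominator are negative and the denominator is the larger in magnitude, so $\lambda^\star\in(0,1)$, and one verifies $w_{\lambda^\star}\cdot y=0$ directly. Two caveats, both of which you flag honestly and which deserve to stay visible: (i) the margin $m>0$ in the ``if'' direction genuinely requires $X^+\cup X^-$ to be finite (or at least bounded and at positive distance from the separating subspace) --- for infinite sets this direction can fail, e.g.\ unit vectors accumulating at $y$ from both sides of the hyperplane $\{x: w\cdot x=0\}$ --- but in every application the paper makes, $X^{\pm}$ are subsets of the $n$ data points, so finiteness is the intended reading; (ii) the non-degeneracy $w_{\lambda^\star}\neq 0$ needs $X^+\cup X^-\neq\emptyset$, which again holds in the paper's usage since $x^{(n)}$ is always labeled positive by \assref{assumption:positivelabel}. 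With those contextually justified readings, your proof is complete and correct.
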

Using this lemma we can argue on the equivalence of the ambiguous points in the primal space and ambiguous hyperplanes in the dual space.  Let $P^+$ and $P^-$ be subsets of $X^+$ and $X^-$ respectively, whose classes/labels are ascertained (known). Denote by $H^+$ and $H^-$ (for $P^+$ and $P^-$) the corresponding subsets of dual hyperplanes in the dual space. Assume that $y$ is a new point in the primal space. Due to the nature of the dual map which uses the information of the vector $x^{(n)}$, we assume that the label for $x^{(n)}$ is known and $x^{(n)} \in P^+$. In the asymptotic analysis of our algorithms, this much information can be trivially included. We state this as a key assumption as mentioned in \assref{assumption:positivelabel}.\vspace{6pt}\\
\noindent
In section \secref{sec.applied}, we constructed a teaching set for a dichotomy via dual map.  
With the virtue of the \ref{tag: d.m}, we show the equivalence of extreme points in the primal space and teaching set in the dual space. In other words, extreme points are exactly the inverse of the teaching set in the dual space under $\bracket{\Upsilon_{\mathrm{dual}} ,\varphi_{\mathrm{dual}}}$.
In the next two lemmas we show that for the points $\mathcal{P}^+ \cup \mathcal{P}^-$ mapped to $\Upsilon_{\mathrm{dual}}\parenb{\mathcal{P}^+ \cup \mathcal{P}^-}$, $y$ is ambiguous $\mathrm{wrt}$ $\mathcal{P}^+ \cup \mathcal{P}^-$ \tt{iff} $\Upsilon_{\mathrm{dual}}\parenb{y}$ is ambiguous $\mathrm{wrt}$ to the region $\varphi_{\mathrm{dual}}\parenb{\bracket{\{\mathcal{P}^+,\mathcal{P}^-\}}}$. The key insights in establishing the connection is in using \eqnref{eqn:construction} and noting how \lemref{ambiguous cover} is essentially same as the characterization in \defref{ambiguous hyperplane}.

\begin{lemma}\label{equivalence first side}
If $y$ is ambiguous with respect to the partial dichotomy $\curlybracket{\mathcal{P}^+,\mathcal{P}^-}$, then $h_y\coloneqq \Upsilon_{\mathrm{dual}}\parenb{y}$ (dual hyperplane) is ambiguous with respect to $\varphi_{\mathrm{dual}}\parenb{\bracket{\{\mathcal{P}^+,\mathcal{P}^-\}}}$ \tt{i.e.} the region induced by the hyperplane arrangement of $\mathcal{H}_{\mathcal{P}^+ \cup \mathcal{P}^-}$.
\end{lemma}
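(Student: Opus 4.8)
The plan is to use Cover's characterization of ambiguous points (\lemref{ambiguous cover}) to produce a separating subspace passing through $y$, and then transport this subspace through the sign-preserving dual construction of \eqnref{eqn:construction} to exhibit a single dual point that simultaneously lies in the target region and on $h_y$, which is exactly what \defref{ambiguous hyperplane} requires.

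First I would invoke \lemref{ambiguous cover}: since $y$ is ambiguous with respect to $\curlybracket{\mathcal{P}^+,\mathcal{P}^-}$, the dichotomy $\curlybracket{\mathcal{P}^+,\mathcal{P}^-}$ is homogeneously linearly separable by a $(d-1)$-dimensional subspace containing $y$. Let $w$ be a normal vector of this subspace, so that $w\cdot x > 0$ for all $x\in\mathcal{P}^+$, $w\cdot x < 0$ for all $x\in\mathcal{P}^-$, and crucially $w\cdot y = 0$. Because $x^{(n)}\in\mathcal{P}^+$ by \assref{assumption:positivelabel}, I would orient $w$ so that $w\cdot x^{(n)} > 0$, i.e.\ $w_d > 0$; this makes the division by $w_d$ in \eqnref{eqn:construction} legitimate and, since $w_d>0$, sign-preserving.

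Next I would pass to the dual. The separator $w$ maps to the dual point $\h{}_w = w_{[d-1]}/w_d$. For every $x\in\mathcal{P}^+\cup\mathcal{P}^-$, \eqnref{eqn:construction} shows that the sign of $w\cdot x$ equals the sign of $x_{[d-1]}\cdot\h{}_w + x_d$, i.e.\ the side of the dual hyperplane $h_x$ on which $\h{}_w$ lies. As these signs match the dichotomy, $\h{}_w$ satisfies every defining strict inequality of the region $r^{*} := \varphi_{\mathrm{dual}}\parenb{[\curlybracket{\mathcal{P}^+,\mathcal{P}^-}]}$ induced by $\mathcal{A}(\mathcal{H}_{\mathcal{P}^+\cup\mathcal{P}^-})$, so $\h{}_w$ lies in the \emph{open} cell $r^{*}$. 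At the same time, $w\cdot y = 0$ translates via \eqnref{eqn:construction} into $y_{[d-1]}\cdot\h{}_w + y_d = 0$, meaning $\h{}_w$ lies exactly on the dual hyperplane $h_y = \Upsilon_{\mathrm{dual}}(y)$ (a genuine hyperplane, since general position of $\xnd$ prevents $y$ from being collinear with $x^{(n)}=\mathbi{e}_d$, forcing $y_{[d-1]}\neq 0$).

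Finally, since $r^{*}$ is an open region containing $\h{}_w$, there is $\epsilon>0$ with $\mathbb{B}_2(\h{}_w,\epsilon)\subset r^{*}$; because $h_y$ does not belong to the arrangement $\mathcal{A}(\mathcal{H}_{\mathcal{P}^+\cup\mathcal{P}^-})$, this ball lies entirely inside $r^{*}$ while being bisected by $h_y$. Thus $\h{}_w\in h_y$ is a point of $h_y$ admitting a ball contained in $r^{*}$, which is precisely \defref{ambiguous hyperplane}, so $h_y$ is ambiguous with respect to $\varphi_{\mathrm{dual}}\parenb{[\curlybracket{\mathcal{P}^+,\mathcal{P}^-}]}$. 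The only delicate point is the orientation bookkeeping of $w$ (ensuring $w_d>0$) so that \eqnref{eqn:construction} preserves rather than flips signs; once that is pinned down, the correspondence between ``separating subspace through $y$'' and ``dual point on $h_y$ inside the open cell'' is immediate.
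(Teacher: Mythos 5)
Your proof is correct and follows essentially the same route as the paper's: invoke \lemref{ambiguous cover} to obtain a separator $w$ through $y$, dualize it via \eqnref{eqn:construction} to a point lying both in the region $\varphi_{\mathrm{dual}}\parenb{\bracket{\{\mathcal{P}^+,\mathcal{P}^-\}}}$ and on $h_y$, and conclude ambiguity. The additional bookkeeping you supply (the orientation $w_d>0$ and the explicit ball required by \defref{ambiguous hyperplane}) only makes explicit what the paper's proof leaves implicit.
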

\begin{proof}
Denote the region representing the partial dichotomy in the dual space by $r_{\mathrm{partial}}$. To show that, $h_y$ is ambiguous, we need to show that $h_y$ intersects $r_{\mathrm{partial}}$. Using \lemref{ambiguous cover}, we know that $y$ is ambiguous with respect to $\{ P^+, P^- \}$ \tt{iff} there exists
homogeneous linear separator $w_{y}$ for $\{ P^+, P^- \}$ passing through $y$. Notice that $w_{y}$ has a dual image (as a point) since $(w_y)_d$ > 0 as $w_y\cdot x^{(n)} > 0$. Say $z_{w_y}$ is the dual point then using \eqnref{eqn:construction}  $z_{w_y} \in r_{\mathrm{partial}}$ and since $w_y \cdot y$ = 0, it implies that hyperplane $h_y$ contains $z_{w_y}$. Hence, $h_y$ intersects $r_{\mathrm{partial}}$. Thus, lemma follows.
\end{proof}
Now, we would show that the pre-image (of dual map) of an ambiguous hyperplane with respect to a region in a hyperplane arrangement is 
an \textit{extreme point} for the corresponding dichotomy. Assume that the dual hyperplane of the point $y$ (in primal) is $h_y$ and it is ambiguous \tt{i.e.} it intersects the region corresponding to the partial dichotomy $\{ P^+, P^- \}$ in the dual space.

\begin{lemma}\label{equivalence second side}
If a hyperplane $h_y$ is ambiguous in the dual space, then $y \coloneqq \Upsilon_{\mathrm{dual}}^{-1}\parenb{h_y}$ is ambiguous in the primal space, where inverse of $\Upsilon_{\mathrm{dual}}$ is taken over the restriction $H^+ \cup H^-$.
\end{lemma}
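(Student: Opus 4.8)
The plan is to prove the converse of Lemma~\ref{equivalence first side} by running the dual-map argument backwards: I would convert the geometric ambiguity of $h_y$ in the dual space into Cover's ambiguity criterion in the primal space. As in the forward direction, the entire argument rests on the sign-preserving nature of the duality \eqnref{eqn:construction}, under which ``a dual point lies inside a region'' corresponds to ``the associated primal vector separates the dichotomy,'' and ``a dual point lies on $h_y$'' corresponds to ``that separator passes through $y$.''

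First I would unpack the hypothesis through \defref{ambiguous hyperplane}. Since $h_y$ is ambiguous with respect to $r_{\mathrm{partial}} = \varphi_{\mathrm{dual}}([\{\mathcal{P}^+,\mathcal{P}^-\}])$, there exist a point $z \in h_y$ and some $\epsilon > 0$ with $\mathbb{B}_2(z,\epsilon) \subset r_{\mathrm{partial}}$; in particular $z$ lies in the open region $r_{\mathrm{partial}}$. I would then produce a candidate primal separator by setting $w_z \triangleq (z^\top, 1)$, whose dual point under the construction of \thmref{dual map} is exactly $z$, and whose last coordinate is positive, so that $w_z$ labels $x^{(n)}$ positively, consistent with \assref{assumption:positivelabel}.

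Second, I would extract the two consequences that make the inversion work. Because $z \in r_{\mathrm{partial}}$ and \eqnref{eqn:construction} preserves signs, $w_z$ correctly separates $\{\mathcal{P}^+,\mathcal{P}^-\}$: for every $x \in \mathcal{P}^+ \cup \mathcal{P}^-$, the sign of $w_z \cdot x$ equals the sign of the evaluation of the dual hyperplane $h_x$ at $z$, which is fixed by the halfspace of $h_x$ containing $r_{\mathrm{partial}}$. Because $z \in h_y$, evaluating $h_y$ at $z$ gives $0$, so again by \eqnref{eqn:construction} we obtain $w_z \cdot y = 0$, i.e.\ the $(d-1)$-dimensional separating subspace orthogonal to $w_z$ passes through $y$. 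Finally, with $w_z$ a homogeneous separator of $\{\mathcal{P}^+,\mathcal{P}^-\}$ whose separating subspace contains $y$, I would invoke \lemref{ambiguous cover} (Cover's Lemma~1) to conclude that both $\{\mathcal{P}^+ \cup \{y\},\mathcal{P}^-\}$ and $\{\mathcal{P}^+,\mathcal{P}^- \cup \{y\}\}$ are homogeneously linearly separable, which is exactly the statement that $y = \Upsilon_{\mathrm{dual}}^{-1}(h_y)$ is ambiguous in the primal space.

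The main obstacle, and the step requiring the most care, is the second one: verifying that sign-preservation transfers ``$z$ lies in the \emph{open} region $r_{\mathrm{partial}}$'' into \emph{strict} separation of $\mathcal{P}^+,\mathcal{P}^-$ by $w_z$ uniformly over the whole restriction $H^+ \cup H^-$, while simultaneously ``$z$ lies on $h_y$'' yields the \emph{equality} $w_z \cdot y = 0$ rather than a strict inequality. Getting these two regimes (interior versus boundary) to land on exactly the right side of \eqnref{eqn:construction} is precisely the bookkeeping that makes the dual map invert the reasoning of Lemma~\ref{equivalence first side}, and thereby yields the second half of the equivalence needed for \thmref{main extreme points}.
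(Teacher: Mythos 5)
Your proposal is correct and follows essentially the same route as the paper's proof: both take a point $z$ of $r_{\mathrm{partial}}$ lying on $h_y$, dualize it to the candidate separator $w \triangleq (z^{\top},1)$, use the sign-preserving construction \eqnref{eqn:construction} to get that $w$ separates $\{\mathcal{P}^+,\mathcal{P}^-\}$ while $w\cdot y = 0$, and conclude ambiguity of $y$ via Cover's characterization (\lemref{ambiguous cover}). The only cosmetic difference is that you make explicit the open-ball clause of \defref{ambiguous hyperplane} and the positivity of the last coordinate (consistency with \assref{assumption:positivelabel}), which the paper leaves implicit.
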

\begin{proof}
To show that $y$ is ambiguous, we need to show that there is a homogeneous linear separator, say $w_y$ which separates the partial dichotomy $\{ P^+, P^- \}$ and passes through $y$. Similar to \lemref{equivalence first side}, define the region representing the partial dichotomy in the dual space by $r_{\mathrm{partial}}$. Since, $h_y$ intersects $r_{\mathrm{partial}}$, we know that there exists a point $z_0 \in r_{\mathrm{partial}}$ which lies on the hyperplane $h_y$. As shown in the construction in \eqnref{eqn:construction}, $h_y \equiv y_{[d-1]}\cdot z + y_d = 0$ for $z \in \mathbb{R}^{d-1}$. Now, define $w_y \triangleq (z_0^{\top},1)$. Note that, $y_{[d-1]}\cdot z_0 + y_d = 0$, thus implies $w_y\cdot y = 0$. Also, $w_y$ is a homogeneous linear separator of the partial dichotomy in the primal space since $z \in r_{partial.}$. Hence, we have shown that there exists a homogeneous linear hyperplane passing through $y$ and separating the partial dichotomy. Thus, $y$ is ambiguous. Hence, the lemma follows. 
\end{proof}
\noindent
Given that we have established the equivalence of \tt{ambiguous} points in the primal space and \tt{ambiguous} hyperplanes in the dual space, we can show the equivalence of extreme points and teaching set. We provide the proof of \thmref{main extreme points} here.\\
\begin{proof}[Proof of \thmref{main extreme points}]
$\mathrm{WLOG}$ we assume that $x^{(n)} \in \xndp$ as stated in \assref{assumption:positivelabel}. We denote the $\phi$-separable dichotomy class $\big[{\{\xndp,\xndn\}}\big]$ by $\bracket{\mathbi{u}}$. 
First, we show ($\Rightarrow$) \tt{i.e.} if condition.
Consider the mapped concept (dual region) $r_{z_{\bracket{\mathbi{u}}}} = \varphi\parenb{\bracket{\mathbi{u}}}$. Using \eqnref{eqn:construction} it is easy to see, if $T_s$ is the teaching set for $r_{z_{\bracket{\mathbi{u}}}}$, then using \lemref{equivalence second side}, $\Upsilon_{\mathrm{dual}}^{-1}\parenb{T_s}$ is \tt{ambiguous} $\mathrm{wrt}$ $\bracket{\mathbi{u}}$ following the characterization mentioned in \lemref{ambiguous cover}. This implies that $\Upsilon_{\mathrm{dual}}^{-1}\parenb{T_s} \subseteq E$. Now, using \lemref{equivalence first side}, since $E$ is ambiguous $\mathrm{wrt}$ $\bracket{\mathbi{u}}$ in the primal space, $\varphi_{\mathrm{dual}}\parenb{E}$ is ambiguous $\mathrm{wrt}$ $r_{z_{\bracket{\mathbi{v}}}}$ in the dual space. This implies $\Upsilon_{\mathrm{dual}}\parenb{E} \subseteq T_s$. Using the two sides of the containment, we have $\Upsilon_{\mathrm{dual}}\parenb{E} \equiv T_s$. This implies that $\Upsilon_{\mathrm{dual}}\parenb{E}$ is the teaching set for $\varphi_{\mathrm{dual}}\parenb{\big[{\{\xndp,\xndn\}}\big]}$.\vspace{6pt}\\
\noindent
Now, we show ($\Leftarrow$) \tt{i.e.} only if condition. Since $\Upsilon_{\mathrm{dual}}\parenb{E}$ is the teaching set for $\varphi_{\mathrm{dual}}\parenb{\big[{\{\xndp,\xndn\}}\big]}$, this implies $E$ is ambiguous in the primal space using \lemref{equivalence second side}, implying a subset of extremal points. We need to ascertain that $E$ is sufficiently a set of extremal points. Now, if $y' \notin E$ is ambiguous in the primal space, then $\Upsilon_{\mathrm{dual}}\parenb{y'}$ is \tt{ambiguous} in the dual space using \lemref{equivalence first side}. Thus, $\Upsilon_{\mathrm{dual}}\parenb{y'} \in \Upsilon_{\mathrm{dual}}\parenb{E}$ using the characterization of teaching set as stated in \defref{ambiguous hyperplane}. Hence, $E$ is sufficient. Thus, $E$ is a minimal set of extremal points.

Thus, we have proven the theorem. We show that the teaching set in the dual space is optimally recoverable as extreme points in the primal space.
\end{proof}


\clearpage
\section{Additional Use-case: Teaching Linear Ranking via Pairwise Comparisons}
\label{appendix: teaching ranking}
In this section, we would talk about the problem of teaching a randomly selected ranking of $n$ objects embedded in a $d$-dimensional space. Consider a set $\Theta$ of $n$ objects embedded in $\Rd$ (in general position). We define a ranking on the objects as an ordering $\sigma: \bracket{n} \rightarrow \bracket{n}$ of the form:
\begin{align*}
    \sigma\parenb{\Theta}:= \theta_{\sigma(1)} \prec \theta_{\sigma(2)}\cdots \prec \theta_{\sigma(n-1)} \prec \theta_{\sigma(n)}
\end{align*}
where $\theta_i \prec \theta_j$ implies $\theta_i$ precedes $\theta_j$ in ranking. The problem of interest is to construct a random ranking using \tt{pairwise comparisons} of the form:
\begin{align*}
    q_{i,j}:= \curlybracket{\theta_i \prec \theta_j}
\end{align*}
The response or label of $q_{i,j}$ is binary and denoted as $y_{i,j} := \mathbf{1} \curlybracket{q_{i,j}}$ where $\mathbf{1}$ is the indicator function; ties are not allowed. This is a well-studied problem in the literature and in the general setting it requires $\Theta(n \log n)$ bits of information to specify a ranking. But by imposing certain constraints on the embedding of the objects into the $d$-dimensional Euclidean space, \citet{jamieson2011active} shows we can get rid of the $n$ factor in the active query complexity.

\begin{figure*}[h]
\centering
		\includegraphics[width=.4\linewidth]{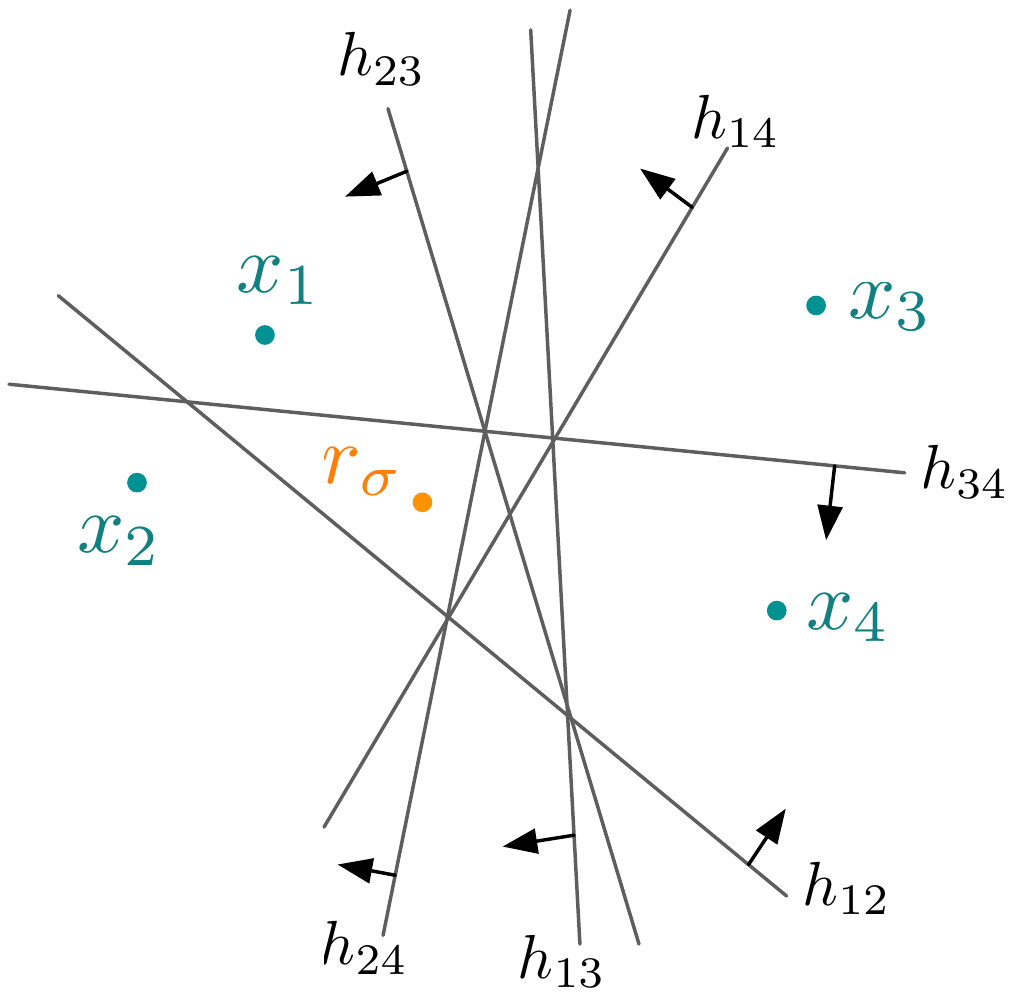}
  \caption{Teaching ranking via pairwise comparisons.}
	\label{fig:example.ranking}
	\vspace{-3mm}
\end{figure*}

We assume that for any ranking $\sigma$, there is a \tt{reference point} $r_{\sigma}$ such that if $\sigma$ ranks $\theta_i \prec \theta_j$ , then $||\theta_i - r_{\sigma}|| < ||\theta_j - r_{\sigma}||$. We refer to such assumption as {\bf E1}---This leads to an interpretation of a query ``is $\theta_i$ closer to $r_{\sigma}$ than $\theta_j$'', as identifying which side of the bisecting hyperplane (as shown in \defref{bisecting hyperplane}) of $\theta_i$ and $\theta_j$ does $r_{\sigma}$ lies in (as shown in \figref{fig:example.ranking}). 
Before we discuss our teaching results and connections to the prior work of \citet{jamieson2011active}, we mention our key assumption \citep[also mentioned in][]{jamieson2011active} over the space of rankings as follows:
\begin{assumption}[\textbf{E1 embedding}]
The set of $n$ objects are embedded in $\Rd$ (in general position) and we will also use $\theta_1,\theta_2,\cdots,\theta_n$ to refer to their (known) locations in $\Rd$. Every ranking $\sigma$ can be specified by a reference point $r_{\sigma}$ $\in \Rd$, as follows. The Euclidean distances between the reference and objects are consistent with the ranking in the following sense: if the $\sigma$ ranks $\theta_i \prec \theta_j$ , then $||\theta_i - r_{\sigma}|| < ||\theta_j - r_{\sigma}||$. Let $\Sigma_{n,d}$ denote the set of all possible rankings of the $n$ objects that satisfy this embedding condition.
\end{assumption}
We assume that every pairwise comparison is consistent with the ranking to be learned. That is, if the reference ranks $\theta_i \prec \theta_j$, then $\theta_i$ must precede $\theta_j$ in the (full) ranking. We define the notion of bisecting hyperplane corresponding to objects $\theta_i$ and $
\theta_j$ as follows:
\begin{definition}[Bisecting hyperplane]\label{bisecting hyperplane}
A hyperplane $h_{i,j}$ in $\Rd$ is a bisecting hyperplane to objects $\theta_i$ and $\theta_j$ if both are equidistant from $h_{i,j}$ and $h_{i,j}\cdot (\theta_i - \theta_j) = 0$.
\end{definition}
Thus, $n$ objects lead to $\binom{n}{2}$ hyperplanes (one query for each pair of objects) in $\Rd$:\\
\[\underbrace{n\: \textnormal{embedded objects in\:} \Rd}_{\textnormal{Rankings of\:} \Theta}
    \stackrel{\boldsymbol{\textbf{E1}}}{\:\leadsto\:} \underbrace{ {\tiny \binom{n}{2}} \textnormal{ hyperplanes in\:} \Rd}_{\textnormal{Convex polytopes : reference points}}\]
    
Each convex polytope corresponds to a reference point, thereby to a ranking of objects.
\paragraph{Geometric interpretation of E1} We summarize the geometric interpretation of the key assumption which follows similar motivations as given in \citet[section 3]{jamieson2011active}. If we consider two objects $\theta_i$ and $\theta_j$ in $\Rd$, querying for $y_{i,j}$ corresponding to $q_{i,j}$ is equivalent to ascertaining to which halfspace of the orthogonal bisecting hyperplane of $\theta_i$ and $\theta_j$, $r_{\sigma}$ belongs to. The set of all possible pairwise comparison queries can be represented as $\binom{n}{2}$ distinct halfspaces in $\Rd$. The intersections of these halfspaces partition $\Rd$ into a number of cells termed as $d$-cells, and each one corresponds to a unique ranking of $\Theta$. Arbitrary rankings are not possible due to the embedding assumption \textbf{E1}. Similar to \cite{jamieson2011active},  we represent the set of rankings possible under \textbf{E1} by $\Sigma_{n,d}$. The cardinality of $\Sigma_{n,d}$ is equal to the number of cells in the partition.\\
\newline
Now, we formulate the teaching problem of linear rankings under the mentioned assumptions here.
\paragraph{Teaching rankings as teaching convex polytopes} Denote the $\binom{n}{2}$ hyperplanes induced by pairwise-comparison of $n$ embedded objects by $\Hndp$.
Following our teaching framework in \secref{section: teaching framework}, we know that $\regionpair$ induced by $\cA(\Hndp)$ forms the underlying hypothesis class; with instances $\Hndp$ and corresponding labeling set $\{1,-1\}$. Thus, teaching a ranking $r_{\sigma}$ corresponds to providing the teaching set $\mathcal{TS}(\Hndp,r_{\sigma})$ to a learner.

Interestingly, we note that 
the hyperplanes induced by pairwise comparison of objects are no longer in general position. For example, in \figref{fig:example.ranking}, the three bisecting hyperplanes induced by any three points (in $\reals^2$) intersect at an 1-d subspace. When the embedded objects follow the assumption {\bf E1}(embedding) \citep{jamieson2011active}\footnote{We work in noise-free setting thus consistency is assumed similar to \cite{jamieson2011active}} show that the average query complexity for active ranking is $\mathcal{O}(d\log n)$. In contrast, we would show that the average teaching complexity of ranking via pairwise comparisons is $\mathcal{O}(d)$ via our \algoref{algo: teaching ranking}. 
\subsection{Algorithm for Teaching Rankings}
We present our basic algorithm for teaching a ranking via pairwise comparisons. We assume we are given a set of $n$ objects $\Theta$ embedded in $\Rd$ in general position and a uniformly random ranking $r_{\sigma} \in \Sigma_{n,d}$ over it.

\begin{algorithm}
\caption{Teaching Ranking via Pairwise Comparisons}
\label{algo: teaching ranking}
\setcounter{AlgoLine}{0}
\nl {\bf Input}: $n$ objects in $\mathbb{R}^d$, random ranking $r_{\sigma} \in \Sigma_{n,d}$ {\label{lst:line:pair01}}\\
\Begin{
\nl $\mathcal{TS}(\Hndp,r_{\sigma}) \leftarrow$ {\bf FindLabels}$\parenb{r_{\sigma}}$ \tcc{indentifies  $\mathcal{TS}(\Hndp,r_{\sigma})$ via linear programming} {\label{lst:line:dual02}}
\nl\For {$(h,l)$ $\in$ $\mathcal{TS}(\Hndp,r_{\sigma})$ {\label{lst:line:dual03}}}{
teacher provides halfspace queries $(h,l)$
}
}
\end{algorithm}

Note that to teach the ranking $r_{\sigma}$ teacher has to provide the labels in $\mathcal{TS}(\Hndp,r_{\sigma})$. Since, $\mathcal{TS}(\Hndp,r_{\sigma})$ corresponds to the labels of the query hyperplanes which form the bounding set for $r_{\sigma}$, thus the entire ranking can be \tt{inferred}. \algoref{algo: teaching ranking} is straight forward in which for the set of objects $\Theta$ and a random ranking $r_{\sigma}$  teacher identifies the pair of comparisons using the subroutine \textbf{FindLabels}($\cdot$) and iteratively provides the labels (or halfspace queries) $\mathrm{wrt}$ the reference $r_{\sigma}$.
As discussed for \algoref{algo: teaching halfspace}, the subroutine \textbf{FindLabels}($\cdot$) can obtain the enclosing region in $\cO(n^4)$ iteration by solving linear equations system corresponding to $\cO(n^2)$ constraints.

\subsection{Average Complexity of Teaching Linear Ranking Functions}
Before we delve into the relevant results of the subsection, we would motivate the notations.
\paragraph{Notations} Consider the set of $n$ objects $\Theta = \parenb{\theta_1,\theta_2,\cdots,\theta_n}$ embedded in $\Rd$ in general position. We denote by $h_{i,j}$ the bijecting hyperplane for the pairwise comparison $q_{i,j}$ for objects $\theta_i$ and $\theta_j$. We use $C(n,d)$ to denote the number of regions or equivalently $d$-cells induced by query hyperplanes corresponding to pairwise comparisons of the embedded objects. $F(n,d)$ denotes the number of faces induced on all the query hyperplanes by their intersections.\\
\newline
The ideas behind the bound share similar motivations as for \thmref{theorem: Main Theorem}. Since the rankings are selected uniform at random, if we ascertain the number of faces for any region on average we get the bound. Thus, first we mention a recursion on $C(n,d)$ stated in \cite{jamieson2011active}. Then, we provide the result for the total number of faces induced on all the bisecting hyperplanes. 

\begin{lemma}[Lemma 1, \cite{jamieson2011active}]\label{pairwise regions}
 Assume {\bf E1}. Let $C(n, d)$ denote the number of $d$-cells (regions) defined by the hyperplane arrangement of pairwise comparisons between these objects (\tt{i.e.} $C(n, d) = |\Sigma_{n,d}|)$. $C(n, d)$ satisfies the recursion:
 \begin{equation*}
     C(n,d) = C(n-1, d) + (n-1)C(n-1, d-1)
 \end{equation*}
\end{lemma}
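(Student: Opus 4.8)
The plan is to build the arrangement incrementally by objects: start from the arrangement $\mathcal{A}_{n-1}$ of the $\binom{n-1}{2}$ bisecting hyperplanes $h_{i,j}$ with $\{i,j\}\subseteq\{1,\dots,n-1\}$, which by definition has $C(n-1,d)$ cells, and then insert the $n-1$ new bisectors $h_{1,n},h_{2,n},\dots,h_{n-1,n}$ one at a time. I would use the standard fact (the same one underlying the proof of \lemref{d-general}) that adding a single hyperplane $H$ to an existing arrangement increases the number of $d$-cells by exactly the number of $(d-1)$-cells into which $H$ is subdivided by the traces of the already-present hyperplanes, since each such $(d-1)$-cell splits exactly one existing $d$-cell into two. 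Hence $C(n,d)-C(n-1,d)=\sum_{k=1}^{n-1} N_k$, where $N_k$ is the number of cells of the arrangement induced on $h_{k,n}$ by $\mathcal{A}_{n-1}\cup\{h_{1,n},\dots,h_{k-1,n}\}$.

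The key geometric observation is that the previously inserted bisectors $h_{j,n}$ with $j<k$ contribute no new cut inside $h_{k,n}$. Indeed, a point $x\in h_{k,n}$ satisfies $\|x-\theta_k\|=\|x-\theta_n\|$, so on $h_{k,n}$ the condition $\|x-\theta_j\|=\|x-\theta_n\|$ defining $h_{j,n}$ is equivalent to $\|x-\theta_j\|=\|x-\theta_k\|$, which is exactly the condition defining $h_{j,k}$. Therefore $h_{j,n}\cap h_{k,n}=h_{j,k}\cap h_{k,n}$, i.e. the trace of $h_{j,n}$ on $h_{k,n}$ coincides with the trace of the old hyperplane $h_{j,k}\in\mathcal{A}_{n-1}$, which is already present. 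Consequently $N_k$ equals the number of cells induced on $h_{k,n}$ by $\mathcal{A}_{n-1}$ alone, independently of $k$ and of the insertion order.

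It then remains to show that this induced arrangement on the $(d-1)$-dimensional flat $h_{k,n}$ has exactly $C(n-1,d-1)$ cells. Here I would pass to the ``order arrangement'' viewpoint: the cells induced on $h_{k,n}$ are in bijection with the distinct orderings of the quantities $\|x-\theta_i\|^2$, $i\in\{1,\dots,n-1\}$, as $x$ ranges over $h_{k,n}$. Writing $\|x-\theta_i\|^2=\|x\|^2-2\,\theta_i\cdot x+\|\theta_i\|^2$ and discarding the common term $\|x\|^2$ (which does not affect the ordering), each comparison reduces to comparing the affine functions $\ell_i(x)=-2\,\theta_i\cdot x+\|\theta_i\|^2$ restricted to $h_{k,n}\cong\mathbb{R}^{d-1}$. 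Thus the induced arrangement is the arrangement of the $\binom{n-1}{2}$ ``difference'' hyperplanes $\{\ell_i=\ell_j\}$ of $n-1$ generic affine functions on $\mathbb{R}^{d-1}$, which is precisely the combinatorial object counted by $C(n-1,d-1)$, since the pairwise-comparison arrangement of $n-1$ objects in $\mathbb{R}^{d-1}$ is the same order arrangement obtained by dropping $\|x\|^2$ from the squared distances. Summing over $k$ gives $C(n,d)-C(n-1,d)=(n-1)\,C(n-1,d-1)$, which is the claimed recursion.

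The main obstacle I anticipate is this last step: one must justify that the hyperplanes induced on $h_{k,n}$ are in the general position needed for the count to equal $C(n-1,d-1)$ exactly, rather than something smaller. The subtlety is that these induced hyperplanes are ``weighted'' bisectors (radical axes) rather than genuine perpendicular bisectors of projected points, so I cannot simply invoke the definition of $C(\cdot,\cdot)$; instead I must verify, from the E1 general-position assumption on $\theta_1,\dots,\theta_n$, that the restricted affine functions $\ell_i|_{h_{k,n}}$ are generic enough that their difference-hyperplane arrangement realizes the maximal cell count $C(n-1,d-1)$, and that this count is an invariant of $n$ and $d$ under general position. Transferring the genericity from the ambient embedding to the induced arrangement, and confirming that $C(\cdot,\cdot)$ is well defined as a function of $n$ and $d$, is the crux of the argument.
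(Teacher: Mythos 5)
You should know that the paper itself does not prove this lemma: it is imported verbatim as Lemma 1 of \cite{jamieson2011active}, and even in the proof of the companion \lemref{pairwise faces} the key identification is again delegated to that citation. Your skeleton --- inserting the $n-1$ new bisectors $h_{k,n}$ one at a time, counting new $d$-cells as cells induced on each inserted hyperplane, and using the coincidence $h_{j,n}\cap h_{k,n}=h_{j,k}\cap h_{k,n}$ (the ``merging'' of $\theta_k$ and $\theta_n$ on $h_{k,n}$) to conclude that each insertion contributes a count $N_k$ depending only on $\mathcal{A}_{n-1}$ --- is exactly the argument behind the cited result, and those two steps are correct. So the proof stands or falls on the remaining claim $N_k=C(n-1,d-1)$.

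That claim is a genuine gap, and it is the one you flag yourself without closing. For $x\in h_{k,n}$ one has $\|x-\theta_i\|^2=\|x-P\theta_i\|^2+w_i$ with $P$ the orthogonal projection onto $h_{k,n}$ and $w_i=\mathrm{dist}(\theta_i,h_{k,n})^2$, so the trace of $h_{i,j}$ on $h_{k,n}$ is the radical axis of the weighted sites $(P\theta_i,w_i)$ and $(P\theta_j,w_j)$; it is the bisector of the projections only when $w_i=w_j$. A power-diagram arrangement of this kind is generically \emph{not} the bisector arrangement of any $n-1$ points in $\mathbb{R}^{d-1}$ (matching all $\binom{n-1}{2}$ hyperplanes forces the weights to be of the special form $w_i=(\lambda-1)\|P\theta_i\|^2+2t\cdot P\theta_i+\mathrm{const}$), so you cannot invoke the definition of $C(n-1,d-1)$; and you also cannot argue ``all generic difference arrangements have the same count, and bisector arrangements are generic among them,'' because bisector arrangements occupy a measure-zero (paraboloid) slice of the parameter space of affine functions, so genericity in the larger family is not automatic. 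The standard repair, which simultaneously settles the well-definedness of $C(\cdot,\cdot)$ that you worry about, is to run the induction over the larger class of \emph{order arrangements}: arrangements $\{\ell_i=\ell_j\}$ of affine functions on $\mathbb{R}^p$ such that for every $S\subseteq[m]$ the locus $\{\ell_i=\ell_j,\ \forall i,j\in S\}$ has codimension exactly $|S|-1$ (and is empty when $|S|-1>p$), with these constraints independent across disjoint subsets. Unlike the class of bisector arrangements, this class is closed under your merge-and-restrict operation, so the insertion argument proves the recursion for every member, and induction gives them all the same cell count; one then checks --- and this is precisely where general position of $\theta_1,\dots,\theta_n$ enters --- that both the ambient bisector arrangement and its restriction to each $h_{k,n}$ belong to this class, since every equality locus is an equidistance locus of a subset of the $\theta$'s (intersected with $h_{k,n}$), whose codimension is controlled by affine independence. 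Alternatively, one can pin down the intersection lattice (the partition lattice of $[m]$ truncated at rank $p$) and invoke Zaslavsky's theorem that the cell count depends only on that lattice. Without some such argument, your final step is an assertion rather than a proof.
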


\begin{lemma}\label{pairwise faces}
 Assume {\bf E1}. Let $F(n, d)$ denote the number of faces induced by the hyperplane arrangement of pairwise comparisons between these objects. $F(n, d)$ satisfies the recursion:
 \begin{equation*}
     F(n,d) = \binom{n}{2}\cdot C(n-1, d-1)
 \end{equation*}
\end{lemma}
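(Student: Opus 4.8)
The plan is to count $F(n,d)$ as a sum of per-hyperplane face counts and to show that every one of the $\binom{n}{2}$ bisecting hyperplanes carries exactly $C(n-1,d-1)$ faces. Fix a pair $(i,j)$ and its bisector $h_{i,j}$. By definition a face on $h_{i,j}$ is a connected component of $h_{i,j}\setminus\bigcup_{(k,l)\neq(i,j)} h_{k,l}$, so the number of faces on $h_{i,j}$ equals the number of regions into which $h_{i,j}\cong\mathbb{R}^{d-1}$ is cut by the restricted flats $h_{k,l}\cap h_{i,j}$. First I would therefore analyze this restricted arrangement inside the $(d-1)$-dimensional space $h_{i,j}$, in the same spirit as the face count in \propref{faces_teach} and \lemref{d-general}.

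The key geometric observation is a coincidence forced by the defining property of $h_{i,j}$: every point $r\in h_{i,j}$ satisfies $\|r-\theta_i\|=\|r-\theta_j\|$, so for each $k\notin\{i,j\}$ the two bisectors $h_{i,k}$ and $h_{j,k}$ cut $h_{i,j}$ in the \emph{same} $(d-2)$-flat, namely $\{r\in h_{i,j}:\|r-\theta_i\|=\|r-\theta_k\|\}$. Consequently the restricted flats are $\binom{n-2}{2}$ bisectors among the objects $\{\theta_k\}_{k\notin\{i,j\}}$ together with $n-2$ merged flats (one per $k$), for a total of $\binom{n-2}{2}+(n-2)=\binom{n-1}{2}$ distinct flats. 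I would interpret this as the pairwise-comparison arrangement of $n-1$ ``items'' living in $h_{i,j}$: the $n-2$ objects $\theta_k$ together with a single merged item standing for the tied pair $\{i,j\}$. A reference point in the interior of a face then induces a strict total order on these $n-1$ items (with $i$ and $j$ occupying one rank), and crossing any one of the restricted flats flips exactly one comparison, so distinct faces induce distinct orders and faces on $h_{i,j}$ are in bijection with the realizable rankings of the $n-1$ items.

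To identify this count with $C(n-1,d-1)=|\Sigma_{n-1,d-1}|$, I would verify that the restricted arrangement has exactly the combinatorial (intersection-lattice) structure of an E1 pairwise-comparison arrangement of $n-1$ objects in general position in $\mathbb{R}^{d-1}$. The decisive point is the triple-intersection property. For any three items the three induced bisectors share a codimension-$2$ flat of $h_{i,j}$: for a triple of genuine objects $\theta_k,\theta_l,\theta_p$ this is the restriction of the common $(d-2)$-flat where $\|\cdot-\theta_k\|=\|\cdot-\theta_l\|=\|\cdot-\theta_p\|$, and for a triple involving the merged item, say $\theta_k,\theta_l,\{i,j\}$, the three loci $\|r-\theta_k\|=\|r-\theta_l\|$, $\|r-\theta_i\|=\|r-\theta_k\|$, $\|r-\theta_i\|=\|r-\theta_l\|$ again satisfy ``any two imply the third'', so they meet in $\{r\in h_{i,j}:\|r-\theta_i\|=\|r-\theta_k\|=\|r-\theta_l\|\}$. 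Since away from these forced triple coincidences the flats inherit general position from the general position of $\theta_1,\dots,\theta_n$, the restricted arrangement is combinatorially the arrangement counted by the recursion of \lemref{pairwise regions}, whence the number of faces on $h_{i,j}$ is $C(n-1,d-1)$. Summing over all $\binom{n}{2}$ bisecting hyperplanes then yields $F(n,d)=\binom{n}{2}\,C(n-1,d-1)$.

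The step I expect to be the main obstacle is the certification in the previous paragraph: rigorously showing that the restricted arrangement on $h_{i,j}$ is a bona fide E1 pairwise-comparison arrangement of $n-1$ objects in $\mathbb{R}^{d-1}$ in general position. Collapsing $h_{i,k}$ and $h_{j,k}$ to one flat must produce no degeneracies beyond the forced triple points, and the merged-item bisectors are \emph{weighted} (power) bisectors rather than perpendicular bisectors of the projected points $\pi(\theta_k)$, so a naive re-embedding by orthogonal projection does not literally reproduce an E1 configuration. Handling this cleanly will likely require either constructing an explicit E1 re-embedding of the $n-1$ items whose bisector arrangement matches the restricted one, or arguing directly that the count $C(\cdot,\cdot)$ governed by \lemref{pairwise regions} depends only on the intersection lattice — a structure that the triple-intersection property, together with inherited general position, pins down completely.
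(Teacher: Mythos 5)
Your proposal follows essentially the same route as the paper's proof: fix one of the $\binom{n}{2}$ bisecting hyperplanes, argue that the arrangement induced on it is the pairwise-comparison arrangement of $n-1$ items (the $n-2$ remaining objects plus the merged tied pair, giving $\binom{n-1}{2}$ distinct flats of dimension $d-2$), conclude that it carries exactly $C(n-1,d-1)$ faces, and sum over hyperplanes. You in fact supply more detail than the paper, which merely asserts the $\binom{n-1}{2}$ intersection count and defers the $C(n-1,d-1)$ cell count to the discussion of Lemma 1 in \citet{jamieson2011active}; the weighted-bisector subtlety you flag at the end is likewise left unaddressed there, so your proposal is, if anything, the more careful rendering of the same argument.
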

\begin{proof}
If we consider any object say $\theta_k$, then the pairwise comparison induced hyperplane $h_{k,i}$ for a fixed $i \neq k$ is \textit{uniquely} intersected by query hyperplanes induced by pairwise comparison of other objects since they are in general position. Thus, on the $(d-1)$-dimensional hyperplane $h_{k,i}$ there are $\binom{n-1}{2}$ intersections (flats of dimension $d-2$). Following the discussion for Lemma 1, \cite{jamieson2011active} we note that the number of regions or $(d-1)$-cells induced on the bisecting hyperplane $h_{k,i}$ for a query is \tt{exactly} $C(n-1,d-1)$. Since there are $\binom{n}{2}$ hyperplanes for all the pairwise queries, thus the lemma follows.
\end{proof}
\begin{corollary}[Corollary 1, \cite{jamieson2011active}]\label{cor:nowak}
 There exist positive real numbers $k_1$ and $k_2$ such that 
 \begin{equation*}
     k_1\frac{n^{2d}}{2^d d!} < C(n,d) < k_2\frac{n^{2d}}{2^d d!}
 \end{equation*}
for $n > d+1$. If $n \le d+1$, then $C(n,d) = n!$.
\end{corollary}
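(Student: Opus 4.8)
The plan is to solve the recursion of \lemref{pairwise regions} for fixed $d$ and show that $C(n,d)$, viewed as a function of $n$, is a polynomial of degree $2d$ with leading coefficient exactly $\frac{1}{2^d d!}$; the two-sided bound then follows because a polynomial with positive leading coefficient is sandwiched between constant multiples of its leading monomial. I would proceed by induction on $d$, taking $d=1$ as the base case. In $\mathbb{R}^1$ the $\binom{n}{2}$ bisecting hyperplanes are distinct points on the line (by general position), so $C(n,1) = \binom{n}{2}+1 = \frac{1}{2}n^2 - \frac12 n + 1$, a degree-$2$ polynomial with leading coefficient $\frac12 = \frac{1}{2^1 1!}$, consistent with both the recursion and the boundary value $C(2,1)=2!$.

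For the inductive step I would first telescope the recursion $C(m,d) = C(m-1,d) + (m-1)C(m-1,d-1)$ down to the boundary. Summing from $m=d+2$ to $n$ and using $C(d+1,d) = (d+1)!$ yields
\begin{equation*}
  C(n,d) = (d+1)! + \sum_{j=d+1}^{n-1} j\,C(j,d-1).
\end{equation*}
Every index $j$ in this sum satisfies $j \ge d+1 > (d-1)+1$, so each $C(j,d-1)$ lies in the non-boundary regime and, by the inductive hypothesis, equals a fixed polynomial $p_{d-1}(j)$ of degree $2d-2$ with leading coefficient $\frac{1}{2^{d-1}(d-1)!}$. Consequently $j\,C(j,d-1)$ is a polynomial of degree $2d-1$ in $j$, and by Faulhaber's formula the partial sum is a polynomial in $n$ of degree $2d$. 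Tracking only the top coefficient, $\sum_{j=1}^{n-1} j^{2d-1}$ contributes leading term $\frac{n^{2d}}{2d}$, so the leading coefficient of $C(n,d)$ is $\frac{1}{2^{d-1}(d-1)!}\cdot\frac{1}{2d} = \frac{1}{2^d d!}$, as required; the additive constant $(d+1)!$ does not affect the degree.

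Having established that $C(n,d)$ is a positive-valued degree-$2d$ polynomial in $n$ with leading coefficient $\frac{1}{2^d d!}$ for $n > d+1$, I would conclude by observing that the ratio $C(n,d)\big/\bigl(n^{2d}/(2^d d!)\bigr)$ is positive for every such $n$ and tends to $1$ as $n\to\infty$; hence its infimum and supremum over the integers $n > d+1$ are finite and strictly positive, and choosing $k_1$ below that infimum and $k_2$ above that supremum gives the strict inequalities for all $n > d+1$. The remaining case $n \le d+1$ is exactly the boundary value $C(n,d)=n!$ supplied with the statement.

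I expect the main obstacle to be the bookkeeping in the inductive step: one must verify that the telescoping never feeds a boundary value of $C(\cdot,d-1)$ into the sum (which is precisely why the range starts at $j=d+1$), and must track the leading coefficient through the Faulhaber summation carefully enough to recover the exact constant $\frac{1}{2^d d!}$ rather than merely the order $n^{2d}$. Pinning down this constant is what justifies writing $2^d d!$ explicitly in the denominator; the weaker $\Theta(n^{2d})$ statement would follow immediately from positivity of the leading coefficient alone.
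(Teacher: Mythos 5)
The paper never proves \corref{cor:nowak}: it is imported verbatim, with its citation, as Corollary 1 of \cite{jamieson2011active}, and is consumed as a black box in \lemref{lemma: pairwise upper bound} and \lemref{lemma: pairwise lower bound}. So there is no in-paper proof to compare against; what you have written is a self-contained derivation, and it is correct. Your route --- induction on $d$ showing that, for fixed $d$, the function $C(\cdot,d)$ agrees for all integers $n \ge d+1$ with a single polynomial of degree $2d$ and leading coefficient $\frac{1}{2^d d!}$ --- is sound: the telescoped identity $C(n,d) = (d+1)! + \sum_{j=d+1}^{n-1} j\,C(j,d-1)$ is exactly the recursion of \lemref{pairwise regions} summed down to the boundary, every summand has index $j \ge d+1 > d$ and hence falls under the inductive hypothesis, and the Faulhaber step correctly produces the constant $\frac{1}{2^{d-1}(d-1)!}\cdot\frac{1}{2d} = \frac{1}{2^d d!}$. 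The endgame (positivity of the ratio $C(n,d)\big/\bigl(n^{2d}/(2^d d!)\bigr)$ together with its convergence to $1$ gives a positive infimum and finite supremum over $n > d+1$) is also fine and yields the strict inequalities.

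Two caveats you should make explicit. First, the constants $k_1,k_2$ your argument produces necessarily depend on $d$: they come from an infimum and supremum taken for that fixed $d$. This is not a defect of your proof --- no $d$-uniform lower constant can exist, since $C(d+2,d) \le (d+2)!$ while $(d+2)^{2d}/(2^d d!)$ exceeds $(d+2)!$ by a factor $e^{\Theta(d)}$ --- but it matters for how the corollary is used: the paper's proof of \lemref{lemma: pairwise upper bound} applies $k_2$ at dimension $d-1$ and $k_1$ at dimension $d$ in the same display, implicitly treating them as dimension-independent, so your reading (constants for each fixed $d$) is the only defensible one and is worth stating. Second, you consume the clause ``$C(n,d)=n!$ for $n \le d+1$'' as a boundary condition (it is the anchor $C(d+1,d)=(d+1)!$ of your telescope) rather than proving it; since \lemref{pairwise regions} states the recursion without boundary values, that is a fair division of labor, but a fully self-contained proof would also justify this geometric fact, and likewise your base case $C(n,1)=\binom{n}{2}+1$ relies on the general-position assumption of \textbf{E1} to guarantee that the $\binom{n}{2}$ midpoints on the line are pairwise distinct (you do flag this parenthetically, and it is exactly the genericity under which the recursion itself is valid).
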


The following result shows that even under this special arrangement of hyperplanes, the average complexity for teaching such a ranking is $\bigTheta{d}$.
\begin{theorem}\label{theorem: teaching ranking}
Assume $\bf{E1}$ and $r_{\sigma}$ $\sim$ $\mathcal{U}$. There exists a teaching algorithm which requests $\bigTheta{d}$ pairwise comparisons on average for ranking \tt{i.e.} $\mathbb{E}_{\mathcal{U}}[M_n] = \bigTheta{d}$ where $M_n$ denotes a random variable for the number of pairwise
comparisons requested by an algorithm. In other words, the average teaching complexity of ranking via pairwise comparisons is $\bigTheta{d}$.
\end{theorem}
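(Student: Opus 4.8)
The plan is to mirror the proof of \thmref{theorem: Main Theorem}: reduce the average teaching complexity to the average number of faces per region, and then bound that ratio using the two recursions already established for $C(n,d)$ and $F(n,d)$. First I would observe that, exactly as in the generic convex-polytope setting, teaching a ranking $r_{\sigma}$ amounts to revealing the labels of the bounding hyperplanes of the $d$-cell in $\regionpair$ that contains the reference point $r_{\sigma}$, so $|\mathcal{TS}(\Hndp, r_{\sigma})|$ equals the number of faces of that cell. Since each face (a connected component induced on a bisecting hyperplane) is adjacent to exactly two $d$-cells, summing over all regions double-counts faces, giving $\sum_{r} |\mathcal{TS}(\Hndp, r)| = 2\,F(n,d)$; averaging over the $C(n,d)$ uniformly sampled rankings then yields
\begin{equation*}
  \expctover{\cU}{M_n} = \frac{2\,F(n,d)}{C(n,d)}.
\end{equation*}

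Next I would substitute the count $F(n,d) = \binom{n}{2}\,C(n-1,d-1)$ from \lemref{pairwise faces} together with the recursion $C(n,d) = C(n-1,d) + (n-1)\,C(n-1,d-1)$ from \lemref{pairwise regions}. Writing $R \triangleq \frac{C(n-1,d)}{(n-1)\,C(n-1,d-1)}$ and dividing numerator and denominator by $(n-1)\,C(n-1,d-1)$, the ratio collapses cleanly to
\begin{equation*}
  \frac{2\,F(n,d)}{C(n,d)}
  = \frac{n(n-1)\,C(n-1,d-1)}{C(n-1,d) + (n-1)\,C(n-1,d-1)}
  = \frac{n}{R+1},
\end{equation*}
so the entire problem reduces to a two-sided estimate of $R$. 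I would then invoke the asymptotic bounds $k_1 \frac{n^{2d}}{2^d d!} < C(n,d) < k_2 \frac{n^{2d}}{2^d d!}$ from \corref{cor:nowak}: dividing the estimate for $C(n-1,d)$ by that for $C(n-1,d-1)$ gives $\frac{C(n-1,d)}{C(n-1,d-1)} = \bigTheta{(n-1)^2/d}$, whence $R = \bigTheta{(n-1)/d}$. For $n$ large relative to $d$ we have $R \gg 1$, so $\frac{n}{R+1} = \bigTheta{\frac{nd}{n-1}} = \bigTheta{d}$, delivering both the upper and lower bounds simultaneously. The small regime $n \le d+1$ is handled directly: there $C(n,d)=n!$ and $C(n-1,d-1)=(n-1)!$, so the ratio equals $n-1 = \bigO{d}$, consistent with the stated bound.

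\textbf{Main obstacle:} the delicate point is obtaining \emph{matching} upper and lower bounds on $R$. Unlike the general-position recursion of \lemref{d-general}, the recursion here carries the extra multiplicative weight $(n-1)$ in front of $C(n-1,d-1)$, so the exact binomial forms and the clean ratio estimates used for \corref{cor: bounds regions} and \lemref{ratio bound} are unavailable. Instead I must control $\frac{C(n-1,d)}{C(n-1,d-1)}$ purely through the two-sided estimates of \corref{cor:nowak}, carefully tracking the constants $k_1, k_2$ so that the conclusion is genuinely $\bigTheta{d}$ rather than a one-sided $\bigO{d}$, and verifying that the regime $n \gg d$ is the one in which $R$ dominates the constant $1$ in the denominator.
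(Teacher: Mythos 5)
Your proposal is correct and follows essentially the same route as the paper: both reduce the average teaching complexity to the faces-to-regions ratio $2F(n,d)/C(n,d)$, substitute $F(n,d)=\binom{n}{2}\,C(n-1,d-1)$ from \lemref{pairwise faces}, and obtain matching $\bigTheta{d}$ upper and lower bounds from the two-sided estimates of \corref{cor:nowak} (for $n$ large relative to $d$), with a separate elementary computation in the small regime $n\le d+1$. The only difference is cosmetic: you route the algebra through the recursion of \lemref{pairwise regions} and the intermediate ratio $R$ (mirroring the proof of \thmref{theorem: Main Theorem}), whereas the paper bounds $\binom{n}{2}C(n-1,d-1)/C(n,d)$ directly by applying \corref{cor:nowak} to numerator and denominator, which produces the same conclusion up to the explicit factor $(1-1/n)^{2d-1}$.
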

\subsection{Proof of \thmref{theorem: teaching ranking}}
We would prove the main result in two parts: (i) \lemref{lemma: pairwise lower bound} claims the upper bound on the average teaching complexity and (ii) \lemref{lemma: pairwise upper bound} claims the average teaching complexity. Thus, we show the proof of the main result by combining (i) and (ii). Similar to \secref{subsection: main theorem}, we analyze the following ratio to achieve the bounds:
\begin{align}
    \underbrace{\expctover{r_{\sigma}\sim\cU}{|\mathcal{TS}(\Hndp,r_{\sigma})|} = \frac{2\cdot\ff\parenb{\cA(\Hndp)}}{\fr\parenb{\cA(\Hndp)}} = 
    \frac{ \text{\lemref{pairwise faces}}}{\text{\lemref{pairwise regions}}} }_{\textbf{Average teaching complexity of ranking}} 
    \tag{A.7}\label{eqn:tag:a.7}
\end{align}
Key idea of the proofs is to control the rate in \ref{eqn:tag:a.7}.
Let us denote by  $M_n$ a random variable for the number of labels provided by the teacher for a uniformly random sampled ranking $r_{\sigma} \in \Sigma_{n,d}$. We say $\sigma$ $\sim \mathcal{U}$ for ease of notation. We would show that \algoref{algo: teaching ranking}, runs for at most $\mathcal{O}(d)$ in the following lemma \footnote{Note that \citet{Fukuda1991BoundingTN} established an $\bigO{d}$ average complexity for teaching convex polytopes under any hyperplane arrangement. Therefore one can apply \cite{Fukuda1991BoundingTN} to achieve the upper bound in \thmref{theorem: teaching ranking}. Here, we provide an alternative proof of the upper bound, which could be of separate interest.}.

\begin{lemma}\label{lemma: pairwise upper bound}
Assume $\bf{E1}$ and $\sigma$ $\sim$ $\mathcal{U}$. Let the random variable $M_n$ denote the number of pairwise
comparisons that are requested in the teaching \algoref{algo: teaching ranking}, then
\begin{equation*}
    \mathbb{E}_{\mathcal{U}}[M_n] \le c\cdot d
\end{equation*}
for some positive constant $c$.
\end{lemma}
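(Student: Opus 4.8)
The plan is to compute the average teaching complexity exactly as a ratio of faces to regions, and then bound that ratio using the asymptotic region count. By~\ref{eqn:tag:a.7}, teaching a ranking $r_\sigma$ amounts to labeling exactly the bisecting hyperplanes that form the faces of its enclosing cell, so $\mathbb{E}_{\mathcal{U}}[M_n] = 2\,\ff\parenb{\cA(\Hndp)}/\fr\parenb{\cA(\Hndp)} = 2F(n,d)/C(n,d)$; the factor $2$ arises because every face of the arrangement is shared by precisely two adjacent $d$-cells, so summing the face counts over all regions double counts $F(n,d)$. Substituting the closed form of \lemref{pairwise faces}, namely $F(n,d)=\binom{n}{2}C(n-1,d-1)$, yields
\begin{align*}
  \mathbb{E}_{\mathcal{U}}[M_n] = \frac{2\binom{n}{2}\,C(n-1,d-1)}{C(n,d)} = \frac{n(n-1)\,C(n-1,d-1)}{C(n,d)}.
\end{align*}
Thus it suffices to show that this ratio is $\bigO{d}$.

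For the regime $n \ge d+2$, I would apply the two-sided estimate of \corref{cor:nowak} simultaneously to numerator and denominator; both indices $n-1$ and $n$ exceed the relevant thresholds, so the estimate is valid. Using the upper bound $C(n-1,d-1) < k_2 (n-1)^{2(d-1)}/(2^{d-1}(d-1)!)$ together with the lower bound $C(n,d) > k_1 n^{2d}/(2^d d!)$, the factorials and powers of two collapse to a single factor $2d$, giving
\begin{align*}
  \mathbb{E}_{\mathcal{U}}[M_n] < \frac{2k_2}{k_1}\, d \left(\frac{n-1}{n}\right)^{2d-1} < \frac{2k_2}{k_1}\, d,
\end{align*}
since $(n-1)/n < 1$. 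This settles the large-$n$ case with constant $2k_2/k_1$.

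For the complementary regime $n \le d+1$, \corref{cor:nowak} is replaced by the exact count $C(n,d)=n!$, which also forces $C(n-1,d-1)=(n-1)!$ because $n-1\le d$. Plugging these into the ratio gives $\mathbb{E}_{\mathcal{U}}[M_n] = n(n-1)\,(n-1)!/n! = n-1 \le d$ exactly. Combining the two regimes then proves $\mathbb{E}_{\mathcal{U}}[M_n]\le c\cdot d$ with $c=\max\{1,\,2k_2/k_1\}$.

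I expect the main obstacle to be the boundary bookkeeping rather than any hard estimate: \corref{cor:nowak} holds only for $n>d+1$, so one must check that \emph{both} $C(n-1,d-1)$ and $C(n,d)$ enter the asymptotic regime exactly when $n\ge d+2$, while the range $n\le d+1$ lands cleanly in the factorial regime. These two ranges meet without a gap at $n=d+1$, and in the small range the ratio is not merely $\bigO{d}$ but exactly $n-1$, which is reassuring and moreover supplies the matching $\bigOmega{d}$ estimate needed for the companion \lemref{lemma: pairwise lower bound}.
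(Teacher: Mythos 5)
Your proposal is correct and takes essentially the same route as the paper: the expected teaching-set size is written as the face-to-cell ratio using \lemref{pairwise faces}, then bounded via \corref{cor:nowak} when $n > d+1$ and via the exact factorial counts when $n \le d+1$; the only difference is that you carry the factor of $2$ from \ref{eqn:tag:a.7} explicitly, whereas the paper silently bounds $F(n,d)/C(n,d)$, which is harmless since the factor only changes the constant $c$. One small caveat outside the statement itself: your closing claim that the small-$n$ value $n-1$ supplies the matching $\Omega(d)$ bound for \lemref{lemma: pairwise lower bound} is not right (for $n \le d+1$ the ratio $n-1$ can be far below $d$, e.g.\ $n=3$, $d$ large); that lemma is, and must be, stated only for sufficiently large $n$.
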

\begin{proof}
For teaching, the labels of enclosing query hyperplanes of the reference point $r_{\sigma}$ induced by the objects, should be specified. Since the rankings are sampled uniformly at random, each ranking is enclosed by $F(n,d)\big/C(n,d)$ hyperplanes on average. We prove the theorem in two cases using the \corref{cor:nowak} and \lemref{pairwise faces}.\\
\newline
\underline{\textbf{Case} 1:} $n > d+1$ ($n$ is sufficiently large)\\
\begin{align*}
 \frac{F(n,d)}{C(n,d)}  &= \frac{\binom{n}{2}\cdot C(n-1, d-1)}{C(n,d)} \le \binom{n}{2} \cdot\Biggparen{k_2 \frac{(n-1)^{2(d-1)}}{2^{d-1}(d-1)!}}\cdot\frac{1}{k_1 \frac{n^{2d}}{2^dd!}} = \Biggparen{1 - \frac{1}{n}}^{2d-1}\frac{k_2}{k_1}d \le c\cdot d
\end{align*}
The second inequality follows from \corref{cor:nowak}.
\newline
\underline{\textbf{Case} 2:} $n \le d+1$\\
\begin{equation*}
    \frac{F(n,d)}{C(n,d)} = \frac{\binom{n}{2}\cdot (n-1)!}{n!} = \frac{n-1}{2} \le \frac{d}{2}
\end{equation*}

Thus, in the two cases we have shown that $\frac{F(n,d)}{C(n,d)}$ = $\mathcal{O}(d)$. This proves the lemma.
\end{proof}


We would show that \algoref{algo: teaching ranking}, runs for at least $\bigOmega{d}$ in the following lemma for sufficiently large $n$.
\begin{lemma}\label{lemma: pairwise lower bound}
Assume $\bf{E1}$ and $\sigma$ $\sim$ $\mathcal{U}$. Let the random variable $M_n$ denote the number of pairwise
comparisons that are requested in the teaching \algoref{algo: teaching ranking}, then for sufficiently large $n > d$:
\begin{equation*}
    \mathbb{E}_{\mathcal{U}}[M_n] \ge c\cdot d
\end{equation*}
for some positive constant $c$.
\end{lemma}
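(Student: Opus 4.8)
The plan is to mirror the structure of the upper-bound proof in Lemma~\ref{lemma: pairwise upper bound}, but now \emph{lower}-bounding the average number of bounding faces per region. Since the reference ranking $r_\sigma$ is drawn uniformly from $\Sigma_{n,d}$, the quantity $\mathbb{E}_{\mathcal{U}}[M_n]$ is the average number of bisecting hyperplanes forming the boundary of a $d$-cell, which (up to the constant factor recorded in \eqref{eqn:tag:a.7}) is the ratio $F(n,d)/C(n,d)$ treated in the upper-bound lemma. So it suffices to show that this ratio is $\bigOmega{d}$ for sufficiently large $n$.

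First I would substitute the exact expression for $F(n,d)$ from Lemma~\ref{pairwise faces} and eliminate $C(n,d)$ using the recursion of Lemma~\ref{pairwise regions}, namely $C(n,d) = C(n-1,d) + (n-1)C(n-1,d-1)$. Dividing numerator and denominator by $C(n-1,d-1)$ gives
\begin{align*}
\frac{F(n,d)}{C(n,d)} = \frac{\binom{n}{2}\,C(n-1,d-1)}{C(n-1,d) + (n-1)C(n-1,d-1)} = \frac{\binom{n}{2}}{\dfrac{C(n-1,d)}{C(n-1,d-1)} + (n-1)}.
\end{align*}
Thus the whole problem reduces to \emph{upper}-bounding the single ratio $C(n-1,d)/C(n-1,d-1)$, since enlarging the denominator only shrinks the fraction.

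The key step is to control that ratio via the two-sided estimates of Corollary~\ref{cor:nowak}. Applying the upper bound to $C(n-1,d)$ and the lower bound to $C(n-1,d-1)$, the powers of $(n-1)$, the powers of $2$, and the factorials telescope to yield $C(n-1,d)/C(n-1,d-1) < \tfrac{k_2}{k_1}\cdot\tfrac{(n-1)^2}{2d}$. Substituting this and writing $\binom{n}{2}=\tfrac{n(n-1)}{2}$, I would factor $(n-1)$ out of the denominator to obtain
\begin{align*}
\frac{F(n,d)}{C(n,d)} > \frac{n/2}{\dfrac{k_2}{k_1}\cdot\dfrac{n-1}{2d} + 1}.
\end{align*}
For $n$ large enough that $\tfrac{k_2}{k_1}\cdot\tfrac{n-1}{2d}\ge 1$, the denominator is at most $\tfrac{k_2}{k_1}\cdot\tfrac{n}{d}$, so the ratio exceeds $\tfrac{k_1}{2k_2}\,d$. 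Taking $c=\tfrac{k_1}{2k_2}$ then gives $\mathbb{E}_{\mathcal{U}}[M_n]\ge c\cdot d$.

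I expect the main obstacle to be purely bookkeeping: checking that the constants $k_1,k_2$ from Corollary~\ref{cor:nowak} combine into a clean positive constant, and pinning down how large $n$ must be so that the crude estimate $\tfrac{k_2}{k_1}\cdot\tfrac{n-1}{2d}\ge 1$ holds together with the corollary's premise $n>d+1$. None of this is conceptually deep; all the geometry is already packaged in the cell recursion of Lemma~\ref{pairwise regions}, the face count of Lemma~\ref{pairwise faces}, and the asymptotic cell-count bounds of Corollary~\ref{cor:nowak}. Combining this lower bound with Lemma~\ref{lemma: pairwise upper bound} finally yields $\mathbb{E}_{\mathcal{U}}[M_n]=\bigTheta{d}$, completing the proof of Theorem~\ref{theorem: teaching ranking}.
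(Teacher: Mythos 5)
Your proof is correct, and it rests on the same two ingredients as the paper's own argument---the face count of \lemref{pairwise faces} and the two-sided cell-count estimates of \corref{cor:nowak}, applied with $k_1$ on the lower-dimensional count and $k_2$ on the higher-dimensional one---but your algebraic route is recognizably different. The paper substitutes the bounds of \corref{cor:nowak} directly into $F(n,d)/C(n,d) = \binom{n}{2}\,C(n-1,d-1)/C(n,d)$, which collapses to $\bigl(1-\tfrac{1}{n}\bigr)^{2d-1}\tfrac{k_1}{k_2}\,d$, and then disposes of the prefactor by noting that $\bigl(1-\tfrac{1}{n}\bigr)^{2d-1}$ is bounded below by an absolute constant for large $n$ (since $n>d$ gives $2d-1<2n$ and $\bigl(1-\tfrac{1}{n}\bigr)^{2n}\to e^{-2}$). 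You instead first expand $C(n,d)$ via the recursion of \lemref{pairwise regions} and reduce everything to upper-bounding the single ratio $C(n-1,d)/C(n-1,d-1)$; this is exactly the strategy the paper uses for the convex-polytope lower bound in \lemref{theorem: Lower main theorem}, transplanted to the ranking setting, whereas its ranking proof skips the recursion. Both routes are sound: yours trades the $\bigl(1-\tfrac{1}{n}\bigr)^{2d-1}$ bookkeeping for the extra requirement $\tfrac{k_2}{k_1}\cdot\tfrac{n-1}{2d}\ge 1$, a threshold on $n$ growing linearly in $d$ (harmless under the lemma's ``sufficiently large $n$'' hypothesis, and since $k_1\le k_2$ it already holds once $n\ge 2d+1$), and it produces the clean constant $c=k_1/(2k_2)$ in place of the paper's roughly $k_1/(e^2k_2)$.
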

\begin{proof}
Following similar steps in upper bound provided in \lemref{lemma: pairwise upper bound}, but instead using opposite side of bounds in \corref{cor:nowak}, we get:\\

For $n > d+1$ ($n$ is sufficiently large)
\begin{align*}
 \frac{F(n,d)}{C(n,d)}  &= \frac{\binom{n}{2}\cdot C(n-1, d-1)}{C(n,d)} \ge \binom{n}{2} \cdot\Biggparen{k_1 \frac{(n-1)^{2(d-1)}}{2^{d-1}(d-1)!}}\cdot\frac{1}{k_2 \frac{n^{2d}}{2^dd!}} = \Biggparen{1 - \frac{1}{n}}^{2d-1}\frac{k_1}{k_2}d \ge c\cdot d
\end{align*}
The second inequality follows from \corref{cor:nowak}. In the last inequality we note that $\parenb{1 - \frac{1}{n}}^{2d-1}$ is bounded since $ \lim_{n \to \infty} \parenb{1 - \frac{1}{n}}^{n} = \frac{1}{e}$ and is increasing for large enough $n$.

Thus, we have shown that $\frac{F(n,d)}{C(n,d)}$ = $\bigOmega{d}$. This proves the lemma.
\end{proof}

\begin{proof}[Proof of \thmref{theorem: teaching ranking}]
In \lemref{lemma: pairwise upper bound} and \lemref{lemma: pairwise lower bound}, we showed the required bounds of $\cO(d)$ and $\bigOmega{d}$, and thus $\mathbb{E}_{\mathcal{U}}[M_n] = \bigTheta{d}$, which completes the proof.
\end{proof}

 }
 }
 {}
\end{document}